\newtheorem{theorem}{Theorem}
\newtheorem{lemma}[theorem]{Lemma}
\newtheorem{corollary}[theorem]{Corollary}
\newtheorem{claim}[theorem]{Claim}
\begin{document} 
\title{Rapid Mixing Swendsen-Wang Sampler for Stochastic Partitioned Attractive Models}

\author{Sejun Park\thanks{School of Electrical Engineering, Korea Advanced Institute of Science Technology, Republic of Korea.
\texttt{\{sejun.park,cirdan,jinwoos\}@kaist.ac.kr}. This work was supported by Institute for Information \& communications Technology Promotion (IITP) grant funded by the Korea government (MSIP) (NO.R0132-17-1005), Content visual browsing technology in the online and offline environments. 
Sejun Park was supported in part by the Bloomberg Data Science Research Grant.}
\and Yunhun Jang$^*$\hspace{-0.05in}
\and
Andreas Galanis\thanks{University of Oxford,
 Wolfson Building, Parks Road, Oxford, OX1~3QD, UK.
  \texttt{andreas.galanis@cs.ox.ac.uk}. The research leading to these results has received funding from the European Research Council under
the European Union's Seventh Framework Programme (FP7/2007-2013) ERC grant agreement no. 334828. The paper
reflects only the authors' views and not the views of the ERC or the European Commission. The European Union is not liable for any use that may be made of the information contained therein.}
\and Jinwoo Shin$^*$
\and
 Daniel \v{S}tefankovi\v{c}\thanks{Department of Computer Science, University of Rochester, Rochester, NY 14627.  Research
supported by NSF grant CCF-0910415.}
 \and 
Eric Vigoda\thanks{School of Computer Science, Georgia
Institute of Technology, Atlanta, GA 30332.
 \texttt{vigoda@cc.gatech.edu}. Research supported in part by NSF grant CCF-1217458.}
}

\maketitle
\begin{abstract} 
The Gibbs sampler is a particularly popular Markov chain used
for learning and inference problems in Graphical Models (GMs). These tasks
are computationally intractable in general, and the Gibbs sampler often
suffers from slow mixing. 
In this paper, we study the Swendsen-Wang dynamics which is a more sophisticated
Markov chain designed to overcome bottlenecks that impede the Gibbs sampler.
We prove $O(\log{n})$ mixing time for attractive binary pairwise GMs (i.e., ferromagnetic Ising models)
on stochastic partitioned graphs having $n$ vertices, under some mild conditions, 
including low temperature regions where 
the Gibbs sampler provably mixes exponentially slow. Our experiments also confirm that
the Swendsen-Wang sampler significantly outperforms the Gibbs sampler when they are used for learning parameters of 
attractive GMs.
\end{abstract}

\section{Introduction}

Graphical models (GMs) express a factorization of joint multivariate probability distributions in statistics via a graph of relations
between variables. GMs have been used successfully in information theory \cite{gallager1962low}, statistical physics \cite{baxter2007exactly}, artificial
intelligence \cite{pearl2014probabilistic} and machine learning \cite{jordan1998learning}. 
For typical learning and inference problems using GMs, 
marginalizing the joint distribution, or equivalently computing the partition function (normalization factor), is
the key computational bottleneck; this sampling/counting problem is computationally
intractable in general, more formally, it is NP-hard even to approximate the partition function \cite{cooper1990computational,roth1996hardness}.
Nevertheless, Markov Chain Monte Carlo (MCMC) methods, typically using the Gibbs sampler, are widely-used 
in learning and inference applications of GM, but they often suffer from slow mixing.

To address the potential slow mixing of the Gibbs sampler, there have been extensive efforts in the literature to establish fast mixing regimes of the Gibbs sampler (also known as the Glauber dynamics).  Most of these theoretical works have studied under various perspectives the Ising model and its variants \cite{mossel2009rapid,levin2010glauber,cuff2012glauber}. 
Given a graph $G=(V,E)$ having $n$ vertices and
parameters $\beta=[\beta_{uv}:(u,v)\in E]\in\mathbb{R}^{|E|},\gamma=[\gamma_v:v\in V]\in\mathbb{R}^{n}$,
the Ising model is a joint probability distribution on all spin configurations
$\Omega=\{\sigma\,:\,\sigma=[\sigma_v]\in\{-1,1\}^{n}\}$ such that
\begin{equation}\label{eq:isingdist}
\mu(\sigma)
\propto\exp\bigg(\sum_{(u,v)\in E}\beta_{uv}\sigma_u\sigma_v
+\sum_{v\in V}\gamma_v\sigma_v\bigg).
\end{equation}
The parameter $\gamma$ corresponds to the presence of an ``external (magnetic) field'', and
when $\gamma_v=0$ for all $v\in V$, we say the model has no (or zero) external field.
If $\beta_{uv}\geq0$ for all $(u,v)\in E$ the model is called {\em ferromagnetic/attractive}, and {\em anti-ferromagnetic/repulsive}
if $\beta_{uv}\leq0$ for all $(u,v)\in E$. 
It is naturally expected that the Gibbs sampler mixes slow if interaction strengths of GM are high, i.e., $\beta$ is large which corresponds to low temperature regimes.
For example, for the ferromagnetic Ising model on the complete graph $G$ (which is commonly referred to 
as the mean-field model)
it is known that the mixing-time in the high temperature regime ($\beta<1$) is $O(n \log n)$, 
whereas the mixing-time in the low temperature regime ($\beta>1$) is exponential in $n$ \cite{levin2010glauber}.

This paper focuses on ferromagnetic Ising models (FIM), where any pairwise binary attractive GM can be expressed by FIM.  We study the Swendsen-Wang dynamics\footnote{The Swendsen-Wang dynamics is formally defined in Section \ref{sec:SW}.} which is a more sophisticated
Markov chain designed to overcome bottlenecks that impede the Gibbs sampler.
Pairwise binary attractive GMs, equivalently FIMs, have gained much attention in the GM literature
because they do not contain frustrated cycles and have several advantages to design good algorithms
for approximating the partition function \cite{jerrum1993polynomial,weller2013bethe,weller2014approximating,nobuyuki2006convergence,mooij2007sufficient,ruozzi2014making}.
Furthermore, they have been used for various machine learning applications.
For example, the non-negative Boltzmann machine (NNBM) has been used to describe multimodal non-negative data \cite{downs2000nonnegative}.
Moreover, the non-negative restricted Boltzmann machine (RBM), 
which is equivalent to
FIM on complete bipartite graphs, has been studied
in the context of unsupervised deep learning models \cite{nguyen2013learning}, where non-negativity (i.e., ferromagneticity) provides non-negative matrix factorization \cite{lee1999learning} like interpretable features, which is especially useful for analyzing medical data \cite{tran2015learning, li2015bone} and document data \cite{nguyen2013learning}.
FIM is also a popular model for studying strategic diffusion in social networks \cite{ok2014maximizing,montanari2010spread},
where in this case $\beta_{uv}$ represents a friendship or other positive relationships between two individuals $u,v$.

Motivated by the recent studies on FIM,
we prove $O(\log{n})$ mixing time of the Swendsen-Wang sampler for FIM 
on stochastic partitioned 
graphs\footnote{See Section \ref{sec:main} for the formal definition of stochastic partitioned graphs},
which include complete bipartite graphs
and social network models (e.g., stochastic block models \cite{holland1983stochastic}) as special cases.
In particular, we show that the Swendsen-Wang chain mixes fast in
low temperature regions where 
the Gibbs sampler provably mixes exponentially slow. 
Our experimental results also confirm that 
the Swendsen-Wang sampler significantly outperforms the Gibbs sampler for learning parameters of attractive GMs.
We remark that it has been recently shown that an arbitrary binary pairwise GM
can be approximated by an FIM of a certain partitioned structure.
In conjunction with this, we believe that our results  potentially extend to a certain class of non-attractive GMs as well (see Section \ref{conc}).

\vspace{0.1in}
\noindent {\bf Related work.} 
There has been considerable effort on analyzing the mixing times of the 
Swendsen-Wang and Gibbs samplers
for the ferromagnetic Ising model.  
All of the below theoretical works consider `uniform' parameters on edges, i.e., all $\beta_{uv}$'s are equal, and zero external field, i.e., $\gamma_v=0$.
There are several works showing examples where the Swendsen-Wang dynamics has
exponentially slow mixing time \cite{GJ,CF,BCFKTVV,BCT,GSVY} 
for the Potts model which is the generalization of
the Ising model to more than two spins; all of these slow mixing results are at the critical point
for the associated phase transition.  For the Ising model, 
it was very recently shown that the Swendsen-Wang dynamics is rapidly mixing on every graph and at every (positive) temperature \cite{guo2017random}; the mixing time is a large polynomial, e.g., $O(n^{10})$ for complete bipartite graphs, so this general result does not give bounds which are useful in practice.
However, the appeal for utilizing this dynamics is that
its mixing time is conjectured to be much smaller, and we prove an $O(\log n)$ bound  for stochastic partitioned graphs.
It is conjectured that the mixing time of the Swendsen-Wang dynamics is a small polynomial or $O(\log n)$, 
this is part of the appeal for utilizing this dynamics.

For the mean-field model (i.e., the complete graph) a detailed analysis of the
Swendsen-Wang dynamics was established 
by \cite{long2014power} who proved that 
the mixing time is $\Theta(1)$ for $\beta<\beta_c$, $O(n^{1/4})$ for $\beta=\beta_c$ and $O(\log n)$ for $\beta>\beta_c$ where $\beta_c$ is the inverse critical temperature.
For the two-dimensional lattice, \cite{ullrich2012rapid} established polynomial mixing time of the Swendsen-Wang dynamics
for all $\beta>0$.
On the other hand,
the mixing time of the Gibbs sampler
(also known as the Glauber dynamics or the Metropolis-Hastings algorithm) 
for the complete graph is known to be 
$\Theta(n\log n)$ for $\beta<\beta_c$, $\Theta(n^{3/2})$ for $\beta=\beta_c$ and $e^{\Omega(n)}$ for $\beta>\beta_c$ \cite{levin2010glauber}.
For the Erd\H{o}s-R\'enyi random graph $G(n,d/n)$,
the mixing time of the Gibbs chain is
$O(n^{1+\Theta(1/\log\log n)})$ for $d\tanh\beta<1$ \cite{mossel2013exact} and $e^{\Omega(n)}$ for $d\tanh\beta>1$ \cite{gerschcnfeld2007reconstruction} with high probability over the choice of the graph.

\section{Preliminaries} 

\subsection{Swendsen-Wang Sampler}
\label{sec:SW}

\begin{figure*}[t]
    \centering
        \includegraphics[width=\textwidth]{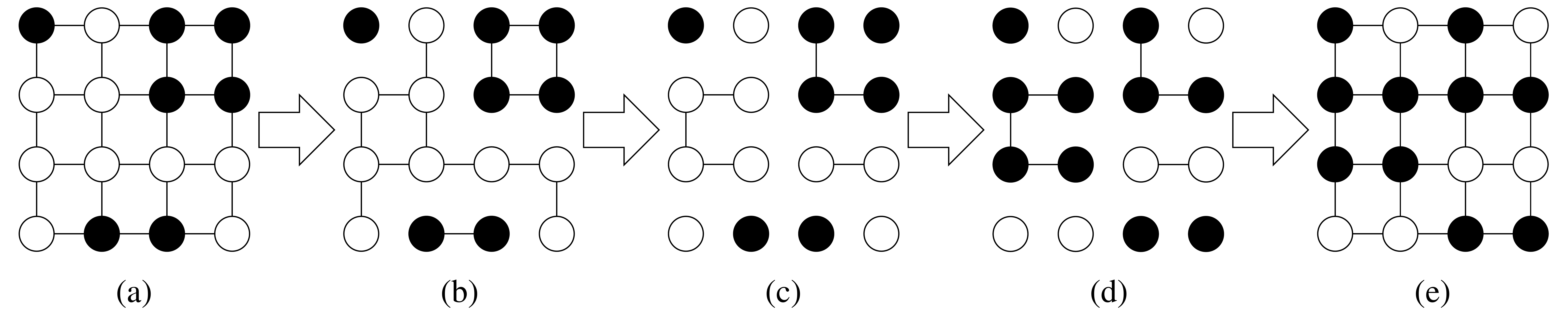}
    \caption{Illustration of a single iteration of the Swendsen-Wang dynamics. Each subfigure represents (a) an input $X_t$ (b) a subgraph induced by the set of monochromatic edges $M$ (c) a subgraph induced by the set of monochromatic edges $M^\prime$ after the step 2 (d) a configuration after the step 3 (e) an output $X_{t+1}$ where black and white imply assignments $-1,+1$ respectively.}\label{fig:sw}
\end{figure*}

The Swendsen-Wang dynamics  \cite{swendsen1987nonuniversal} 
is a Markov chain $\{X_t\in \Omega :t=0,1,2,\dots\}$
whose stationary (i.e., invariant) distribution is the distribution $\mu$ in \eqref{eq:isingdist}.
A step of the
Swendsen-Wang dynamics works at a high-level as follows:
(i) the current spin configuration $X_t$ is converted
into a configuration $M$ in the random-cluster model \cite{edwards1988generalization}
by taking the set of the monochromatic edges in the spin configuration, (ii) then we do a
percolation step on $M$ where each edge is deleted with
some probability, and finally (iii) each connected component
of the percolated subgraph chooses a random spin; 
this yields the new spin configuration $X_{t+1}$. Whereas
the traditional Gibbs sampler modifies the spin at one
vertex in a step, the Swendsen-Wang dynamics may
change the spin at every vertex in a single step.
For ferromagnetic Ising models with no external field, the transition  
from $X_t$ to $X_{t+1}$ is defined as follows:
\begin{itemize}
\item[1.] Let $M$ be the set of monochromatic edges in $X_t$, i.e.,
$M = \{(u,v)\in E: X_t(u) = X_t(v)\}$.
\item[2.] For each edge $(u,v)\in M$, delete it with probability $1-p_{uv}$,
where $p_{uv}=1-\exp(-2\beta_{uv})$.
 Let $M^\prime$ denote the set of monochromatic edges that were not deleted.
\item[3.] For each connected component $C$ of the subgraph $G^\prime=(V,M^\prime)$, 
independently, choose a spin $s\in\{-1,+1\}$ uniformly at random and
assign spin $s$ to all vertices in $C$.
Let $X_{t+1}$ denote the resulting spin configuration.
\end{itemize}
One can generalize the dynamics 
to a model having external fields by modifying step 3 as follows \cite{albeverio1992ideas}:
\begin{itemize}
\item[3.] For each connected component $C$ of the subgraph $G^\prime=(V,M^\prime)$, set
\[ s=\begin{cases} +1 & \mbox{ with probability }
\frac{\exp\left(2\sum_{v\in V(C)}\gamma_v\right)}{1+\exp\left(2\sum_{v\in V(C)}\gamma_v\right)}
\\
-1 &  \mbox{ with probability } 
\frac{1}{1+\exp\left(2\sum_{v\in V(C)}\gamma_v\right)}
\end{cases}. \]
Then, assign all vertices in $C$ the chosen spin $s$ and
let $X_{t+1}$ denote the resulting spin configuration.
\end{itemize}
Figure \ref{fig:sw} visualizes each step of the Swendsen-Wang dynamics.
%For completeness, in Appendix \ref{sec:SWpf} 
%we prove the following well-known fact that the Swendsen-Wang dynamics has $\mu$ as its stationary distribution.
%\footnote{We did not find a version of the Swendsen-Wang dynamics for the Ising model with external fields in the literature, and hence present it with the formal proof for completeness.}
One can prove that
%\begin{lemma}\label{thm:SW}
the stationary distribution of the Swendsen-Wang chain is \eqref{eq:isingdist}.
%\end{lemma}

\subsection{Mixing Time and Coupling}\label{sec:coupling}

We use the following popular notion
of `mixing time':
given an ergodic Markov chain $\{X_t\in \Omega:t=0,1,2,\dots\}$ with stationary distribution $\mu$,
we define the mixing time $T_\text{mix}$ as
$$T_\text{mix}:=\min\Big\{t\, \big|\,\sup_{X_0\in\Omega,A\subset\Omega}|\Pr(X_t\in A)-\mu(A)|\le\frac{1}{4}\Big\}.$$
A classical technique for bounding the mixing time is the `coupling' technique \cite{levin2009markov}.
Consider two copies $(X_t,Y_t)$ of the same Markov chain (i.e., $X_t, Y_t$ have the same transition probabilities) defined jointly with the property that
if $X_t=Y_t$ then $X_{t^\prime}=Y_{t^\prime}$ for all $t^\prime\ge t$.
We call such $(X_t,Y_t)$ a coupling, where
$X_t,Y_t$ might be dependent and there can be many ways to design such dependencies.
Then, one can observe that
\begin{align*}
&\sup_{X_0\in\Omega,A\subset\Omega}|\Pr(X_t\in A)-\mu(A)|\\
&\le\sup_{X_0,Y_0\in\Omega,A\subset\Omega}|\Pr(X_t\in A)-\Pr(Y_t\in A)|\\
&\le\sup_{X_0,Y_0\in\Omega}\Pr(X_t\ne Y_t),
\end{align*}
which implies that 
\begin{equation}\label{eq:mixbd}
T_\text{mix}\leq \min\Big\{t\, \big|\,\sup_{X_0,Y_0\in\Omega}\Pr(X_t\ne Y_t)\leq \frac{1}{4}\Big\}.
\end{equation}
We will design a coupling for obtaining a bound on the mixing time of the Swendsen-Wang chain.

\section{Main Results}\label{sec:main}
In this section, we state the main results of this paper that 
the Swendsen-Wang chain mixes fast for a class of stochastic partitioned graphs.

We first define the notion of stochastic partitioned graphs.
Given a positive integer $r\in \mathbb Z_+$, a vector
$[\alpha_i]\in (0,1)^r$ with $\sum_i\alpha_i=1$ and a matrix $[p_{ij}]\in [0,1]^{r\times r}$,
a stochastic partitioned graph $(V,E)=G\left(n,[\alpha_i], [p_{ij}]\right)$ on $n$ vertices and $r$ partitions (or communities) of size 
$\alpha_1n,\dots,\alpha_rn$
is a random graph model such that 
$$V = \bigcup_i V_i, \quad
|V_i|=\alpha_i n\quad\mbox{and}\quad
V_i\cap V_j=\emptyset,~\mbox{for}~i\neq j.$$ 
An edge between any pair of vertices $u\in V_i,v\in V_j$ 
belongs to the graph with probability $p_{ij}$ independently. Let $E_{ij}=\{(u,v)\in E\mid u\in V_i, v\in V_j\}$.  

For example, if $p_{ii}=0$ for all $i$ and $p_{ij}=1$ for all $i\ne j$,
then the stochastic partitioned graph is the complete $r$-partite graph.
One can also check that the stochastic block model \cite{holland1983stochastic}
is a special case of the stochastic partitioned graph.
In particular, if $r=1$, $p_{11}=p$ for some $p\in [0,1]$,
we say it is the Erd\H{o}s-R\'enyi random graph and use the notation 
$G(n,p)$ to denote it.
Similarly,  
if $r=2$, $p_{11}=p_{22}=0$ and $p_{12}=p_{21}=p$ for some $p\in [0,1]$,
we say it is the bipartite Erd\H{o}s-R\'enyi random graph and use the notation 
$G(n,m,p)=(V_L,V_R,E)$ to denote it, where $n,m$ are the sizes of the parts $V_L,V_R$.
We say a graph $(V,E)$ has size $n$ if $|V|=n$ and a bipartite graph $(V_L,V_R,E)$ has size $(n,m)$ if $|V_L|=n$ and $|V_R|=m$.

\subsection{\texorpdfstring{$O(\log n)$}{Lg} Mixing in Low Temperatures}
We first establish the following rapid mixing property
of the Swendsen-Wang chain in low temperature regimes, i.e., when $\beta_{uv}=\Omega(1)$.
These are in particular the most interesting regimes since
the Gibbs chain (provably) mixes slower as $\beta_{uv}$ grows. Moreover,
these regimes are also reasonable in practical applications. For example,
in social networks, $\beta_{uv}$ represents a positive interaction strength 
between two individuals $u,v$ and it is independent of the network size $n$.

\begin{theorem}\label{thm:main2} 
The mixing time $T_\text{mix}$ of the Swendsen-Wang chain on the graph $G\left(n,[\alpha_i], [p_{ij}]\right)$
is 
$$T_\text{mix}=O(\log n)$$
with probability $1-\exp(-\Omega(n))$ over the choice of the graph if 
\begin{itemize}
\item[$\circ$] 
$\alpha_i=\Omega(1)$ for all $i$,
\item[$\circ$] 
$\gamma_v\ge0$ for all $v\in V$ (or $\gamma_v\le0$ for all $v\in V$),
\end{itemize}
and either \ref{it:cona12} or \ref{it:conb12}   holds
\begin{enumerate}[label=(\alph*),ref=(\alph*),leftmargin=*]
\item \label{it:cona12} for all $i\in [r]$, $p_{ii}=\Omega(1)$ and $\beta_{uv}=\Omega(1)$ for $(u,v)\in E_{ii}$.
\item \label{it:conb12} $p_{ij}=\Omega(1)$ and $\beta_{uv}=\Omega(1)$ for all $(u,v)\in E_{ij}$ with $i\neq j$.
\end{enumerate}
\end{theorem}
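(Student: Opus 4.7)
The plan is to invoke the coupling inequality \eqref{eq:mixbd} by constructing, for every pair of starting configurations $X_0,Y_0\in\Omega$, a coupling $(X_t,Y_t)$ of two copies of the Swendsen-Wang chain so that $\Pr(X_T\ne Y_T)\le 1/4$ for some $T=O(\log n)$. The proof has three ingredients: high-probability structural properties of $G\sim G(n,[\alpha_i],[p_{ij}])$, a short \emph{burn-in} that drives each chain into a structured ``polarized'' set $\mathcal{G}$, and a contractive coupling that forces coalescence of two chains already in $\mathcal{G}$.

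\textbf{Typical graph.} A Chernoff and union bound argument shows that with probability $1-\exp(-\Omega(n))$ over $G$, every $|E_{ij}|$ concentrates near its expectation, vertex degrees are uniform up to constants, and (crucially) for every vertex subset $S$ of size $\Omega(n)$ living within a single partition (under \ref{it:cona12}) or spanning across partitions (under \ref{it:conb12}), the induced subgraph $G[S]$ has a giant connected component of size $(1-o(1))|S|$ after independent edge retention with probability $p_{uv}=1-e^{-2\beta_{uv}}=\Omega(1)$. These are standard facts about percolation on dense random graphs.

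\textbf{Burn-in.} Starting from any $X_0$, I would show that after $O(\log n)$ SW steps the chain lies in a canonical set $\mathcal G$ where, in each relevant substructure, one spin has majority with margin $\Omega(n)$ and its monochromatic subgraph contains a giant connected component after percolation. The mechanism: one SW step resamples the spin on each percolation cluster, so the substructure's magnetization after the step is essentially the signed sum of cluster sizes, dominated by the giant ($\Omega(n)$) plus $O(\sqrt n)$ fluctuation from the small ones. From a balanced start, random spins given to the two sign-giants polarize the magnetization with probability $1/2$ in a single step; once the magnetization is $\Omega(n)$, iterating keeps the chain in $\mathcal{G}$ with high probability and a standard potential-function argument shows $\mathcal{G}$ is reached in $O(\log n)$ steps (in fact $O(1)$ suffices under \ref{it:conb12}).

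\textbf{Coupling and main obstacle.} From two configurations in $\mathcal G$, I would couple the chains by: coupling the percolation identically on monochromatic edges common to $X_t$ and $Y_t$; coupling step 3 so that corresponding sign-giants across the two chains receive the same spin (their biases differ only through $\sum_{v\in V(C)}\gamma_v$, which differ by $O(\sqrt n)$ between the paired giants, so the biased coins can be coupled identically with probability $1-o(1)$ when the field is nonzero, and with a joint fair coin otherwise); and sampling fresh independent spins on small components. This leaves disagreements only on an $O(\sqrt n)$ remainder, and iterating contracts the Hamming distance by a factor $<1$ per step, so the chains agree after $O(\log n)$ steps. The main technical difficulty is coupling the percolation when $X_t$ and $Y_t$ have genuinely different monochromatic edge sets: one must couple on their intersection and bound the remainder using the expansion lemma, and do so without losing the concentration needed for the $O(\sqrt n)$ bound on the small-component mass; carrying out this conditioning, together with the $\gamma$-sum comparison for the biased coins in step 3, is the core technical step.
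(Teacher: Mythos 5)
Your typical-graph and burn-in ingredients are broadly in the spirit of the paper's argument: the paper's Lemma~\ref{lem:connected2} is a cut/expansion statement playing the role of your percolation-giant property, and the potential-function step you allude to is made precise there as the deficit recursion $E[N_{i,t+1}\mid N_{i,t}]\le \frac{1}{2}(N_{i,t}+2M^2)$, where $N_{i,t}$ is the number of vertices of $V_i$ outside the majority spin class; this uses the same-sign condition on $[\gamma_v]$ exactly once, to argue that every component matches the giant's spin with probability at least $1/2$. The genuine gap is in your coalescence phase. Your target set $\mathcal{G}$ (majority margin $\Omega(n)$) is too weak: two chains in $\mathcal{G}$ can still disagree on $\Theta(n)$ vertices, their monochromatic edge sets then differ on $\Theta(n^2)$ edges of the dense graph, and after your coupling all vertices lying in the majority of one chain but the minority of the other remain potential disagreements. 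The claimed ``$O(\sqrt n)$ remainder'' therefore does not follow, the asserted per-step Hamming contraction factor $<1$ is never established (no contractive coupling of Swendsen-Wang across genuinely different configurations is known in this generality), and you yourself flag the percolation coupling on differing monochromatic edge sets as unresolved -- but that is precisely the step the whole proof hinges on.

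The paper sidesteps this entirely by strengthening the burn-in target from ``polarized'' to \emph{fully monochromatic}. Using the cut lemma, in each step the majority class of each block loses at most $M^2=O(1)$ vertices to percolation with probability $1-e^{-\Omega(n)}$, so the deficit halves in expectation; after $T=O(\log n)$ steps $\sum_i N_{i,T}=O(1)$ with probability at least $1/2$, and one further step makes all spins equal with probability $\Omega(1)$. The two copies are run \emph{independently} up to this point; once both are monochromatic (possibly with different common spins), their monochromatic edge sets are both all of $E$, so the percolation and the component spin assignments can be coupled identically and the chains coincide in a single step, and repeating over epochs of length $O(\log n)$ gives $\Pr(X_{T'}\ne Y_{T'})\le 1/4$. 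To repair your proof you should either push the burn-in all the way to the monochromatic state (after which your delicate intersection-coupling and the $\gamma$-sum comparison become unnecessary), or actually supply the missing contraction estimate, which is the hard part and is not sketched.
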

The proof of Theorem \ref{thm:main2} is presented in Section \ref{sec:pfthm:main2},
where we will show the existence of a good coupling of the Swendsen-Wang chain.
Theorem \ref{thm:main2} implies that the Swendsen-Wang chain mixes fast
as long as the positive parameters $[p_{ij}]$ and $[\beta_{uv}]$ are not `too small' (i.e., $p_{ij}=\Omega(1)$ and $\beta_{uv}=\Omega(1)$)
and all external fields $[\gamma_v]$ have the same sign (the case where all external fields $[\gamma_v]$ are negative is symmetric).
We believe that the restriction on positive external fields
is inevitable since it is known that approximating the partition function
of ferromagnetic Ising model under mixed external fields is known to be \#P-hard \cite{goldberg2007complexity}.
Despite the worst-case theoretical barrier,
the Swendsen-Wang chain still works well under mixed external fields in our experiments
(see Section \ref{sec:exp}).

\subsection{\texorpdfstring{$O(\log n)$}{Lg}  Mixing in High Temperatures}
The restriction on the parameters $[\beta_{uv}]$ in Theorem \ref{thm:main2}
is merely for technical reasons in our proof techniques, and
we believe that it is not necessary. This is because it is natural to expect that a Markov chain
mixes faster for higher temperatures.
To support the conjecture, in the following theorem, we prove 
that the Swendsen-Wang chain mixes fast even for small 
parameters $[\beta_{uv}]$ on complete bipartite graphs,  
where 
its proof is much harder than that of Theorem \ref{thm:main2}.
\begin{theorem}\label{thm:main} 
Given any constant $k>0$, the mixing time $T_\text{mix}$ of the Swendsen-Wang chain on
the complete bipartite graph $(V_L,V_R,E)$ of size $(n,kn)$   
is
$$T_\text{mix}=O(\log n)$$
if
$\beta_{uv}=-\frac{1}{2}\log\left(1-\frac{B}{n\sqrt{k}}\right)$ for all $(u,v)\in E$ for some non-negative constant $B\ne 2$ and
$\gamma_v=0$ for all $v\in V$.
\end{theorem}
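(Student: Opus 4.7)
The plan is to design a one-step coupling of two copies of the Swendsen--Wang chain on $K_{n,kn}$ and to show that a suitable potential contracts by a constant factor per step; the coupling bound \eqref{eq:mixbd} then yields $T_\text{mix}=O(\log n)$. A first observation is that the given parameterization collapses the bond retention probability to $p=1-\exp(-2\beta_{uv})=B/(n\sqrt k)$, so that given a configuration $X_t$ with $L_+^X$ pluses on the left and $R_+^X$ pluses on the right, Steps 1--2 of the SW dynamics produce two independent bipartite Erd\H{o}s--R\'enyi graphs $G(L_+^X,R_+^X,p)$ and $G(L_-^X,R_-^X,p)$. Writing $(x,y)\in[-1,1]^2$ for the signed magnetizations on $V_L,V_R$, the branching-process heuristic gives effective two-step mean degree $L_\pm R_\pm\,p^2=B^2(1\pm x)(1\pm y)/4$; percolation on the balanced subgraphs is therefore critical exactly at $B=2$, separating a subcritical regime (all components of size $O(\log n)$ w.h.p.) from a supercritical one (a unique giant of size $\Theta(n)$ per side).

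I would couple two chains $X_t,Y_t$ as follows: for every edge monochromatic in both configurations, use the same Bernoulli$(p)$ retention variable, and couple the remaining edges independently; for the resulting connected components, greedily pair them across chains (e.g.\ by largest vertex overlap) and use the same $\pm 1$ coin for paired components. Define the potential
\begin{equation*}
\Phi_t=|L_+^X(t)-L_+^Y(t)|+c\,|R_+^X(t)-R_+^Y(t)|,
\end{equation*}
for a tuned constant $c=c(k)>0$. The core aim is a one-step drift inequality $\mathbb{E}[\Phi_{t+1}\mid X_t,Y_t]\leq\alpha\,\Phi_t$ with $\alpha=\alpha(B,k)<1$ strictly, after which iterating for $O(\log n)$ steps drives $\mathbb{E}[\Phi_t]$ below a constant and coupling with the Hamming distance concludes $T_\text{mix}=O(\log n)$ via \eqref{eq:mixbd}.

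The drift bound is then established regime by regime. For $B<2$ all components in both chains are logarithmic, so the $\pm 1$ coin flips in Step~3 sum $\Theta(n)$ independent mean-zero contributions; the new magnetizations concentrate around $0$ with fluctuations of order $n^{-1/2}$, and a first-moment computation comparing the ``component decompositions'' of $X_t$ and $Y_t$ yields a contraction factor that tracks $B^2/4<1$. For $B>2$ each monochromatic subgraph has a unique giant; the coupled spin coin for the paired giants aligns a $1-o(1)$ fraction of vertices on each side, reducing the residual discrepancy to the small (non-giant) components, where the subcritical argument again applies. The exclusion $B\neq 2$ is precisely what keeps us uniformly bounded away from percolation criticality on both subgraphs, which is essential for the concentration of component sizes and for the strict inequality $\alpha<1$.

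The main obstacle will be handling ``mixed'' configurations in which the $(L_+,R_+)$ and $(L_-,R_-)$ subgraphs lie on opposite sides of criticality (for instance $(1+x)(1+y)<4/B^2<(1-x)(1-y)$), since the coupling must then pair a small-component side against a giant-component side. My plan is first to show that, within a burn-in of $O(\log n)$ steps, the magnetizations enter a ``balanced'' region where both subgraphs are in the same phase; the burn-in follows from the one-dimensional expected-magnetization recursion under zero external field, which is itself contracting. Sharp concentration for bipartite-Erd\H{o}s--R\'enyi giant-component sizes (via a Nachmias--Peres style martingale argument) is then needed to bound the residual coupling error, and the weight $c$ in $\Phi_t$ must be tuned so the $V_L$- and $V_R$-contributions balance against the $k$-dependent asymmetry of the graph.
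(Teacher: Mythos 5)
Your overall architecture (percolation on two bipartite Erd\H{o}s--R\'enyi graphs, sub/supercritical dichotomy at $B=2$, burn-in plus coupling, conclude via \eqref{eq:mixbd}) matches the paper's, but the central step you rely on --- a uniform one-step contraction $\mathbb{E}[\Phi_{t+1}\mid X_t,Y_t]\le\alpha\Phi_t$ with $\alpha<1$ for the plus-count potential --- is asserted rather than proved, and it is exactly where the difficulty lies. For $B>2$ and a near-balanced configuration (phase close to $(1/2,1/2)$), \emph{both} spin classes are supercritical, so the percolation step produces two giants in the same chain; with probability $1/2$ they receive opposite spins and the phase essentially does not move, while with probability $1/2$ they merge into one spin class and the phase jumps by $\Theta(1)$. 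No greedy component pairing removes this $\Theta(n)$-scale randomness, and the paper correspondingly proves only ``with probability $\Theta(1)$'' statements (Lemmas \ref{lem:p1close1}, \ref{lem:p1close2}) rather than an expected contraction, patching the constant success probability together over $O(\log n)$-length time intervals. Your proposed fix --- that ``the one-dimensional expected-magnetization recursion under zero external field is itself contracting'' --- is not available here: the relevant recursion is genuinely two-dimensional (this is the main technical point of the paper, cf.\ the map $F$ in \eqref{eq:simplifiedSW} and Lemmas \ref{lem:attractive}--\ref{lem:convergence}), and for $B>2$ it is bistable (two symmetric modes), so the signed magnetization is not contracting toward a unique value from balanced states.

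The second gap is at the end of the argument: driving $\Phi_t$ below a constant does not give coalescence. Each SW step injects fluctuations of order $\sqrt n$ into the magnetizations (independent $\pm1$ coins on $\Theta(n)$ small components), so an expectation-contraction argument stalls at the noise floor and never makes the phases \emph{exactly} equal; and even exactly equal phases are not equal configurations. The paper needs two further, specific couplings that your proposal does not supply: once both chains are within $O(n^{-1/2})$ of the fixed point in phase, a coupling that reserves $\Theta(n)$ isolated vertices and uses a binomial local-CLT estimate ($\Pr(W=w)=\Omega(n^{-1/2})$ near the mean) to make the phases coincide with probability $\Theta(1)$ in one step (Lemma \ref{lem:p1coupling1}); and then, conditioned on equal phases, an $O(\log n)$-step coupling making the full configurations identical (Lemma \ref{lem:p1coupling2}, imported from \cite{galanis2015swendsen}). ``Coupling with the Hamming distance concludes'' does not substitute for either of these steps, so as written the proposal does not yield $T_\text{mix}=O(\log n)$.
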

Note that in the above theorem we consider the scenario $\beta_{uv}=o(1)$, i.e.,
$$\beta_{uv}=-\frac{1}{2}\log\left(1-\frac{B}{n\sqrt{k}}\right)\approx \frac{B}{2n\sqrt{k}}.$$
The proof of Theorem \ref{thm:main} is presented in
Section \ref{sec:pfthm:main}, where we will also 
show the existence of a good coupling of the Swendsen-Wang chain using a similar strategy to that in \cite{galanis2015swendsen}.
The authors of \cite{galanis2015swendsen} establish the rapid mixing property of the Swendsen-Wang chain for the complete graph
by analyzing a one-dimensional function, the so-called simplified Swendsen-Wang (see Appendix \ref{sec:simplesw}), and
utilizing known properties of Erd\H{o}s-R\'enyi random graphs.
In the case of the complete bipartite graphs,
the simplified Swendsen-Wang becomes a two-dimensional function, which makes harder to analyze.
Furthermore, the proof of Theorem \ref{thm:main} requires properties of 
the bipartite Erd\H{o}s-R\'enyi random graph $G(n,m,p)$ which
are less studied compared to the popular `non-bipartite' Erd\H{o}s-R\'enyi random graph $G(n,p)$.
In this paper, we also establish necessary properties of $G(n,m,p)$ for the proof of Theorem \ref{thm:main}. 
We believe that 
the conclusion of Theorem \ref{thm:main} holds for 
general stochastic partitioned graphs. However,
in this case, there exist technical challenges handling more randomness in graphs, and
we do not explore further in this paper.

\section{Proofs of Theorems}
\subsection{Notation}
Before we start the proof of Theorems \ref{thm:main2} and \ref{thm:main}, we first introduce some notation about configurations of the Ising model on a stochastic partitioned graph.
Given a spin configuration $\sigma$, denote by $V_-(\sigma),V_+(\sigma)$ the sets of vertices with spin $-1,+1$, respectively.
In particular, given the Ising model on a bipartite graph $(V_L, V_R,E)$ with partitions of  vertices $V_L, V_R$, edge set $E\subset\{(u,v):u\in V_L, v\in V_R\}$ and a spin configuration $\sigma\in\{-1,1\}^{|V_L\cup V_R|}$,
we say the configuration $\sigma$ has the `phase' $\alpha(\sigma)=(\alpha_L,\alpha_R)$ if the larger spin
 class of $\sigma$, say $s\in\{-,+\}$ with $V_s(\sigma)\ge(|V_L|+|V_R|)/2$,
satisfies
$$(\alpha_L,\alpha_R)=\left(\frac{V_s(\sigma)\cap V_L}{V_L},\frac{V_s(\sigma)\cap V_R}{V_R}\right).$$
One can define the induced probability on the phase $(\alpha_L,\alpha_R)$ under the Ising model as
$$\Pr(\alpha_L,\alpha_R)=\sum_{\sigma\,:\,\alpha(\sigma)=(\alpha_L,\alpha_R)}\mu(\sigma).$$

\subsection{Proof of Theorem \ref{thm:main2}}\label{sec:pfthm:main2}
In this section, we present the proof of Theorem \ref{thm:main2}. The main idea of the proof is that for every configuration $\sigma$, there is a big  connected component of roughly $n/2$ vertices which have the same spin. Crucially, the percolation step of the Swendsen-Wang chain is extremely unlikely to remove more than $O(1)$ vertices from it, since almost every cut in this component has $\Omega(n)$ edges (cf. Lemma~\ref{lem:connected2}). At the same time, at least half of the vertices of the remaining graph get the same spin as the  big component in expectation (using that $[\gamma_v]$ have the same sign). Combining these two facts, we will conclude that in $O(\log n)$ iterations of the Swendsen-Wang chain, all spins are the same with probability $\Theta(1)$ and, then, we will be able to bound the mixing time via the coupling technique.

We will focus on proving  Theorem \ref{thm:main2} when condition~\ref{it:cona12} holds (the proof under the condition~\ref{it:conb12} is almost identical). In particular, we have that the $[\gamma_v]$ have all the same sign and that there exist constants $p,\alpha,\beta>0$ such that for all $i\in [r]$, it holds that $\alpha_i\geq \alpha$, $p_{ii}\geq p$ and $\beta_{uv}\geq \beta$ for $(u,v)\in E_{ii}$. 

 We will use the following lemma for the Erd\H{o}s-R\`enyi random graph, whose proof is given in Appendix~\ref{pflem:connected2}. For a graph $G=(V,E)$ and a subset of vertices $S\subseteq V$, we denote by $\mathrm{cut}_G(S)$ the number of edges which have exactly one endpoint in $S$, and by $G[S]$ the induced subgraph of $G$ on the vertex set $S$.
\begin{lemma}\label{lem:connected2}
Let $p\in (0,1]$ be an arbitrary constant. Then, for every constant $M\geq 100/p$,  the following holds with probability $1-e^{-\Omega(n)}$ over the choice of the graph $G\sim G(n,p)$. 

Let $U$ be an arbitrary subset of vertices of $G$ with $|U|\geq n/10$. Then, for every $S\subseteq U$ such that $|S|,|U\backslash S|\geq M^2$, it holds that $\mathrm{cut}_{G[U]}(S)\geq M n$.
\end{lemma}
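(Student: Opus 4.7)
The plan is a textbook first-moment plus Chernoff plus union-bound argument, exploiting the gap between $\mathbb{E}[\mathrm{cut}_{G[U]}(S)]$ and the target $Mn$ that the hypotheses of the lemma create. The key quantitative observation is that the three conditions $|U|\geq n/10$, $\min(|S|,|U\setminus S|)\geq M^2$, and $M\geq 100/p$ combine to force the expected cut to be a constant-factor multiple of $Mn$, so the Chernoff tail has enough room to beat the $3^n$ entropy of the union bound.

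First I would fix a pair $(U,S)$ with $|U|=u\geq n/10$ and $|S|=s$, and, without loss of generality (by swapping $S$ with $U\setminus S$), assume $s\leq u/2$, so that $|U\setminus S|\geq u/2\geq n/20$. The $s(u-s)$ potential cut edges in $G[U]$ correspond to distinct vertex pairs and are therefore independent Bernoulli$(p)$ variables, which makes $\mathrm{cut}_{G[U]}(S)$ a binomial with mean
$$p\cdot s\cdot (u-s)\;\geq\; \frac{pM^2 n}{20}\;\geq\; 5Mn,$$
where the last inequality uses $M\geq 100/p$. Second, I would apply the standard multiplicative Chernoff lower tail $\Pr[X\leq(1-\delta)\mu]\leq\exp(-\delta^2\mu/2)$ with deviation $\delta=4/5$, yielding $\Pr[\mathrm{cut}_{G[U]}(S)<Mn]\leq\exp(-cMn)$ for an absolute constant $c>1$ (a quick computation gives $c=8/5$). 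Third, I would union bound over all admissible pairs: identifying each $(U,S)$ with $S\subseteq U\subseteq V$ with the ordered partition $(S,\,U\setminus S,\,V\setminus U)$ of $V$ into three (possibly empty) blocks shows that there are at most $3^n$ such pairs, so the total failure probability is at most $3^n\exp(-cMn)=\exp(-(cM-\ln 3)n)$. Since $M\geq 100/p$ makes $cM\geq 160/p$ comfortably larger than $\ln 3$, this is $\exp(-\Omega(n))$ as required.

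The only place that needs care is the constant calibration in the first step: it is the quadratic hypothesis $|S|\cdot|U\setminus S|\geq M^2\cdot n/20$ (rather than a merely linear $Mn$ lower bound) that buys a constant-factor Chernoff slack, and without this slack the union bound over $3^n$ pairs would not close. Once the calibration is set, everything else is mechanical: no concentration beyond Chernoff on independent Bernoullis is needed, because $\mathrm{cut}_{G[U]}(S)$ is itself a binomial random variable, and no properties of $G(n,p)$ beyond the joint independence of edges are invoked.
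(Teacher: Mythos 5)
Your proposal is correct and follows essentially the same route as the paper's proof: bound the expected cut below by a constant multiple of $Mn$ using $|S|,|U\setminus S|\geq M^2$, $|U|\geq n/10$ and $M\geq 100/p$, apply the multiplicative Chernoff lower tail, and union bound over the exponentially many choices of $(U,S)$ (you count $3^n$ ordered partitions, the paper uses the cruder $2^n\cdot 2^n$ bound), with the constants calibrated so the Chernoff exponent dominates the entropy term. The WLOG swap of $S$ and $U\setminus S$ and the slightly different constants are immaterial differences.
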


Let $G\sim G(n,[\alpha_i],[p_{ij}])$. Note that for $i=1,\hdots,r$ the induced subgraph $G[V_i]$ is distributed as $G(n\alpha_i,p_{ii})$, so we may apply Lemma~\ref{lem:connected2} to each $i\in [r]$ and conclude that, for $M:=\max\{100/(\alpha p), 2/\beta\}$, the following holds with probability $1-e^{-\Omega(n)}$ over the choice of the graph $G$.
\begin{equation}\label{eq:cuts}
\begin{gathered}
\mbox{$\forall i\in[r]$, $\forall U_i'\subseteq V_i$ with $|U_i'|\geq |V_i|/10$,}\\
\mbox{$\forall S\subseteq U_i'$ with $|S|, |U_i'\backslash S|\geq M^2$,}\\
\mbox{$\mathrm{cut}_{G[U_i']}(S_i)\geq M n$.}
\end{gathered}
\end{equation}
To prove the theorem, it thus suffices to show that the Swendsen-Wang chain $\{X_t:t=0,1,\dots\}$ mixes in $O(\log n)$ steps for a graph $G$ satisfying \eqref{eq:cuts}. For the rest of this section, the only assumption on the graph $G$ is \eqref{eq:cuts} and, hence, all the events and associated probabilities are with respect to the randomness of the Swendsen-Wang chain when run on the graph $G$.

At time $t$, define the spin 
$s_{i,t}$ for $i\in [r]$ as
\[s_{i,t}=\arg\max_{s\in \{-,+\}}|V_{s}(X_t)\cap V_i|,\] 
i.e., $s_{i,t}$ is the most common spin among vertices in $V_i$ at time $t$. For convenience, let $U_{i,t}:=V_{s_{i,t}}(X_{t})$ be the vertices in $V_i$ which have the spin $s_{i,t}$, so that $|U_{i,t}|\geq |V_i|/2$.   The key idea is that the  cut-property \eqref{eq:cuts} of the graph $G$ ensures that, at each step of the Swendsen-Wang chain, all but $O(1)$ vertices in $U_{i,t}$ continue to belong to $U_{i,t+1}$. 

Formally, let $M_{i,t}=\{(u,v)\in E\mid u,v\in U_{i,t}\}$ be the set of edges between vertices in $U_{i,t}$ and denote by $H=(U_{i,t},M_{i,t})$ the induced subgraph of $G$ on the set $U_{i,t}$. Let $M_{i,t}'\subseteq M_{i,t}$ be the random subset of edges which were not deleted in the percolation step of the Swendsen-Wang dynamics at time $t+1$ and consider the connected components $C^{(1)},\hdots,C^{(d)}$ of the subgraph $H'=(U_{i,t},M_{i,t}')$. For a component $C$, denote by $|C|$ the cardinality of the vertex set of the component, and by $C^*_{i,t}$ be the component with the largest size among $C^{(1)},\hdots,C^{(d)}$. We claim that for  all $t=0,1,\hdots$ and all $i\in [r]$, it holds that 
\begin{equation}\label{eq:Cstarbound}
\Pr(|C^*_{i,t}|\geq |U_{i,t}|-M^2)\geq 1-e^{-\Omega(n)}.
\end{equation}
To prove \eqref{eq:Cstarbound}, let $\mathcal{E}_t$ be the event that there exists some set $S\subseteq U_{i,t}$ with $|S|, |U_{i,t}\backslash S|\geq M^2$ such that all the edges in $\mathrm{cut}_H(S)$ were deleted in the percolation step of the Swendsen-Wang dynamics at time $t+1$. We claim that 
\[\Pr(|C^*_{i,t}|< |U_{i,t}|-M^2)\leq \Pr(\mathcal{E}_t).\]
Indeed, suppose that $|C^*_{i,t}|< |U_{i,t}|-M^2$, we will show that the event $\mathcal{E}_t$ occurs as well.  Let $S$ be the vertex set of the component $C^*_{i,t}$. Then, since $C^*_{i,t}$ is a connected component in  the percolated subgraph $H'$, we have that all the edges in $\mathrm{cut}_H(S)$ were deleted during the percolation step of the Swendsen-Wang chain at time $t+1$. If $|S|\geq M^2$, then $S$ shows that the event $\mathcal{E}_t$ occurs. Otherwise, if $|S|<M^2$, because $C^*_{i,t}$ was the largest component in $H'$, we have that there are at least $U_{i,t}/M^2\geq 2M^2$ components in $H'$ and hence the set $\widetilde{S}=C^{(1)}\cup \cdots\cup C^{( \left\lceil M^2\right\rceil)}$ satisfies $M^4\geq |\widetilde{S}|\geq M^2$ and all the edges in $\mathrm{cut}_H(\widetilde{S})$ were deleted during the percolation step of the Swendsen-Wang chain at time $t+1$. It remains to note that the probability of the event $\mathcal{E}_t$ is bounded  by
\[\sum_{\substack{S\subseteq U_{i,t};\\ |S|, |U_{i,t}\backslash S|\geq M^2}}\prod_{(u,v)\in \mathrm{cut}_H(S)}\exp(-\beta_{uv})\leq 2^ne^{-\beta M n}\leq \frac{1}{2^{n}},\] 
where we used \eqref{eq:cuts} for $U_i'=U_{i,t}$ and the bound $\beta_{uv}\geq \beta$ for all $(u,v)\in M_{i,t}$. This completes the proof of \eqref{eq:Cstarbound}.

Since all the $[\gamma_v]$ have the same sign, the probability that a vertex $v$ takes the color of $|C^*_{i,t}|$ is at least $\geq 1/2$ and hence 
\[E\big[|U_{i,t+1}|\,\big|\, |C^*_{i,t}|\big]\geq \big(|V_i|+|C^*_{i,t}|\big)/2.\] 
Now take expectations conditioned on $U_{i,t}$. By \eqref{eq:Cstarbound}, we have that with probability $1-e^{-\Omega(n)}$ it holds that $|C^*_{i,t}|\geq  |U_{i,t}|-M^2\geq |V_i|/2-M^2$ and hence we obtain that
\begin{align*}
E\big[|U_{i,t+1}|\,\big|\, |U_{i,t}|\big]\geq \big(|V_i|+|U_{i,t}|-M^2\big)/2+o(1).
\end{align*}
Thus, letting
\[N_{i,t}:=|V_i|-|U_{i,t}|\]
we obtain that  for all $t=0,1,\hdots$ and every $i\in [r]$, it holds that
\begin{equation*}
E[N_{i,t+1}\mid N_{i,t}]\leq \frac{1}{2}(N_{i,t}+2M^2).
\end{equation*}
It follows that for $T=\left\lceil 2\log n\right\rceil$, it holds that $E[N_{i,T}]\leq 4M^2$ for all $i\in [r]$, and hence by linearity of expectation we have that $E[\mbox{$\sum_{i\in r}$} N_{i,T}]\leq 4M^2r$. Thus, by Markov's inequality, we  obtain that with probability at least $1/2$, for the state $X_T$ it holds that 
\[\mbox{$\sum_{i\in r}$} N_{i,T}\leq 8M^2 r,\]
i.e., with probability $\Omega(1)$, at time $T$ all but $8M^2r$ vertices have the same spin. In the next step, with probability $\geq (1/2)^{8M^2r+M^2}=\Omega(1)$, all these vertices plus the at most $M^2$ new components that get created (cf. \eqref{eq:Cstarbound}) get  the same spin as the component $C^*_{i,T+1}$, i.e., with probability $\Omega(1)$, at time $T+1$, all vertices have the same spin in $X_{T+1}$. 

Now consider two independent copies $\{X_t:t=0,1,\dots\}$ and $\{Y_t:t=0,1,\dots\}$ of the Swendsen-Wang chain. With probability $\Omega(1)$, we have that the spins in  $X_{T+1}$ and the spins in $Y_{T+1}$ are same (though the common spin value might be different in the two copies), and hence, conditioned on this occuring, we can couple them so that in the next step it holds that $X_{T+2}=Y_{T+2}$. Thus, by considering time intervals of length $T+2$, we obtain that there exists a constant $c>0$ such that for $T'=\left\lceil  c T\right\rceil=O(\log n)$, it holds that 
$\Pr(X_{T'}\ne Y_{T'})\le 1/4$.
This completes the proof of Theorem \ref{thm:main2} under condition~\ref{it:cona12}.

To prove Theorem \ref{thm:main2} under condition~\ref{it:conb12}, one only needs to establish the analogue of Lemma~\ref{lem:connected2} for the bipartite Erd\H{o}s-R\`enyi random graph; the rest of the argument is then completely analogous to the argument used for condition~\ref{it:cona12}. The following lemma  whose proof is given in Appendix~\ref{pflem:connected4} establishes the required cut properties, thus completing the proof of Theorem \ref{thm:main2}.
\begin{lemma}\label{lem:connected4}
Let $p\in (0,1)$ and $k\in(0,1]$ be arbitrary constants. Then, for every constant $M\geq 100/(kp)$,  the following holds with probability $1-e^{-\Omega(n)}$ over the choice of the graph $G=(V_L,V_R,E)\sim G(n,kn,p)$.  

Let $U_L\subseteq V_L, U_R\subseteq V_R$ be arbitrary subsets of vertices of $G$ with $|U_L|\geq n/10, |U_R|\geq k n/10$. Then, for every $S_L\subseteq U_L$, $S_R\subseteq U_R$ such that $|S_L|,|U_L\backslash S_L|\geq M^2$ and $|S_R|,|U_R\backslash S_R|\geq M^2$ , it holds that $\mathrm{cut}_{G[U_L\cup U_R]}(S_L\cup S_R)\geq M n$.
\end{lemma}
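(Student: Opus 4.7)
The plan is to follow the same strategy as for Lemma~\ref{lem:connected2}: fix a candidate configuration $(U_L, U_R, S_L, S_R)$, compute the expected cut size under the bipartite Erd\H{o}s-R\'enyi distribution, apply a Chernoff bound to show the cut is at least $Mn$ with exponentially small failure probability, and then take a union bound over all valid configurations. The first step is to observe that, since $G$ is bipartite,
\[ \mathrm{cut}_{G[U_L\cup U_R]}(S_L\cup S_R) \;=\; e(S_L,\, U_R\setminus S_R) \;+\; e(U_L\setminus S_L,\, S_R), \]
where $e(A,B)$ is the number of edges of $G$ between $A$ and $B$. Writing $a_L=|S_L|$, $b_L=|U_L\setminus S_L|$, $a_R=|S_R|$, $b_R=|U_R\setminus S_R|$, the cut is a sum of independent Bernoulli variables with mean $p(a_L b_R + b_L a_R)$.

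The second step is a purely combinatorial lower bound on $a_L b_R + b_L a_R$ in terms of the sizes alone. Under the hypotheses of the lemma we have $a_L,b_L,a_R,b_R\geq M^2$ together with $a_L+b_L\geq n/10$ and $a_R+b_R\geq kn/10$. A short two-case analysis according to which of $a_R, b_R$ is at least $(a_R+b_R)/2$ suffices: in the first case $b_R\geq kn/20$ so $a_L b_R \geq M^2\cdot kn/20$; in the second case $a_R\geq kn/20$ so $b_L a_R\geq M^2\cdot kn/20$. Either way $a_L b_R + b_L a_R \geq M^2 k n/20$, and the hypothesis $M\geq 100/(kp)$ then gives an expected cut of at least $p M^2 k n/20 \geq 5 M n$.

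The third step is a standard multiplicative Chernoff bound for binomial sums, which yields, for each fixed configuration, $\Pr[\mathrm{cut} < Mn] \leq e^{-c M n}$ for an absolute constant $c>0$ (coming from the deviation ratio $5\mapsto 1$). For the union bound I would count configurations by noting that each one is determined by an ordered tripartition of $V_L$ into $S_L$, $U_L\setminus S_L$ and $V_L\setminus U_L$, together with the analogous tripartition of $V_R$; hence there are at most $3^n\cdot 3^{kn}=3^{(1+k)n}$ configurations. The combined failure probability is therefore at most $3^{(1+k)n}e^{-cMn}$, which is $e^{-\Omega(n)}$ once $M$ is large enough to dominate the entropy term $(1+k)\ln 3$, and this is comfortably guaranteed by $M\geq 100/(kp)\geq 100$.

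The main obstacle is the second step: the inequality $a_L b_R + b_L a_R \geq \Omega(M^2 k n)$ is not obvious because each individual product can degenerate if one of the four sizes sits at the minimal value $M^2$ while the partner sits near the maximal value. The case split above exploits the bipartite structure precisely: whichever side of the partition of $U_R$ carries at least half of $U_R$ automatically has $\Omega(kn)$ vertices and can be paired with the $M^2$-sized piece on the $V_L$ side. Once this pairing is set up the rest of the proof is routine and mirrors the argument of Lemma~\ref{lem:connected2} almost verbatim.
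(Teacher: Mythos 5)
Your proposal is correct and follows essentially the same route as the paper: compute the expected cut $p\left(|S_L|\,|U_R\setminus S_R|+|S_R|\,|U_L\setminus S_L|\right)$, show it is $\Omega(Mn)$ using $M\geq 100/(kp)$, apply a Chernoff bound, and union bound over all choices of $U_L,U_R,S_L,S_R$. The only (immaterial) differences are cosmetic: the paper lower-bounds the expectation via AM--GM and the pairwise products $|S_L|\,|U_L\setminus S_L|$, $|S_R|\,|U_R\setminus S_R|$ rather than your two-case split on which half of $U_R$ is large, and it counts configurations by $2^{2n}\cdot 2^{2n}$ rather than your tripartition count $3^{(1+k)n}$; both yield the same $e^{-\Omega(n)}$ conclusion.
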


\subsection{Proof of 
Theorem \ref{thm:main}}\label{sec:pfthm:main}

In this section, we present the proof  of 
Theorem \ref{thm:main}. We provide the proof outlines
for the cases $B>2$ and $B<2$, and the proofs of the key lemmas are given 
in the appendix. 
We first define
$$(\alpha_L^*,\alpha_R^*):= \lim_{n\rightarrow\infty}\arg\max_{(\alpha_L,\alpha_R)}\Pr(\alpha_L,\alpha_R),$$
such $(\alpha_L^*,\alpha_R^*)$ uniquely exists as we state and prove in Lemma \ref{lem:maxprob} 
in Appendix \ref{sec:simplesw}.

\paragraph{Rapid mixing proof for $B>2$.} 
In this case, we will show first that, for any starting state,   the Swendsen-Wang chain moves in $O(1)$ iterations within constant distance from $(\alpha_L^*,\alpha_R^*)$ with probability $\Theta(1)$.
Then, we will show that the Swendsen-Wang chain moves within $O(n^{-1/2})$ distance from $(\alpha_L^*,\alpha_R^*)$ in $O(\log n)$ iterations with probability $\Theta(1)$.
Finally, using this fact, we will bound the mixing time via the coupling technique.
More formally, we introduce the following key lemmas. 
\begin{lemma}\label{lem:p1close1}
Let $\{X_t:t=0,1,\dots\}$ be the Swendsen-Wang chain on
a complete bipartite graph of size $(n,kn)$ with
any constants $k\geq 1, B>2$ and any starting state $X_0$.
For any constant $\delta>0$, there exists $T=O(1)$ such that $\|\alpha(X_T)-(\alpha_L^*,\alpha_R^*)\|_\infty\le\delta$ with probability $\Theta(1)$.
\end{lemma}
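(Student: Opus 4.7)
The plan is to adapt the strategy of \cite{galanis2015swendsen} to the bipartite setting by analyzing a deterministic two-dimensional ``simplified Swendsen--Wang'' map $F:\mathcal{S}\to\mathcal{S}$ on the phase space
$\mathcal{S}=\{(\alpha_L,\alpha_R)\in[0,1]^2 \,:\, \alpha_L+k\alpha_R\ge(1+k)/2\}$.
Given a phase $(\alpha_L,\alpha_R)$, the set of monochromatic edges in the majority (resp.\ minority) spin class, after percolation, is a bipartite Erd\H{o}s--R\'enyi graph distributed as $G(\alpha_L n,\alpha_R kn,p)$ (resp.\ $G((1-\alpha_L)n,(1-\alpha_R)kn,p)$) with $p=1-e^{-2\beta_{uv}}\approx B/(n\sqrt{k})$. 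The map $F(\alpha_L,\alpha_R)$ outputs the expected new phase after one Swendsen--Wang step, assembled from (i) the giant-component sizes in each of these two bipartite random graphs, each surviving the random $\pm1$ assignment as a single frozen block, and (ii) all remaining vertices, which receive i.i.d.\ uniform $\pm1$ spins.

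With $F$ in hand, the first substep is a one-step concentration estimate: for any starting phase $(\alpha_L,\alpha_R)$, one Swendsen--Wang step produces a new phase within $\ell_\infty$-distance $O(n^{-c})$ of $F(\alpha_L,\alpha_R)$ with probability $1-o(1)$, for some constant $c>0$. This rests on sharp concentration of the giant-component size in $G(n_1,n_2,p)$ above criticality, for which I would prove a bipartite analogue of the estimates used in the mean-field case and defer the graph-theoretic lemmas to the appendix (along the same lines as the $G(n,p)$ statements used in \cite{galanis2015swendsen}).

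The second substep is the deterministic analysis of the iteration of $F$. For $B>2$ the majority-class bipartite graph is \emph{always} supercritical on $\mathcal{S}$, since $B\sqrt{\alpha_L\alpha_R}\ge B/2>1$ whenever the phase is feasible, so the majority giant component acts as a single frozen block that pulls the next phase away from the symmetric point and toward the extremal phase $(\alpha_L^*,\alpha_R^*)$ guaranteed by Lemma~\ref{lem:maxprob}. The goal is to show that for every constant $\delta>0$ there exists $T=T(\delta)=O(1)$ such that $T$ iterations of $F$ from any starting point in $\mathcal{S}$ land within $\ell_\infty$-distance $\delta/2$ of $(\alpha_L^*,\alpha_R^*)$. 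I would combine a coordinatewise monotonicity argument (each coordinate of $F$ is monotone in its input in the supercritical regime) with compactness on $\mathcal{S}$ to rule out competing accumulation points of the iteration, and verify that the Jacobian of $F$ at $(\alpha_L^*,\alpha_R^*)$ has spectral radius strictly below $1$ for $B>2$, so that the fixed point is locally contractive and globally attracting.

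Chaining the one-step concentration bound across $T=O(1)$ iterations introduces only additive error $O(T\cdot n^{-c})=o(1)$, which gives $\|\alpha(X_T)-(\alpha_L^*,\alpha_R^*)\|_\infty\le\delta$ with probability $1-o(1)=\Theta(1)$, proving the lemma. The main obstacle is the deterministic dynamical analysis of $F$: in the complete-graph case of \cite{galanis2015swendsen} the simplified map is one-dimensional and monotone, so tracking a single order parameter suffices, whereas here $F$ is genuinely two-dimensional and its fixed-point structure is more delicate near the boundary of $\mathcal{S}$, where the minority-class bipartite graph $G((1-\alpha_L)n,(1-\alpha_R)kn,p)$ can be subcritical and its giant-component survival probability transitions non-smoothly, complicating both the monotonicity argument and the global uniqueness of the attractor.
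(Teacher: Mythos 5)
Your deterministic analysis of the two\-/dimensional map $F$ (monotonicity, spectral radius of the Jacobian strictly below $1$ at the fixed point, global convergence on the phase space) is exactly what the paper establishes in Lemmas~\ref{lem:attractive} and~\ref{lem:convergence}, so that part is sound. The gap is in your first substep: the claim that for \emph{any} starting phase a single Swendsen--Wang step lands within $O(n^{-c})$ of $F(\alpha_L,\alpha_R)$ with probability $1-o(1)$ is false on the part of the phase space where the \emph{minority} spin class is itself supercritical (effective parameter $B\sqrt{(1-\alpha_L)(1-\alpha_R)}>1$, which certainly occurs for $B>2$, e.g.\ near the balanced phase). There the percolation step produces \emph{two} giant components, one in each spin class, and the next phase depends on the independent spin assignments to these two macroscopic blocks: with probability $1/2$ they receive the same spin and with probability $1/2$ opposite spins, giving two outcomes at $\ell_\infty$-distance $\Theta(1)$ from one another. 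No deterministic map can be tracked with probability $1-o(1)$ in this regime, and the ``expected new phase'' you propose to define $F$ by averages over this bimodal outcome, so concentration around it fails; this is precisely why the lemma asserts only probability $\Theta(1)$. Your feasibility bound also does not hold: $\alpha_L+k\alpha_R\ge(1+k)/2$ does not imply $\sqrt{\alpha_L\alpha_R}\ge 1/2$ (take $k=1$, $\alpha_L=0.99$, $\alpha_R=0.01$), so even the majority class can be subcritical at lopsided phases; the paper's $F$ is defined with $(\theta_L,\theta_R)=(0,0)$ there.

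The paper's proof supplies the two ingredients your outline is missing. First, Claim~\ref{lem:smallsub} shows that at the fixed point the minority class is subcritical, $(1-\alpha_L^*)(1-\alpha_R^*)B^2<1$; hence the simplified map $F$ (which by construction ignores any minority giant) genuinely tracks the chain, via Lemma~\ref{lem:SWF}, whenever the minority class is subcritical, and in particular in a $\delta$-neighborhood of $(\alpha_L^*,\alpha_R^*)$. Second, for starting states whose minority class is supercritical, the paper argues separately: with probability $1/2$ the two giant components get the same spin and merge, pushing the phase into a region dominating $(\alpha_L^*-\delta,\alpha_R^*-\delta)$ where the minority class is subcritical, after which Lemma~\ref{lem:SWF} combined with the global convergence of $F$ (Lemma~\ref{lem:convergence}) brings the chain within $\delta$ of $(\alpha_L^*,\alpha_R^*)$ in $O(1)$ further steps. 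This probability-$1/2$ merging step is the source of the $\Theta(1)$ in the statement and is the key idea absent from your proposal; without it, your chaining argument (which needs every step to succeed with probability $1-o(1)$) cannot get off the ground from an arbitrary starting state $X_0$.
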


\begin{lemma}\label{lem:p1close2}
Let $\{X_t:t=0,1,\dots\}$ be the Swendsen-Wang chain on
a complete bipartite graph of size $(n,kn)$ with
any constants $k\geq 1, B>2$. 
There exist constants $\delta,L>0$ such that the following statement holds.
Suppose that we start at state $X_0$ such that $\|\alpha(X_0)-(\alpha_L^*,\alpha_R^*)\|_\infty\le\delta$.
Then, in $T=O(\log n)$ iterations,
the Swendsen-Wang chain moves to $X_{T}$ such that $\|\alpha(X_T)-(\alpha_L^*,\alpha_R^*)\|_\infty\le Ln^{-1/2}$ with probability $\Theta(1)$.
\end{lemma}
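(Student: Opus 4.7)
My plan is a standard linearization-plus-concentration argument built around the two-dimensional \emph{simplified Swendsen--Wang} map $F: [0,1]^2 \to [0,1]^2$ that records, given a current phase $(\alpha_L, \alpha_R)$, the expected phase of the chain after one step. Conditional on $\alpha(X_t) = (\alpha_L, \alpha_R)$, the monochromatic subgraph is a bipartite Erd\H{o}s--R\'enyi graph with inhomogeneous part sizes and edge-occupation probability $p = B/(n\sqrt{k})$; a standard giant-component analysis (cf.\ Appendix~\ref{sec:simplesw}) identifies a unique giant cluster of asymptotically deterministic sizes on the two sides, and the uniformly-random spin assignment on the resulting components determines the expected sizes $|V_{\pm}(X_{t+1}) \cap V_L|$ and $|V_{\pm}(X_{t+1}) \cap V_R|$ from which $F$ is defined. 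That $(\alpha_L^*, \alpha_R^*)$ is a fixed point of $F$ follows, as in \cite{galanis2015swendsen}, from its definition as the argmax of the limiting phase probability.

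The first substantive step is to show that the Jacobian $J := DF(\alpha_L^*, \alpha_R^*)$ satisfies $\|J\|_\infty < 1$. The assumption $B > 2$ places the model strictly inside the ordered regime of Lemma~\ref{lem:maxprob}, in which $(\alpha_L^*, \alpha_R^*)$ is a non-degenerate strict maximizer of $\Pr(\alpha_L, \alpha_R)$; the entries of $J$ can be obtained by implicitly differentiating the pair of giant-component fixed-point equations that define $F$, and I would verify $\|J\|_\infty < 1$ by an analytic computation in $B$ (analogous to, but more involved than, the one-dimensional derivative check in \cite{galanis2015swendsen}). By continuity there then exist constants $\delta > 0$ and $c \in (0,1)$ such that
\[ \|F(\alpha_L, \alpha_R) - (\alpha_L^*, \alpha_R^*)\|_\infty \le c \, \|(\alpha_L, \alpha_R) - (\alpha_L^*, \alpha_R^*)\|_\infty \]
on the $\ell_\infty$-ball of radius $\delta$ about $(\alpha_L^*, \alpha_R^*)$; this is the $\delta$ appearing in the lemma statement.

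The second substantive step is concentration: conditional on $\alpha(X_t)$ lying in the $\delta$-ball above, I would show that
\[ \|\alpha(X_{t+1}) - F(\alpha(X_t))\|_\infty \le C n^{-1/2} \quad \text{with probability } 1 - O(n^{-2}) \]
for a suitable constant $C$. This combines Chernoff-type bounds on the number of retained bipartite edges and on the sizes of non-giant monochromatic components (which control the bipartite sizes of the giant at scale $\sqrt{n}$) with Hoeffding-type bounds on the uniformly-random spin assignments summed over the $\Theta(n)$ small components. Setting $d_t := \|\alpha(X_t) - (\alpha_L^*, \alpha_R^*)\|_\infty$, contraction and concentration together give, while $d_t \le \delta$,
\[ d_{t+1} \le c \, d_t + C n^{-1/2} \quad \text{with probability } 1 - O(n^{-2}). \]
Iterating for $T = \lceil (2/\log(1/c))\log n \rceil = O(\log n)$ rounds and union-bounding the failure events produces $d_T \le c^T \delta + C/(1-c) \cdot n^{-1/2} = O(n^{-1/2})$ with probability $\Theta(1)$, giving the conclusion with $L := 2C/(1-c)$.

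The main obstacle is the Jacobian estimate $\|J\|_\infty < 1$. In the mean-field analysis of \cite{galanis2015swendsen} the corresponding map is one-dimensional, and contraction reduces to verifying $|F'(\alpha^*)| < 1$ by an explicit derivative computation. Here one must instead control a $2 \times 2$ Jacobian whose entries are only implicitly available through the pair of giant-component fixed-point equations for inhomogeneous bipartite percolation; a clean closed form is unlikely and one must work with the implicit expressions directly. Compounding this, the concentration step requires bipartite analogues of random-graph estimates which, as the authors note just before the proof outline, are much less developed than in the non-bipartite setting. These two ingredients together presumably explain why the proof is described as ``much harder'' than that of Theorem~\ref{thm:main2}.
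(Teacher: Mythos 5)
Your overall architecture (contraction of the two-dimensional map $F$ near its Jacobian-attractive fixed point, plus per-step concentration of $\alpha(X_{t+1})$ around $F(\alpha(X_t))$, iterated $O(\log n)$ times) matches the paper's, and the contraction ingredient you defer to an "analytic computation" is exactly what Lemma~\ref{lem:attractive} together with Claim~\ref{lem:smallsub} provides. The gap is in your concentration-and-iteration step. You claim that, conditional on being in the $\delta$-ball, $\|\alpha(X_{t+1})-F(\alpha(X_t))\|_\infty\le Cn^{-1/2}$ holds with probability $1-O(n^{-2})$ for a fixed constant $C$. This is unachievable: the size of the giant component of the percolated bipartite graph and the sum of the random spin assignments over the $\Theta(n)$ small components both have genuine CLT-scale fluctuations of order $\sqrt n$, so the probability of a deviation exceeding any fixed constant multiple of $\sqrt n$ is bounded below by a constant independent of $n$ (indeed the paper only controls these deviations via second moments — Lemma~\ref{lem:concentration1} through Markov, Azuma over the small components, and the Chebyshev-type bound of Lemma~\ref{lem:normal} — giving polynomial-in-$w$, not exponential-in-$n$, tails). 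Pushing the deviation scale up to $\sqrt{n\log n}$ would restore a polynomially small failure probability but would only yield final accuracy $O(\sqrt{\log n/n})$, which is too weak both for the lemma statement and for the subsequent coupling step (Lemma~\ref{lem:p1coupling1}), which genuinely needs the $Ln^{-1/2}$ scale. Conversely, with the correct constant-probability per-step bounds, your uniform union bound over $T=\Theta(\log n)$ rounds would multiply constant success probabilities $\Theta(\log n)$ times and collapse to $n^{-\Theta(1)}$, not $\Theta(1)$.

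The missing idea is the adaptive choice of the fluctuation scale used in the paper: at step $t$ one allows a deviation of size $w_t(1+\sqrt K)n^{-1/2}$ with $w_t$ chosen proportional to the \emph{current} distance $\|\alpha(X_t)-(\alpha_L^*,\alpha_R^*)\|_\infty$ times $n^{1/2}$. Early on $w_t$ is of order $\delta\sqrt n$, so the per-step failure probability $O(1/w_t)$ is tiny; only when the distance has contracted to the $n^{-1/2}$ scale does $w_t$ become a constant of order $L$. Since the distances (and hence the $w_t$) decay geometrically, the failure probabilities $O(1/w_t)$ form a geometric series whose sum is a constant that can be made small by taking $L$ large, which is exactly how the paper gets the overall $\Theta(1)$ success probability while retaining the $Ln^{-1/2}$ endpoint accuracy. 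You would also want to state explicitly, as the paper does via Claim~\ref{lem:smallsub}, that $\delta$ is chosen so that the smaller spin class stays subcritical throughout, so that Lemma~\ref{lem:concentration1} applies to it at every step.
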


\begin{lemma}\label{lem:p1coupling1}
Let $\{X_t:t=0,1,\dots\}$, $\{Y_t:t=0,1,\dots\}$ be Swendsen-Wang chains on
a complete bipartite graph of size $(n,kn)$ with
any positive constants $k\geq 1, B\ne 2$.
Let $X_0,Y_0$ be a pair of configurations satisfying 
$$\|\alpha(X_0)-(\alpha_L^*,\alpha_R^*)\|_\infty,\|\alpha(Y_0)-(\alpha_L^*,\alpha_R^*)\|_\infty\le Ln^{-1/2}$$
for some constant $L>0$.
Then, there exists a coupling for $(X_t,Y_t)$ such that $\alpha(X_1)=\alpha(Y_1)$ with probability $\Theta(1)$.
\end{lemma}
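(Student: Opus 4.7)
The plan is to prove a bivariate local central limit theorem (local CLT) for the conditional distribution of $\alpha(X_1)$ given $X_0$, and then to obtain the claimed coupling by comparing the distributions of $\alpha(X_1)$ and $\alpha(Y_1)$ on the $1/n$-lattice of phases. A single Swendsen-Wang step from $X_0$, of phase $(a_L,a_R)$, decomposes cleanly on the complete bipartite graph: (i) the subgraphs induced by the plus- and the minus-vertices are disjoint bipartite Erd\H{o}s-R\'enyi random graphs $G(a_L n,a_R kn,p)$ and $G((1-a_L)n,(1-a_R)kn,p)$ with $p=B/(n\sqrt k)$; (ii) the new phase $\alpha(X_1)$ is the normalized sum, restricted to $V_L$ and to $V_R$, of the $\pm 1$ labels assigned independently and uniformly to each connected component of these percolated graphs.

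Next I would pin down the component structure of these two percolated bipartite graphs. The relevant branching factor for $G(n_1,n_2,p)$ is $n_1 n_2 p^2$; for the plus-subgraph this equals $a_L a_R B^2$ and for the minus-subgraph $(1-a_L)(1-a_R)B^2$, and the hypothesis $B\neq 2$ together with Lemma~\ref{lem:maxprob} keeps the leading-order behavior strictly away from criticality for $(a_L,a_R)$ within $L n^{-1/2}$ of $(\alpha_L^*,\alpha_R^*)$. In any subcritical subgraph every component is of size $O(\log n)$, and its contribution to $\alpha(X_1)$ is a sum of a linear number of independent bounded increments. In a supercritical subgraph there is a unique giant component of linear size whose $V_L$- and $V_R$-masses concentrate to explicit constants; I would condition on the $\pm 1$ label of each giant, observing that by the majority-spin convention in the definition of $\alpha$, each possible labelling of the giants produces a phase in a small neighborhood of $(\alpha_L^*,\alpha_R^*)$, while the residual small components again contribute a sum of many independent bounded increments.

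Conditional on the component structure and the giant labels, I would then establish a quantitative bivariate local CLT for $\sqrt n\,\bigl(\alpha(X_1)-(\alpha_L^*,\alpha_R^*)\bigr)$: this rescaled distribution is within total variation $o(1)$ of a discrete bivariate Gaussian on a $1/\sqrt n$-mesh, with a fixed $\Theta(1)$ covariance determined by the weighted second moments of component sizes, and with a mean that is a Lipschitz function of $(a_L,a_R)$ on the $1/\sqrt n$-scale. Proving such a local CLT with enough precision is the main technical obstacle; it relies on concentration for the small-component size distribution of the bipartite Erd\H{o}s-R\'enyi graph $G(n,m,p)$ (analogous to the tools developed for Theorem~\ref{thm:main}) together with a characteristic-function or bivariate Berry-Esseen argument, and the hypothesis $B\neq 2$ is exactly what gives Gaussian-scale rather than critical-scale fluctuations.

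Finally, the hypothesis $\|\alpha(X_0)-(\alpha_L^*,\alpha_R^*)\|_\infty,\|\alpha(Y_0)-(\alpha_L^*,\alpha_R^*)\|_\infty\le Ln^{-1/2}$ translates, through the Lipschitz dependence of the CLT mean on the starting phase, into the statement that the two rescaled distributions of $\sqrt n(\alpha(X_1)-(\alpha_L^*,\alpha_R^*))$ and $\sqrt n(\alpha(Y_1)-(\alpha_L^*,\alpha_R^*))$ are bivariate Gaussians (up to $o(1)$ error) with the same covariance and with mean separation $O(1)$. Two such Gaussians have total variation distance uniformly bounded away from $1$, so the maximal coupling between them yields $\alpha(X_1)=\alpha(Y_1)$ with probability $\Theta(1)$, as required.
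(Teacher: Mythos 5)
Your strategy (condition on the percolation outcome, view $\alpha(X_1)$ as a lattice-valued sum of independently labelled component sizes, prove a bivariate local CLT, and then take the maximal coupling of the two phase laws) is a genuinely different route from the paper's, and in outline it could be made to work; but as written it has a real gap: the entire difficulty of the lemma is concentrated in the step you yourself flag as ``the main technical obstacle'' and do not carry out. A total-variation-$o(1)$ approximation of the conditional law of $\sqrt{n}\bigl(\alpha(X_1)-(\alpha_L^*,\alpha_R^*)\bigr)$ by a discrete bivariate Gaussian is a strong quantitative local CLT for a sum of independent, non-identically distributed lattice \emph{vectors}, and to establish it you would still have to verify (i) lattice non-degeneracy, i.e.\ that the increments span $\mathbb{Z}^2$ aperiodically --- which in effect requires showing there are $\Theta(n)$ isolated vertices in each of $V_L$ and $V_R$ after percolation, a fact you never invoke; (ii) that the conditional covariance is of order one and does not degenerate or blow up, whereas the available control on $\sum_{i\ge2}|C_i|^2$ (Lemma~\ref{lem:concentration1}) is only a bound in expectation, not concentration; and (iii) that the $\Theta(\sqrt{n})$-scale fluctuations of the giant's size (which shift the conditional mean) and the folding caused by the majority-spin convention in the definition of $\alpha$ do not destroy the pointwise $\Omega(1/n)$ lower bounds you need on a common window. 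None of these is insurmountable, but without them the proposal is an announcement of a plan rather than a proof.

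For comparison, the paper avoids the two-dimensional local CLT entirely. It cites Lemma~5.7 of \cite{long2014power} to get, with probability $1-O(1/n)$, at least $Cn$ isolated vertices in each of $V_L$ and $V_R$ after the percolation step; it reserves exactly $Cn$ of them on each side, assigns spins to everything else, and checks (by the same concentration estimates as in Lemma~\ref{lem:p1close2}) that the partial phases of the two chains differ by $O(n^{-1/2})$ with probability $\Theta(1)$. The residual discrepancy in each coordinate is then cancelled by a one-dimensional shift coupling of the two $\mathrm{Bin}(Cn,1/2)$ counts of $+1$ spins among the reserved vertices, and the only anticoncentration input is the elementary Stirling estimate $\Pr\bigl(\mathrm{Bin}(Cn,1/2)=w\bigr)=\Omega(n^{-1/2})$ for $w$ within $O(\sqrt{n})$ of the mean, which makes the overlap over a window of length $O(\sqrt{n})$ be $\Theta(1)$. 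In other words, the paper reduces the matching of a two-dimensional phase to two independent one-dimensional binomial couplings on a reservoir it controls exactly, which is precisely the device that lets it bypass the local CLT your proposal leaves unproved; if you want to pursue your route, you should either prove the bivariate local CLT in full (including the non-degeneracy and covariance-control points above) or weaken your target from ``TV-$o(1)$ Gaussian approximation'' to the pointwise $\Omega(1/n)$ overlap bound, which is all the maximal-coupling argument actually needs.
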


\begin{lemma}\label{lem:p1coupling2}
Let $\{X_t:t=0,1,\dots\}$, $\{Y_t:t=0,1,\dots\}$ be Swendsen-Wang chains on
a complete bipartite graph of size $(n,kn)$ with
any constants $k\geq 1, B>0$.
For any constant $\varepsilon>0$,
there exist $T=O(\log n)$ and a coupling for $(X_t,Y_t)$ such that 
$\Pr[X_T\ne Y_T\,|\,\alpha(X_0)=\alpha(Y_0)]\le\varepsilon$.
\end{lemma}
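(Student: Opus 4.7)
My plan is to establish a one-step contraction of the Hamming distance under a suitable coupling of the Swendsen-Wang chain and then invoke the Bubley-Dyer path-coupling theorem. Write $d(\sigma, \tau) = |\{v \in V : \sigma(v) \ne \tau(v)\}|$ for the Hamming distance. Because $\alpha(X_0) = \alpha(Y_0)$, the two configurations have the same number of $+1$ spins in each of $V_L, V_R$, so they can be connected by a sequence of ``swaps''---each swap exchanges the spins of a $(+,-)$ pair of vertices inside one partition and changes the Hamming distance by exactly $2$. By path coupling it therefore suffices to design a one-step coupling for pairs $(X, Y)$ that differ by a single swap and show that the Hamming distance contracts by a factor $\lambda < 1$ in expectation; the bound then extends to all pairs with the same phase.

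Fix neighbor configurations $X, Y$ that agree except at $u_1, u_2 \in V_L$, with $X(u_1) = +, X(u_2) = -$ and $Y(u_1) = -, Y(u_2) = +$ (the $V_R$ case is symmetric). The monochromatic edges of $X$ and $Y$ not touching $\{u_1, u_2\}$ coincide, while for every $v \in V_R$ the edge $(u_1, v)$ is monochromatic in $X$ iff $(u_2, v)$ is monochromatic in $Y$, and vice versa. This suggests the following ``swap coupling'': (i) couple the percolation bits on non-incident edges identically; (ii) couple $(u_1, v) \in M^X$ with $(u_2, v) \in M^Y$, and $(u_2, v) \in M^X$ with $(u_1, v) \in M^Y$, using the same random bit in each pair; (iii) couple the component recoloring so that corresponding components of $G'^X$ and $G'^Y$ (matched via the $u_1 \leftrightarrow u_2$ swap) receive the same random spin. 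Under this coupling $Y_1$ equals $X_1$ with the spins at $u_1, u_2$ interchanged. Since $u_1$ and $u_2$ carry opposite spins in $X$, their monochromatic $V_R$-neighborhoods are disjoint and they necessarily lie in distinct components of $G'^X$; in the zero-external-field regime of Theorem~\ref{thm:main} these components receive independent uniform spins. Hence $\Pr[X_1(u_1) = X_1(u_2)] = 1/2$ and $E[d(X_1, Y_1)] = 1 = \tfrac{1}{2}\, d(X_0, Y_0)$.

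Applying Bubley-Dyer with contraction factor $\lambda = 1/2$, the neighbor contraction lifts to $E[d(X_1, Y_1) \mid X_0, Y_0] \le \tfrac{1}{2}\, d(X_0, Y_0)$ for every pair with $\alpha(X_0) = \alpha(Y_0)$. Iterating $T$ times and using $d(X_0, Y_0) \le (1+k)n$ gives $E[d(X_T, Y_T)] \le 2^{-T}(1+k)n$. Choosing $T = \lceil \log_2((1+k)n/\varepsilon) \rceil = O(\log n)$ and applying Markov's inequality yields $\Pr[X_T \ne Y_T] = \Pr[d(X_T, Y_T) \ge 1] \le \varepsilon$.

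The only technical subtlety I anticipate is verifying that the path-coupling composition is well-defined: the ``swap'' neighbor pairs have Hamming distance $2$ rather than $1$, but the Hamming metric is additive along any swap path connecting two same-phase configurations, so the standard Bubley-Dyer argument applies directly (or after rescaling the metric by $1/2$). Notably, the contraction factor $1/2$ comes entirely from the zero-external-field coloring rule and does not rely on any delicate random-graph estimate about the percolated monochromatic subgraph; this is why Lemma~\ref{lem:p1coupling2} is stated for every $B > 0$ and $k \ge 1$, without the $B \ne 2$ exclusion present in Theorem~\ref{thm:main} and Lemma~\ref{lem:p1coupling1}.
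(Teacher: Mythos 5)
Your core coupling is sound and is in essence the same mechanism as the proof the paper points to (Lemma 9 of \cite{galanis2015swendsen}): pair up disagreeing vertices within a part, transport the percolation randomness through the induced graph automorphism (your transposition $\tau=(u_1\,u_2)$), give matched components identical colors, and observe that the two relevant components carry opposite spins, hence are distinct and get independent uniform colors, so the disagreement contracts by a factor $1/2$ in expectation; the cited proof simply performs all swaps simultaneously via one part-preserving bijection instead of composing them along a Bubley--Dyer path. Two points, however, need attention. First, and this is a genuine gap: under the paper's definition, $\alpha(\sigma)$ records the fractions of the \emph{larger} spin class, so $\alpha(X_0)=\alpha(Y_0)$ does \emph{not} imply that $X_0$ and $Y_0$ have the same number of $+1$ spins in each of $V_L,V_R$ --- the dominant spins may be opposite (e.g.\ $X_0$ is $60\%$ plus and $Y_0$ is $60\%$ minus in each part), in which case no swap path between them exists and your argument, as written, does not apply to exactly the pairs this lemma must handle (the main proof of Theorem~\ref{thm:main} feeds it pairs conditioned only on equal phases). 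This is fixable using the zero-field spin-flip symmetry: couple one preliminary step through a part-preserving bijection carrying the spin classes of $X_0$ onto those of $Y_0$ (with the labels flipped) and color matched components identically; then $Y_1=X_1\circ\pi^{-1}$ has the same per-part spin counts as $X_1$, and your swap argument takes over. But you must say this; as stated the first sentence of your proof is false.

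Second, a smaller verification you should make explicit: the off-the-shelf Bubley--Dyer theorem does not apply verbatim, because swap-adjacency does not connect the whole state space, so the one-step estimate for same-count pairs does not by itself iterate. It does iterate for \emph{your} coupling, because under the composed swap couplings one has $Y_1=X_1\circ\pi$ almost surely for a part-preserving permutation $\pi$, so the coupled pair again has equal per-part counts at every step and the contraction $E[d(X_{t+1},Y_{t+1})\mid X_t,Y_t]\le\tfrac12 d(X_t,Y_t)$ can be reapplied; with that observation (and the easy remark that the percolation coupling is legitimate because all $\beta_{uv}$, hence all $p_{uv}$, are equal), the $T=O(\log n)$ conclusion via Markov's inequality is correct.
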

The proofs of the above lemmas are presented in Appendices \ref{sec:p1close1}---\ref{sec:p1coupling1}.
Since the proof of Lemma \ref{lem:p1coupling2} is identical to that of \cite[Lemma 9]{galanis2015swendsen},
we omit it.
Now, we are ready to complete 
the proof  of 
Theorem \ref{thm:main} for $B>2$.
\iffalse
By combining Lemma \ref{lem:p1close1}, \ref{lem:p1close2}, \ref{lem:p1coupling1} and \ref{lem:p1coupling2}
one can conclude the following corollary.

\begin{corollary}\label{cor:p1}
Let $B>2$ be a constant. The mixing time of the Swendsen-Wang algorithm on the complete bipartite Ising model of
partition size $(n,kn)$ is $O(\log n)$.
\end{corollary}

\fi

Consider two copies $X_t, Y_t$ under the Swendsen-Wang chain. We will show that for some $T=O(\log n)$, there exists a coupling such that
$\Pr[X_T\ne Y_T]\le1/4$. 
Let $\delta, L$ be as in Lemma \ref{lem:p1close1} and Lemma \ref{lem:p1close2}. 
Then, for some $T_1=O(1)$ with probability $\Theta(1)$, we have that
$$\|\alpha(X_{T_1})-(\alpha_L^*,\alpha_R^*)\|_\infty,\|\alpha(Y_{T_1})-(\alpha_L^*,\alpha_R^*)\|_\infty\le\delta.$$
Furthermore, for some $T_2=O(\log n)$ with probability $\Theta(1)$, we have that
\begin{equation}\label{eq:close}
\begin{split}
&\|\alpha(X_{T_1+T_2})-(\alpha_L^*,\alpha_R^*)\|_\infty\le Ln^{-1/2}\\
&\|\alpha(Y_{T_1+T_2})-(\alpha_L^*,\alpha_R^*)\|_\infty\le Ln^{-1/2}.
\end{split}
\end{equation}
Conditioning on \eqref{eq:close} and using Lemma \ref{lem:p1coupling1}, there exists a coupling that
$\alpha(X_{T_1+T_2+1})=\alpha(Y_{T_1+T_2+1})$ holds with probability $\Theta(1)$.
Conditioning on $\alpha(X_{T_1+T_2+1})=\alpha(Y_{T_1+T_2+1})$ and using Lemma \ref{lem:p1coupling2}, for any constant $\varepsilon^\prime>0$,
there exists $T_3=O(\log n)$ and another coupling such that $\Pr(X_{T_1+T_2+T_3+1}\ne Y_{T_1+T_2+T_3+1})\le\varepsilon^\prime$.
Since all events so far occur with probability $\Theta(1)$, there exists small enough constant $\varepsilon^\prime$ so that
$\Pr(X_T\ne Y_T)\le1/4$ for some $T=O(\log n)$ under some coupling.
This completes the proof of Theorem \ref{thm:main} for the case $B>2$.

\paragraph{Rapid mixing proof for $B<2$.}

In this case, we will show that $\alpha(X_t)$
moves within $O(n^{-1/2})$ distance from $(\alpha_L^*,\alpha_R^*)$ in $O(1)$ iterations.
Then, we will bound the mixing time via the coupling technique as before.
More formally, we introduce the following key lemmas.

\begin{lemma}\label{lem:p2close}
Let $\{X_t:t=0,1,\dots\}$ be the Swendsen-Wang chain on
a complete bipartite graph of size $(n,kn)$ with
any constants $k\geq 1, B<2$.
There exists a constant $L$ such that for any starting state $X_0$ after $T=O(1)$ iterations, the Swendsen-Wang chain moves to state $X_T$
such that $\|\alpha(X_T)-(\alpha_L^*,\alpha_R^*)\|_\infty\le L n^{-1/2}$ with probability $\Theta(1)$.
\end{lemma}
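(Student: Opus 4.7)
The strategy mirrors that for the $B>2$ case but uses a high-temperature contraction in place of the low-temperature clustering. The main tool is the two-dimensional simplified Swendsen-Wang map $F$ giving the expected phase after one step; for $B<2$ its unique attractor (cf.~Lemma~\ref{lem:maxprob}) is $(\alpha_L^*,\alpha_R^*)=(1/2,1/2)$, and the plan is to drive the chain into an $O(n^{-1/2})$ neighborhood of this fixed point in $O(1)$ steps.

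First, I will describe the effect of one Swendsen-Wang iteration on a configuration with phase $(\alpha_L,\alpha_R)$. The set of monochromatic edges decomposes into two disjoint bipartite subgraphs on the $+$ and on the $-$ vertices; after the percolation step these become independent copies of the bipartite Erd\H{o}s-R\'enyi random graph, specifically $G(\alpha_L n,\alpha_R kn,p)$ and $G((1-\alpha_L)n,(1-\alpha_R)kn,p)$ with $p\approx B/(n\sqrt{k})$. The criticality threshold for $G(an,bkn,p)$ in this scaling is $abB^2=1$: below it all components have size $O(\log n)$, while above it a unique giant component of linear size emerges.

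The proof will then proceed in two parts. The first part is a \emph{drift} claim: for any initial phase, within $O(1)$ Swendsen-Wang iterations the chain enters, with probability $\Theta(1)$, the doubly subcritical region
\[R_{\text{sub}}=\{(\alpha_L,\alpha_R)\,:\,\alpha_L\alpha_R B^2<1-\eta \text{ and } (1-\alpha_L)(1-\alpha_R)B^2<1-\eta\}\]
for some constant $\eta>0$. The mechanism is that if the two percolated monochromatic subgraphs have giants of relative sizes $\rho^+_L,\rho^+_R$ and $\rho^-_L,\rho^-_R$ in the two partitions, the random uniform spin assignments give
\[\alpha_L^{\text{new}}=\tfrac{1}{2}+\tfrac{1}{2}(\epsilon^{+}\rho^{+}_L+\epsilon^{-}\rho^{-}_L)+O(n^{-1/2}),\]
with $\epsilon^\pm\in\{-1,+1\}$ independent uniform signs (and analogously for $\alpha_R$); because $B<2$, a constant-probability choice of $(\epsilon^+,\epsilon^-)$ reduces the magnetization enough that iterating this a constant number of times drops the chain inside $R_{\text{sub}}$. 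The second part is a \emph{Gaussian smoothing} claim: from any phase in $R_{\text{sub}}$, one further iteration produces a phase within $O(n^{-1/2})$ of $(1/2,1/2)$ with probability $\Theta(1)$, because each of the $\Theta(n)$ components of the percolated subgraph has size $O(\log n)$, gets an independent uniform spin (since $\gamma=0$), and the sum of component sizes weighted by signs has variance $O(n)$, so a central limit theorem controls $\alpha_L,\alpha_R$ up to Gaussian fluctuations of order $n^{-1/2}$. Composing the $O(1)$ drift steps with the single smoothing step yields the lemma.

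The main obstacle will be the detailed analysis of the bipartite Erd\H{o}s-R\'enyi random graph $G(n,m,p)$ near and above the threshold $nmp^2=1$. I will need sharp estimates for the giant-component sizes in the supercritical regime (with $O(n^{-1/2})$ concentration around the deterministic values given by the $2\times 2$ survival-probability fixed point), together with tail bounds on component sizes in the subcritical regime, and a CLT for spin sums over small components. These results are standard in spirit for $G(n,p)$ via branching-process arguments, but the two-type branching process required for $G(n,m,p)$ makes the bookkeeping non-trivial; these auxiliary estimates are expected to be developed separately in the appendix, and they are exactly the technical bipartite-random-graph input that the paper flags as one of its main contributions.
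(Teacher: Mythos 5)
Your overall two-stage plan (drive the phase into a subcritical region in $O(1)$ steps, then use one subcritical step to get $O(n^{-1/2})$ concentration) is the same as the paper's, and your second stage is essentially the paper's argument: there, the bound $E\big[\sum_{i\ge1}|C_i|^2\big]=O(n)$ for the subcritical bipartite random graph (Lemma~\ref{lem:concentration1}\ref{it:wsxb12a}) together with Markov's and Azuma's inequalities plays the role of your variance/CLT step, yielding the same $\Theta(\sqrt n)$ fluctuation scale.

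The gap is in your drift step. For $B<2$ the two spin classes can never be simultaneously supercritical: that would require $\alpha_L\alpha_R B^2>1$ and $(1-\alpha_L)(1-\alpha_R)B^2>1$, and multiplying these forces $B^4/16>1$, i.e.\ $B>2$. So after the percolation step there is at most one giant component, and since the phase is defined with respect to the larger spin class (i.e.\ modulo a global spin flip, as $\gamma=0$), the random sign given to that single giant has no effect on $\alpha(X_{t+1})$: the new phase is $F(\alpha(X_t))+o(1)$ no matter how the coins land. Hence ``a constant-probability choice of $(\epsilon^+,\epsilon^-)$ reduces the magnetization'' is not a valid mechanism here; the reduction is purely deterministic and comes from the fact that the iterates $F^{(t)}$ of the two-dimensional simplified Swendsen-Wang map contract to the unique fixed point $(1/2,1/2)$. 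What your proposal is missing is precisely a proof that, uniformly over starting phases --- including those with $\alpha_L\alpha_R B^2>1$, which do occur for $1<B<2$ --- a bounded number of applications of $F$ lands inside your region $R_{\mathrm{sub}}$. In the paper this is supplied by the uniqueness/attractivity of the fixed point and the monotone global convergence of $F$ (Lemmas~\ref{lem:attractive} and~\ref{lem:convergence}), transferred to the chain via Lemma~\ref{lem:SWF}; without that analysis of the recursion (or an equivalent), your assertion that $O(1)$ iterations suffice to reach the doubly subcritical region is unsupported.
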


The proof of Lemma \ref{lem:p2close} is presented in Appendix~\ref{sec:p2close}.
By combining Lemmas \ref{lem:p1coupling1}-\ref{lem:p2close} and using the same arguments used for the case $B>2$, 
one can complete 
the proof  of 
Theorem \ref{thm:main} for
$B<2$.

\section{Experiments}\label{sec:exp}

\begin{figure*}[t!]
    \centering
    \begin{subfigure}[b]{0.49\textwidth}
        \includegraphics[width=\textwidth]{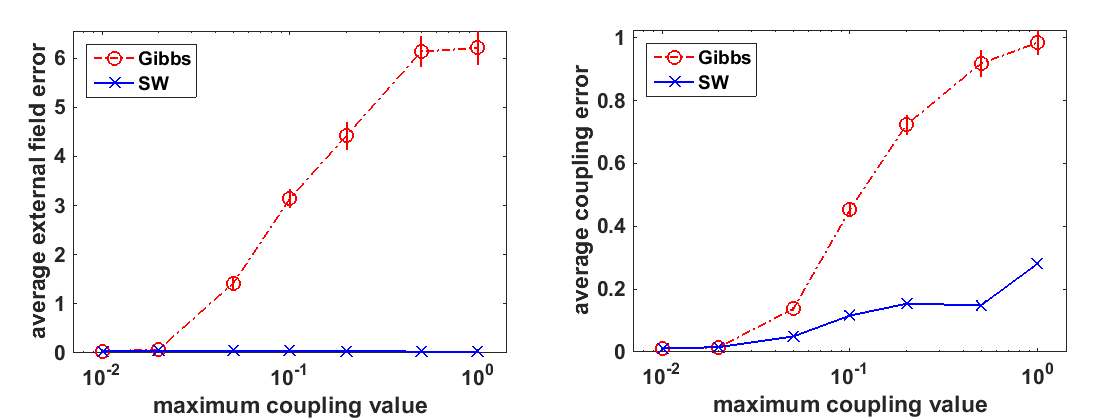}
        \caption{Facebook : $\gamma_v\sim\text{Unif}(0,0.1)$}
        \label{fig:fbp}
    \end{subfigure}
    \begin{subfigure}[b]{0.49\textwidth}
        \includegraphics[width=\textwidth]{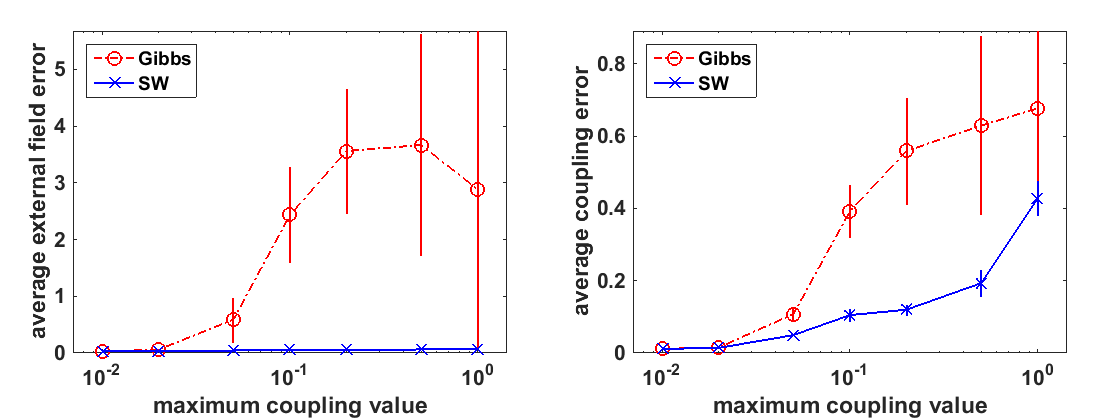}
        \caption{Facebook : $\gamma_v\sim\text{Unif}(-0.1,0.1)$}
        \label{fig:fbm}
    \end{subfigure}
    \begin{subfigure}[b]{0.49\textwidth}
        \includegraphics[width=\textwidth]{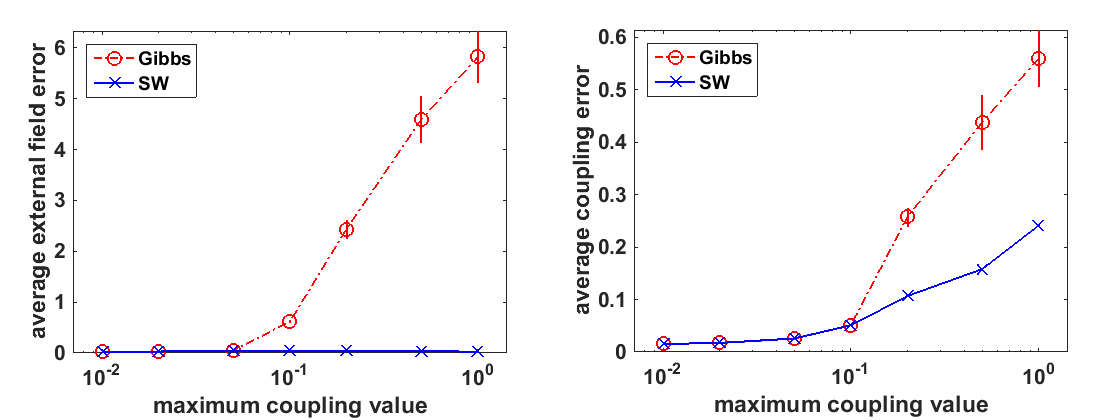}
        \caption{UCI : $\gamma_v\sim\text{Unif}(0,0.1)$}
        \label{fig:UCIp}
    \end{subfigure}
    \begin{subfigure}[b]{0.49\textwidth}
        \includegraphics[width=\textwidth]{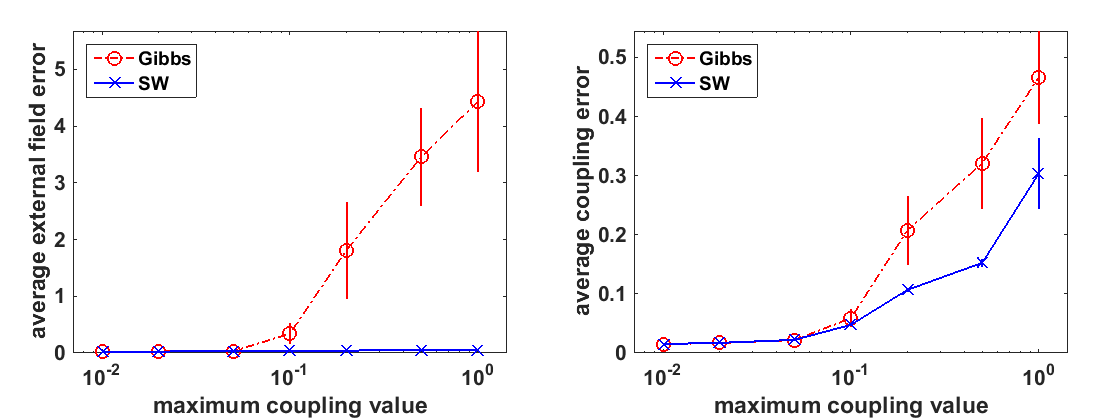}
        \caption{UCI : $\gamma_v\sim\text{Unif}(-0.1,0.1)$}
        \label{fig:UCIm}
    \end{subfigure}
    \begin{subfigure}[b]{0.49\textwidth}
        \includegraphics[width=\textwidth]{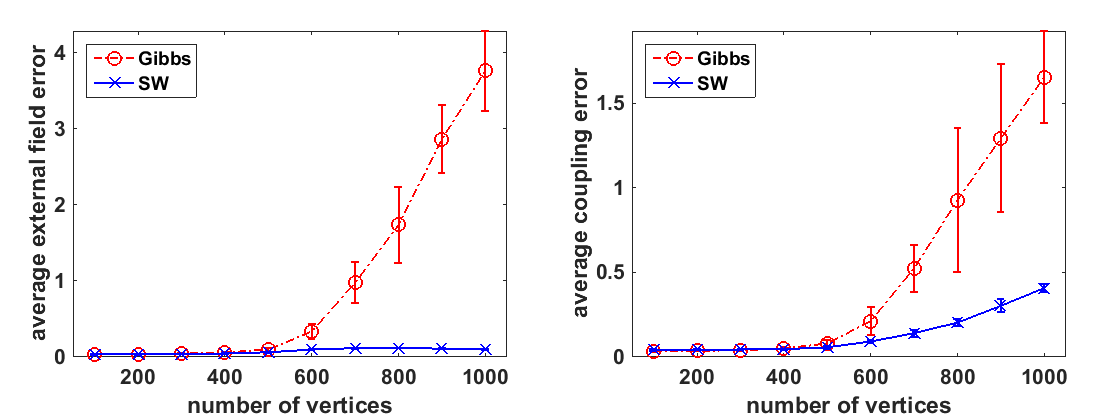}
        \caption{Synthetic : $\gamma_v\sim\text{Unif}(0,0.1)$}
        \label{fig:synp}
    \end{subfigure}
    \begin{subfigure}[b]{0.49\textwidth}
        \includegraphics[width=\textwidth]{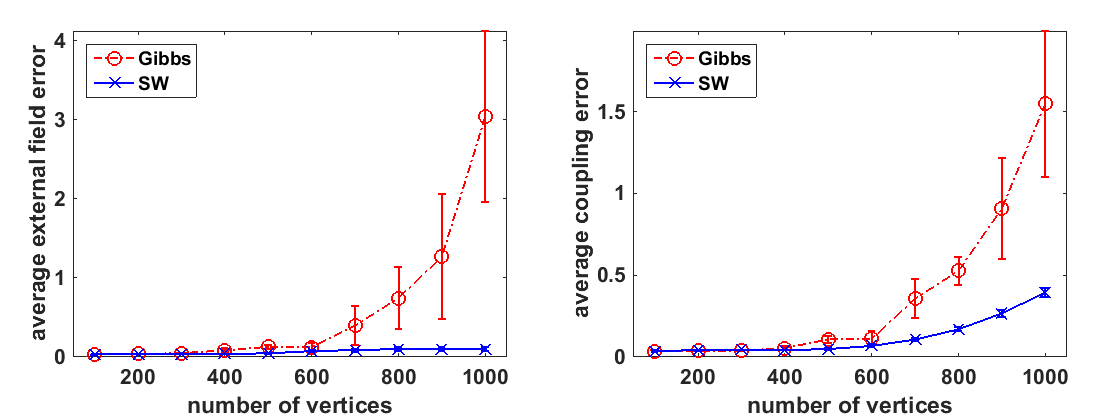}
        \caption{Synthetic : $\gamma_v\sim\text{Unif}(-0.1,0.1)$}
        \label{fig:synm}
    \end{subfigure}
    \caption{x-axis value  $x$ of (a), (b), (d), (e) is a range that $\beta_{uv}$ is sampled from, i.e. $\beta_{uv}\sim\text{Unif}(0,x)$, and x-axis value of (c), (f) is a number of vertices in a graph. 
    y-axis of external field error is a normalized external field error $\sum_{v\in V}|\gamma_v-\hat\gamma_v|/|V|$ and y-axis of coupling error is a normalized coupling error $\sum_{(u,v)\in E}|\beta_{uv}-\hat\beta_{uv}|/|E|$. 
    Each point is an average of 10 independent Ising models while each Ising model is learned by 1000 data samples. }
\vspace{-0.1in}
\end{figure*}

In this section, we compare the empirical performances of the Swendsen-Wang and the Gibbs chains 
for learning parameters of ferromagnetic Ising models.
We construct models on real world social graphs and synthetic stochastic partitioned graphs by assigning random parameters $[\beta_{uv}],[\gamma_v]$ on graphs.
For the choice of learning algorithm, we use the popular contrastive divergence (CD) algorithm \cite{hinton2012practical} which uses a Markov chain as its subroutine.

\paragraph{Data sets.} 
For each model, we generate a data set of 1000 samples  by running the Swendsen-Wang chain.
To construct a model,
we use two real world social graphs which are known to have certain partitioned structures, e.g.,
see \cite{girvan2002community}.
The first social graph is a Facebook graph consisting of 4039 nodes and 88234 edges, originally used in \cite{mcauley2012learning}.
Each node of the graph corresponds to an account of Facebook and each edge of the graph corresponds to a `friendship' in Facebook.
The second social graph is a UCI graph created from an online community consisting of 1899 nodes and 13838 edges, originally used in \cite{opsahl2009clustering}.
Each node in the graph corresponds to a student at the University of California, Irvine and each edge in the graph corresponds to the message log from April to October 2004, i.e. edge $(u,v)$ exists if $u$ sent message to $v$ or vice versa.
For the real world social graphs, we assign $\gamma_v\sim\text{Unif}(0,0.1), \text{Unif}(-0.1,0.1)$, i.e., both positive and mixed external field, and $\beta_{uv}\sim\text{Unif}(0,x)$ where $x\in [0.01, 1]$.\footnote{$\text{Unif}(a,b)$
denotes the random variable chosen in the interval $[a,b]$ uniformly at random.}
For given $x$, we sample 10 i.i.d.\ $[\beta_{uv}]$ to obtain 10 different models.

Our synthetic stochastic partitioned graphs are bipartite random graphs of 100 to 1000 vertices with two partitions of same size, i.e. $|V_1|=|V_2|$.
We set the inter-partition edge probability $p_{11}=p_{22}=0.007$ and the intra-partition edge probability $p_{12}=0.003$.
For each graph size, we sample 10 bipartite random graphs.
For synthetic graphs, we assign $\gamma_v\sim\text{Unif}(0,0.1),\text{Unif}(-0.1,0.1)$ and $\beta_{uv}\sim\text{Unif}(0,1)$.

\vspace{-0.07in}
\paragraph{Contrastive divergence learning.}
Given a data set, the most standard way to estimate/recover
parameters of a `hidden' model is 
the log-likelihood maximization. To this end, 
it is known \cite{wainwright2008graphical} that computing the gradients of a graphical model requires the
computation of 
marginal probabilities, e.g., $E[\sigma_u\sigma_v]$ and $E[\sigma_v]$, and one can run a Markov chain to estimate them.
However, this is not efficient since the Markov chain has to be run for large enough iterations
until it mixes.
To address the issue, 
the contrastive divergence (CD) learning algorithm \cite{hinton2012practical}
suggests that it suffices to run a Markov chain 
for a fixed number of iterations to approximate each gradient.
The underlying intuition under CD learning is that 
it is not necessary to wait for mixing
for each gradient update
since the parameters are  changing slowly and mixing effects are amortized over iterations.
The detailed procedure of the algorithm is presented in Algorithm \ref{alg:cd}. 

\begin{algorithm}[h]
\caption{Contrastive Divergence Learning} \label{alg:cd}
\begin{algorithmic}[1]
\State \textbf{Input:} $n_i$, $\eta(\cdot)$, $k$, $n_s$, $\texttt{MC}(\cdot,\cdot)$, $\mu_{uv}$, $\mu_v$
\State \textbf{Output:} Estimated parameters $[\hat\beta_{uv}]$, $[\hat\gamma_v]$
\State \textbf{Initialization:} $i,\hat\beta_{uv},\hat\gamma_v\leftarrow 0$ and randomly initialize states $\sigma^1,\dots,\sigma^{n_s}$ of Ising model
\While{$i<n_i$}
\State $s\leftarrow 0$
\While{$s<n_s$}
\State $\sigma^s\leftarrow\texttt{MC}(\sigma^s,k)$
\State $s\leftarrow s+1$
\EndWhile
\State $\hat\mu_{uv}\leftarrow\frac{1}{n_s}\sum_{s=1}^{n_s}\sigma^s_u\sigma^s_v$
\State $\hat\mu_{v}\leftarrow\frac{1}{n_s}\sum_{s=1}^{n_s}\sigma^s_v$
\State $\hat\beta_{uv}\leftarrow\hat\beta_{uv}+\eta(i)(\mu_{uv}-\hat\mu_{uv})$ for all $(u,v)\in E$
\State $\hat\gamma_v\leftarrow\hat\gamma_v+\eta(i)(\mu_{v}-\hat\mu_{v})$ for all $v\in V$
\State $i\leftarrow i+1$
\EndWhile
\end{algorithmic}
\end{algorithm}

In Algorithm \ref{alg:cd}, 
we denote by $\texttt{MC}(\sigma,k)$ a state of the Ising model generated from running $k$ iterations of the Markov chain $\texttt{MC}(\cdot,\cdot)$ starting from the state $\sigma$, and
$\mu_{uv}=E[\sigma_u\sigma_v], \mu_v=E[\sigma_v]$
are the empirical marginals from the data set.
In addition, $n_i$, $\eta(\cdot)$, $k$, $n_s$ denote
the number of gradient updates, the step size (or learning rate),
the number of samples and the number of MC updates, respectively,
which are hyper parameters of the CD algorithm.
Since the Swendsen-Wang chain takes $O(|V|)$ times longer per each iteration,
we use $k=1$ and $k=|V|$
for the Swendsen-Wang chain and the Gibbs chain,
respectively, for fair comparisons.

\paragraph{Experimental results.}
In our experiments, we
observe that the Swendsen-Wang chain outperforms the Gibbs chain, where the gap is significant as $\beta_{uv}$ or the graph size are large.
Our experimental results on real world graphs are reported in 
Figure \ref{fig:fbp}, \ref{fig:UCIp}, \ref{fig:fbm}, \ref{fig:UCIm}, which
show that the Swendsen-Wang chain outperforms the Gibbs chain for both errors on $[\gamma_v]$ and $[\beta_{uv}]$.
One can observe that the error difference between the Swendsen-Wang chain and the Gibbs chain grows as interaction strength $[\beta_{uv}]$ increases, which
is because the Gibbs chain mixes slower at low temperatures. 
Furthermore, the variance of errors of the Gibbs chain increases while the variance of the Swendsen-Wang chain remains small.
Our experimental results using synthetic graphs are similar to 
those of the real world social graphs.
Figures \ref{fig:synp}, \ref{fig:synm} show that the Swendsen-Wang chain also
outperforms the Gibbs chain as the graph size grows.
We observe that the external field error of the Gibbs chain
increases as the graph size increases while that of the Swendsen-Wang chain remains small.

\section{Conclusion}\label{conc}
Despite the rich expressive power of graphical models, the associated expensive inference tasks have
been the key bottleneck for their large-scale applications. In this paper, we prove that the
Swendsen-Wang sampler mixes fast for stochastic partitioned attractive GMs, where our mixing bound
$O(\log n)$ is quite practical for large-scale instances. We believe that our findings have further potential applications even for general (not necessarily, attractive) GMs if one can approximate 
a non-attractive model by an attractive one;
it was recently shown that
any binary pairwise GM can be approximated by an attractive binary pairwise GM
on the so-called $2$-cover graph having two partitions \cite{ruozzi2014making}.
For example, one can use the Swendsen-Wang sampler to learn parameters of 
the $2$-cover attractive model and further fine-tune them using the Gibbs sampler
on the original model.
This is an interesting future research direction.

\bibliographystyle{plain}
\bibliography{reference}

\begin{thebibliography}{10}

\bibitem{albeverio1992ideas}
Sergio Albeverio.
\newblock {\em Ideas and Methods in Mathematical Analysis, Stochastics, and
  Applications: Volume 1: In Memory of Raphael H{\o}egh-Krohn}, volume~1.
\newblock Cambridge University Press, 1992.

\bibitem{baxter2007exactly}
Rodney~J Baxter.
\newblock {\em Exactly solved models in statistical mechanics}.
\newblock Courier Corporation, 2007.

\bibitem{BCFKTVV}
Christian Borgs, Jennifer~T Chayes, Alan~M Frieze, Jeong~Han Kim, Prasad
  Tetali, Eric Vigoda, and Van~H Vu.
\newblock Torpid mixing of some {M}onte {C}arlo {M}arkov {C}hain algorithms in
  statistical physics.
\newblock In {\em 40th Annual Symposium on Foundations of Computer Science,
  (FOCS)}, pages 218--229. IEEE, 1999.

\bibitem{BCT}
Christian Borgs, Jennifer~T Chayes, and Prasad Tetali.
\newblock Tight bounds for mixing of the {S}wendsen--{W}ang algorithm at the
  {P}otts transition point.
\newblock {\em Probability Theory and Related Fields}, 152(3):509--557, 2010.

\bibitem{CF}
Colin Cooper and Alan~M Frieze.
\newblock Mixing properties of the {S}wendsen-{W}ang process on classes of
  graphs.
\newblock {\em Random Structures \& Algorithms}, 15(3-4):242--261, 1999.

\bibitem{cooper1990computational}
Gregory~F. Cooper.
\newblock The computational complexity of probabilistic inference using
  {B}ayesian belief networks.
\newblock {\em Artificial intelligence}, 42(2-3):393--405, 1990.

\bibitem{cuff2012glauber}
Paul Cuff, Jian Ding, Oren Louidor, Eyal Lubetzky, Yuval Peres, and Allan Sly.
\newblock Glauber dynamics for the mean-field {P}otts model.
\newblock {\em Journal of Statistical Physics}, 149(3):432--477, 2012.

\bibitem{downs2000nonnegative}
Oliver~B Downs, David~JC MacKay, and Daniel~D Lee.
\newblock The nonnegative {B}oltzmann machine.
\newblock In {\em Advances in Neural Information Processing Systems (NIPS)},
  pages 428--434, 2000.

\bibitem{edwards1988generalization}
Robert~G Edwards and Alan~D Sokal.
\newblock Generalization of the {F}ortuin-{K}asteleyn-{S}wendsen-{W}ang
  representation and {M}onte {C}arlo algorithm.
\newblock {\em Physical Review D}, 38(6):2009, 1988.

\bibitem{galanis2015swendsen}
Andreas Galanis, Daniel Stefankovic, and Eric Vigoda.
\newblock Swendsen-{W}ang algorithm on the {M}ean-{F}ield {P}otts model.
\newblock In {\em Proceedings of RANDOM}, pages 815--828, 2015.

\bibitem{GSVY}
Andreas Galanis, Daniel Stefankovic, Eric Vigoda, and Linji Yang.
\newblock Ferromagnetic {P}otts model: Refined {\#}{BIS}-hardness and related
  results.
\newblock In {\em Proceedings of RANDOM}, pages 677--691, 2014.

\bibitem{gallager1962low}
Robert Gallager.
\newblock Low-density parity-check codes.
\newblock {\em IRE Transactions on Information Theory}, 8(1):21--28, 1962.

\bibitem{gerschcnfeld2007reconstruction}
A.~Gerschenfeld and A.~Montanari.
\newblock Reconstruction for models on random graphs.
\newblock In {\em 48th Annual Symposium on Foundations of Computer Science
  (FOCS)}, pages 194--204. IEEE, 2007.

\bibitem{girvan2002community}
Michelle Girvan and Mark~EJ Newman.
\newblock Community structure in social and biological networks.
\newblock {\em Proceedings of the National Academy of Sciences},
  99(12):7821--7826, 2002.

\bibitem{goldberg2007complexity}
Leslie~A Goldberg and Mark Jerrum.
\newblock The complexity of ferromagnetic {I}sing with local fields.
\newblock {\em Combinatorics, Probability and Computing}, 16(01):43--61, 2007.

\bibitem{GJ}
Vivek~K Gore and Mark~R Jerrum.
\newblock The {S}wendsen--{W}ang process does not always mix rapidly.
\newblock {\em Journal of Statistical Physics}, 97(1):67--86, 1999.

\bibitem{guo2017random}
Heng Guo and Mark Jerrum.
\newblock Random cluster dynamics for the {I}sing model is rapidly mixing.
\newblock In {\em Proceedings of the Twenty-Eighth Annual ACM-SIAM Symposium on
  Discrete Algorithms}, pages 1818--1827. SIAM, 2017.

\bibitem{hinton2012practical}
Geoffrey~E Hinton.
\newblock A practical guide to training restricted {B}oltzmann machines.
\newblock In {\em Neural Networks: Tricks of the Trade}, pages 599--619.
  Springer, 2012.

\bibitem{holland1983stochastic}
Paul~W Holland, Kathryn~Blackmond Laskey, and Samuel Leinhardt.
\newblock Stochastic blockmodels: First steps.
\newblock {\em Social networks}, 5(2):109--137, 1983.

\bibitem{jerrum1993polynomial}
Mark Jerrum and Alistair Sinclair.
\newblock Polynomial-time approximation algorithms for the {I}sing model.
\newblock {\em SIAM Journal on computing}, 22(5):1087--1116, 1993.

\bibitem{johansson2012giant}
Tony Johansson.
\newblock The giant component of the random bipartite graph.
\newblock Master's thesis, Chalmers University of Technology, 2012.

\bibitem{jordan1998learning}
Michael~I. Jordan.
\newblock {\em Learning in Graphical Models:[proceedings of the NATO Advanced
  Study Institute...: Ettore Mairona Center, Erice, Italy, September 27-October
  7, 1996]}, volume~89.
\newblock Springer Science \& Business Media, 1998.

\bibitem{lee1999learning}
Daniel~D Lee and H~Sebastian Seung.
\newblock Learning the parts of objects by non-negative matrix factorization.
\newblock {\em Nature}, 401(6755):788--791, 1999.

\bibitem{levin2010glauber}
David~A Levin, Malwina~J Luczak, and Yuval Peres.
\newblock Glauber dynamics for the mean-field {I}sing model: cut-off, critical
  power law, and metastability.
\newblock {\em Probability Theory and Related Fields}, 146(1-2):223--265, 2010.

\bibitem{levin2009markov}
David~A Levin, Yuval Peres, and Elizabeth~L Wilmer.
\newblock {\em Markov chains and mixing times}.
\newblock American Mathematical Soc., 2009.

\bibitem{li2015bone}
Hui Li, Xiaoyi Li, Xiaowei Jia, Murali Ramanathan, and Aidong Zhang.
\newblock Bone disease prediction and phenotype discovery using feature
  representation over electronic health records.
\newblock In {\em Proceedings of the 6th ACM Conference on Bioinformatics,
  Computational Biology and Health Informatics}, pages 212--221. ACM, 2015.

\bibitem{long2014power}
Yun Long, Asaf Nachmias, Weiyang Ning, and Yuval Peres.
\newblock A power law of order 1/4 for critical mean field {S}wendsen-{W}ang
  dynamics.
\newblock {\em Memoirs of the AMS}, 232(1092), 2014.

\bibitem{mcauley2012learning}
Julian~J McAuley and Jure Leskovec.
\newblock Learning to discover social circles in ego networks.
\newblock In {\em Advances in Neural Information Processing Systems (NIPS)},
  pages 548--556, 2012.

\bibitem{montanari2010spread}
Andrea Montanari and Amin Saberi.
\newblock The spread of innovations in social networks.
\newblock {\em Proceedings of the National Academy of Sciences},
  107(47):20196--20201, 2010.

\bibitem{mooij2007sufficient}
Joris~M Mooij and Hilbert~J Kappen.
\newblock Sufficient conditions for convergence of the sum--product algorithm.
\newblock {\em IEEE Transactions on Information Theory}, 53(12):4422--4437,
  2007.

\bibitem{mossel2009rapid}
Elchanan Mossel and Allan Sly.
\newblock Rapid mixing of gibbs sampling on graphs that are sparse on average.
\newblock {\em Random Structures \& Algorithms}, 35(2):250--270, 2009.

\bibitem{mossel2013exact}
Elchanan Mossel and Allan Sly.
\newblock Exact thresholds for {I}sing--{G}ibbs samplers on general graphs.
\newblock {\em The Annals of Probability}, 41(1):294--328, 2013.

\bibitem{nguyen2013learning}
Tu~Dinh Nguyen, Truyen Tran, Dinh~Q Phung, and Svetha Venkatesh.
\newblock Learning parts-based representations with nonnegative restricted
  {B}oltzmann machine.
\newblock In {\em Asian Conference on Machine Learning (ACML)}, pages 133--148,
  2013.

\bibitem{nobuyuki2006convergence}
Taga Nobuyuki and Mase Shigeru.
\newblock On the convergence of loopy belief propagation algorithm for
  different update rules.
\newblock {\em IEICE transactions on fundamentals of electronics,
  communications and computer sciences}, 89(2):575--582, 2006.

\bibitem{ok2014maximizing}
Jungseul Ok, Youngmi Jin, Jinwoo Shin, and Yung Yi.
\newblock On maximizing diffusion speed in social networks: impact of random
  seeding and clustering.
\newblock In {\em ACM SIGMETRICS Performance Evaluation Review}, volume~42,
  pages 301--313. ACM, 2014.

\bibitem{opsahl2009clustering}
Tore Opsahl and Pietro Panzarasa.
\newblock Clustering in weighted networks.
\newblock {\em Social networks}, 31(2):155--163, 2009.

\bibitem{pearl2014probabilistic}
Judea Pearl.
\newblock {\em Probabilistic reasoning in intelligent systems: networks of
  plausible inference}.
\newblock Morgan Kaufmann, 2014.

\bibitem{roth1996hardness}
Dan Roth.
\newblock On the hardness of approximate reasoning.
\newblock {\em Artificial Intelligence}, 82(1):273--302, 1996.

\bibitem{ruozzi2014making}
Nicholas Ruozzi and Tony Jebara.
\newblock Making pairwise binary graphical models attractive.
\newblock In {\em Advances in Neural Information Processing Systems (NIPS)},
  pages 1772--1780, 2014.

\bibitem{swendsen1987nonuniversal}
Robert~H Swendsen and Jian-Sheng Wang.
\newblock Nonuniversal critical dynamics in {M}onte {C}arlo simulations.
\newblock {\em Physical Review Letters}, 58(2):86, 1987.

\bibitem{tran2015learning}
Truyen Tran, Tu~Dinh Nguyen, Dinh Phung, and Svetha Venkatesh.
\newblock Learning vector representation of medical objects via {EMR}-driven
  nonnegative restricted {B}oltzmann machines ({eNRBM}).
\newblock {\em Journal of biomedical informatics}, 54:96--105, 2015.

\bibitem{ullrich2012rapid}
Mario Ullrich.
\newblock {\em Rapid mixing of {S}wendsen-{W}ang dynamics in two dimensions}.
\newblock PhD thesis, Universit\"{a}t Jena, Germany, 2012.
\newblock arXiv preprint arXiv:1212.4908.

\bibitem{wainwright2008graphical}
Martin~J Wainwright and Michael~I Jordan.
\newblock Graphical models, exponential families, and variational inference.
\newblock {\em Foundations and Trends{\textregistered} in Machine Learning},
  1(1-2):1--305, 2008.

\bibitem{weller2013bethe}
Adrian Weller and Tony Jebara.
\newblock Bethe bounds and approximating the global optimum.
\newblock In {\em International Conference on Artificial Intelligence and
  Statistics (AISTATS)}, pages 618--631, 2013.

\bibitem{weller2014approximating}
Adrian Weller and Tony Jebara.
\newblock Approximating the {B}ethe partition function.
\newblock In {\em Uncertainty in Artificial Intelligence (UAI)}, 2014.

\end{thebibliography}

\appendix

\section{Proofs of Key Lemmas for Theorem \ref{thm:main2}}\label{pflem:main2}
\subsection{Proof of Lemma \ref{lem:connected2}}\label{pflem:connected2}
Let $G=(V,E)\sim G(n,p)$ and $U$ be an arbitrary subset of $V$ such that $|U|\geq n/10$. Further, let $S\subset U$ be a set such that $|S|,|U\backslash S|\geq M^2$, where recall that $M$ is a constant satisfying $M\geq 100/p$. Note that $\mathrm{cut}_{G[U]}(S)$ is just the number of edges between the sets $S$ and $U\backslash S$ and thus 
\[E[\mathrm{cut}_{G[U]}(S)]=p\cdot |S|\cdot|U\backslash S|\geq M^2(n/10-M^2)p\geq 9M n.\]
Thus, by the Chernoff bound, we obtain that the probability that $\mathrm{cut}_{G[U]}(S)<Mn$ is at most $e^{-Mn}\leq e^{-10n}$. There are at most $2^n$ ways to choose the set $U$ and at most $2^n$ ways to choose the set $S\subseteq U$. Thus, the lemma follows by taking a union bound over all possible choices of the sets $U,S$. 

This completes the proof of the lemma.

\subsection{Proof of Lemma \ref{lem:connected4}}\label{pflem:connected4}
Let $G=(V_L,V_R,E)\sim G(n,kn,p)$, and $U_L\subseteq V_L, U_R\subseteq V_R$ be arbitrary subsets of vertices with $|U_L|\geq n/10, |U_R|\geq k n/10$. Further, let $S_L\subseteq U_L$, $S_R\subseteq U_R$ be subsets such that $|S_L|,|U_L\backslash S_L|\geq M^2$ and $|S_R|,|U_R\backslash S_R|\geq M^2$, where recall that $M$ is a constant satisfying $M\geq 100/(pk)$. For convenience, set $U:=U_L\cup U_R$ and $S:=S_L\cup S_R$. We are interested in $\mathrm{cut}_{G[U]}(S)$ which is  the number of edges between the sets $S$ and $U\backslash S$. Thus,
\begin{align*}
E[\mathrm{cut}_{G[U]}(S)]&=p(|S_L|\cdot|U_R\backslash S_R|+|S_R|\cdot|U_L\backslash S_L|)\geq 2p\big(|S_L|\cdot|U_R\backslash S_R|\cdot|S_R|\cdot|U_L\backslash S_L|\big)^{1/2}\\
&\geq 2p\big( M^4(kn/10-M^2)(n/10-M^2)\big)^{1/2}\geq 9M n.
\end{align*}
Thus, by the Chernoff bound, we obtain that the probability that $\mathrm{cut}_{G[U]}(S)<Mn$ is at most $e^{-Mn}\leq e^{-10n}$. Since $k\in (0,1]$, there are at most $2^{n(k+1)}\leq 2^{2n}$ ways to choose the sets $U_L,U_R$ and at most $2^{n(k+1)}\leq2^{2n}$ ways to choose the sets $S_L,S_R$. Thus, the lemma follows by taking a union bound over all possible choices of the sets $U_L,U_R,S_L,S_R$.

This completes the proof of the lemma.

\section{Proofs of Key Lemmas for Theorem \ref{thm:main}}\label{sec:rapidpf}
In this section, we provide the proofs of Lemmas \ref{lem:p1close1}-\ref{lem:p2close}.
To this end, we first introduce a two-dimensional 
function $F$ which captures the behaviour of the Swendsen-Wang dynamics and introduce the connection between $F$ and the Ising model.
Throughout this section, we only consider the Ising model on the complete bipartite graph of size $(n,kn)$ with 
$$\beta_{uv}=-\frac{1}{2}\log\left(1-\frac{B}{n\sqrt{k}}\right),~\gamma_v=0~\quad\text{for all }(u,v)\in E,~v\in V,$$
where $B>0$ is some constant.

\subsection{Simplified Swendsen-Wang}\label{sec:simplesw}
We first introduce the following result \cite{johansson2012giant} about the giant component of the bipartite Erd\H{o}s-R\'enyi random graph.
\begin{lemma}[\protect{\cite[Theorem 6, Theorem  12]{johansson2012giant}}]\label{thm:giant}
Consider the bipartite Erd\H{o}s-R\'enyi random graph $$G=(V_L,V_R,E)=G(n,kn,p)$$
where 
$p=\frac{B}{n\sqrt{k}}$ for some constant $B>0$ and $k\geq1$ is some constant.
Then, the following statements hold a.a.s.
\begin{itemize}
\item[(a)] For $B<1$, the largest (connected) component of $G$ has size $O(\log n)$.
\item[(b)] For $B>1$, the following event happens: $G$ has a unique ``giant'' component which consists of $\theta_R kn(1+o(1))$ vertices in $V_R$ and $\theta_L n(1+o(1))$ vertices in $V_L$
where $\theta_R$ is the unique positive solution of
\begin{equation}\label{eq:theta1}
\theta_R+\exp\left(\frac{B}{\sqrt{k}}\left(\exp\left(-B\sqrt{k}\theta_R\right)-1\right)\right)=1
\end{equation}
and $\theta_L$ is the unique positive solution of
\begin{equation}\label{eq:theta2}
\theta_L+\exp\left(B\sqrt{k}\left(\exp\left(-\frac{B\theta_L}{\sqrt{k}}\right)-1\right)\right)=1.
\end{equation}
The second largest component of $G$ has size $O(\log^2 n)$.
\item[(c)] For $B=1$, the largest component of $G$ has size $o(n)$.
\end{itemize}
\end{lemma}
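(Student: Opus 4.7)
The plan is to prove this giant-component theorem for the bipartite random graph $G(n,kn,p)$ by adapting the classical exploration-process machinery for $G(n,p)$ to the two-type setting. First I would identify the correct branching-process approximation: with $p=B/(n\sqrt{k})$, a vertex in $V_L$ has a Binomial$(kn,p)$ number of $R$-neighbors, which converges to Poisson$(B\sqrt{k})$, while a vertex in $V_R$ has a Binomial$(n,p)$ number of $L$-neighbors, converging to Poisson$(B/\sqrt{k})$. Thus the local structure of a BFS exploration is captured by a two-type Poisson Galton-Watson tree whose mean matrix has off-diagonal entries $B\sqrt{k}$ and $B/\sqrt{k}$ and whose Perron-Frobenius eigenvalue equals $\sqrt{B\sqrt{k}\cdot B/\sqrt{k}}=B$. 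The three regimes $B<1$, $B=1$, $B>1$ correspond respectively to subcritical, critical, and supercritical branching.

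For part (a), the subcritical two-type branching process has total-progeny distribution with exponentially decaying tail $\exp(-\Omega(s))$. Coupling the first $O(\log n)$ steps of BFS from any fixed vertex with this branching process (which is valid because the number of revealed vertices remains negligible compared to $n$), one obtains that the probability a fixed vertex lies in a component of size exceeding $C\log n$ is at most $n^{-2}$ for $C=C(B)$ sufficiently large. A union bound over all $(k+1)n$ starting vertices then closes part~(a).

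For part (b), I would first derive the fixed-point equations $\theta_L=1-\exp(-B\sqrt{k}\,\theta_R)$ and $\theta_R=1-\exp(-(B/\sqrt{k})\,\theta_L)$ as the survival probabilities, via the standard generating-function identity for a Galton-Watson tree with Poisson offspring; substituting one into the other recovers \eqref{eq:theta1} and \eqref{eq:theta2}, and a convexity argument on the iteration map establishes uniqueness of the positive fixed point when $B>1$. The bulk of the work is then to show that the set of vertices lying in a ``large'' component (say, of size at least $n^{2/3}$) has density $\theta_L$ in $V_L$ and $\theta_R$ in $V_R$ up to $o(1)$, which I would establish via a conditional second-moment computation on the exploration together with Azuma-type concentration on the BFS queue size. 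Uniqueness of the giant component I would handle by the standard sprinkling argument: reveal the edges in two layers $G_1\cup G_2$, where $G_2$ is a very sparse bipartite random graph; $G_1$ already contains all large components by the previous step, and $G_2$ merges them via a Chernoff bound on the number of $G_2$-edges between any two sets of size $\geq n^{2/3}$. Finally, the $O(\log^2 n)$ bound on the second-largest component follows from a duality principle: after removing the giant, the residual graph has effective branching mean strictly less than $1$, so the exponential-tail argument of part~(a) applies, with the extra $\log$ factor arising from the $o(n)$ fluctuation of the residual densities around their limiting values.

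Part (c) at criticality follows from a perturbation argument: for any $\varepsilon>0$, the probability that a fixed vertex lies in a component of size $\geq \varepsilon n$ must vanish, since otherwise a small downward perturbation of $p$ would still produce a linear-size component, contradicting part~(a). Taking $\varepsilon\to 0$ slowly enough and summing yields $o(n)$ for the largest component. The main obstacle throughout is the sprinkling and duality step in part~(b): the $O(\log^2 n)$ bound requires careful tracking of the two different Poisson offspring means, since an exploration alternating between $V_L$ and $V_R$ has a bias that depends on which side one currently explores. Because $k\geq 1$ is a fixed constant, the uniformity issues can be absorbed into constants, but verifying that the residual graph after removing the giant is genuinely subcritical on both sides of the bipartition, and that the two-sided Chernoff estimates in the sprinkling layer remain uniform, are the steps most likely to be technically delicate.
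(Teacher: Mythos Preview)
The paper does not prove this lemma at all: it is stated as a citation of an external result (Johansson 2012, Theorems 6 and 12), and is used as a black box throughout Appendix~B. So there is no ``paper's own proof'' to compare your proposal against.

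That said, your sketch is a reasonable outline of how such a two-type giant-component theorem is established in the literature, and the key ideas you identify---the two-type Poisson branching process with mean matrix eigenvalue $B$, survival probabilities satisfying the coupled fixed-point equations, exploration plus second moment for the size of the giant, sprinkling for uniqueness, and duality for the second-largest component---are the standard ingredients. One comment: your argument for part~(c) via ``downward perturbation contradicting part~(a)'' is not quite right as stated, since part~(a) only gives $O(\log n)$ for $B<1$, and a component of size $\varepsilon n$ at $B=1$ would not directly contradict anything about a nearby subcritical model without a separate monotone-coupling argument showing that lowering $p$ can only shrink components. The cleaner route at criticality is a direct martingale/random-walk analysis of the exploration queue, giving the $n^{2/3}$-scale bound, but for the weak $o(n)$ conclusion your coupling idea can be salvaged by noting that $G(n,kn,p)$ with $p=1/(n\sqrt{k})$ is stochastically dominated by $G(n,kn,(1+\varepsilon)/(n\sqrt{k}))$, whose giant has size $(\theta_L(\varepsilon)+k\theta_R(\varepsilon))n(1+o(1))\to 0$ as $\varepsilon\to 0$.
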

By simple calculations, one can observe that \eqref{eq:theta1}, \eqref{eq:theta2} reduce to 
\begin{equation}\label{eq:theta4}
\exp(-B\sqrt{k}\theta_R)=1-\theta_L\qquad\qquad\exp\left(-\frac{B}{\sqrt{k}}\theta_L\right)=1-\theta_R.
\end{equation}
Now, consider the Ising model on the complete bipartite graph $G=(V_L,V_R,E)$ of size $(n,kn)$.
We briefly explain what happens in a single iteration of the Swendsen-Wang chain on $G$ for each step asymptotically.
Given a spin configuration $\sigma$ with $\alpha(\sigma)=(\alpha_L,\alpha_R)$, 
the step 2 of the Swendsen-Wang dynamics starting from $\sigma$ is equivalent to sampling two bipartite Erd\H{o}s-R\'enyi random graphs $G(\alpha_Ln,\alpha_Rkn,p)$, $G((1-\alpha_L)n,(1-\alpha_R)kn,p)$ where $p=\frac{B}{n\sqrt{k}}$.

Suppose $(1-\alpha_L)(1-\alpha_R)B\le1$ and $\alpha_L\alpha_RB>1$. 
Then, by Lemma \ref{thm:giant}, there exists a single giant component of size $(\theta_L\alpha_Ln,\theta_R\alpha_Rkn)$ where $(\theta_L,\theta_R)$ is a unique positive solution of 
\begin{equation}\label{eq:thetaF}
    \exp(-B\sqrt{k}\alpha_R\theta_R)=1-\theta_L\qquad\qquad
    \exp\left(-\frac{B}{\sqrt{k}}\alpha_L\theta_L\right)=1-\theta_R,
\end{equation}
and the other `small' components have size $o(n)$ a.a.s.\ after step 2 of the Swendsen-Wang dynamics.
One can notice that \eqref{eq:thetaF} is equivalent to \eqref{eq:theta4} by substituting $n\leftarrow \alpha_Ln$, $k\leftarrow\frac{k\alpha_R}{\alpha_L}$ and $B\leftarrow \sqrt{\alpha_L\alpha_R}B$.
At step 3 of the Swendsen-Wang dynamics, asymptotically a half of the small components, which have size $\left((1-\theta_L\alpha_L)n/2,(1-\theta_R\alpha_R)kn/2\right)$, receive same spin with the giant component.
Now suppose $(1-\alpha_L)(1-\alpha_R)B,\alpha_L\alpha_RB\le1$.
Then after the step 2 of the Swendsen-Wang dynamics, every connected component has size $O(\log n)$. 
After step 3 of the Swendsen-Wang dynamics, as each spin class asymptotically have a half of the vertices of $V_L,V_R$, 
it 
outputs a phase $(1/2,1/2)$ asymptotically.
We ignore the case $(1-\alpha_L)(1-\alpha_R)B>1$ for now, i.e. we ignore the giant component of the smaller spin class,
which will be handled in the proof of Lemma \ref{lem:p1close1}.
Under these intuitions,
one can expect that
the following function $F$ captures the behavior of the Swendsen-Wang chain (ignoring the giant component of the smaller spin class) on the complete bipartite graph.
\begin{equation}\label{eq:simplifiedSW}
F(\alpha_L,\alpha_R):=(F_L,F_R)=\left(\frac{1}{2}\left(1+\theta_L \alpha_L\right),\frac{1}{2}\left(1+\theta_R \alpha_R\right)\right)
\end{equation}
where $$(\theta_L,\theta_R)=
\begin{cases}
\qquad\qquad (0,0)& \mbox{for}~ \sqrt{\alpha_L\alpha_R} B\le 1\\
\mbox{the unique solution of \eqref{eq:thetaF}}
&\mbox{for}~\sqrt{\alpha_L\alpha_R} B> 1
\end{cases}.$$
We note that $F$ is continuous on $[0,1]^2$.
Formally, one can prove 
the following lemma about the relation between the function $F$ and the Swendsen-Wang chain; we omit its proof since it is elementary under the above intuitions.
\begin{lemma}\label{lem:SWF}
Let $\{X_t:t=0,1,\dots\}$ be the Swendsen-Wang chain on
a complete bipartite graph of size $(n,kn)$ with
any constants $B\ne2$ and starting phase $\alpha(X_0)=(\alpha_L,\alpha_R)$.
If $\alpha_L\alpha_RB\ne 1$ and $(1-\alpha_L)(1-\alpha_R)B\le1$, i.e., the smaller spin class is subcritical, then
$\alpha(X_1)=F(\alpha_L,\alpha_R)+(o(1),o(1))$ a.a.s.
\end{lemma}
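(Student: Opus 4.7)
The plan is to track the state of the chain through the three stages of a single Swendsen-Wang iteration, applying Lemma \ref{thm:giant} at the percolation stage and a second-moment concentration argument at the spin-assignment stage. Starting from $X_0$ with phase $(\alpha_L,\alpha_R)$, Step~1 would select the monochromatic edges, which form two complete bipartite subgraphs: one on the majority-spin vertices (of sizes $(\alpha_L n,\alpha_R kn)$) and one on the minority-spin vertices (of sizes $((1-\alpha_L)n,(1-\alpha_R)kn)$). Step~2 would then independently retain each such edge with probability $p = 1-e^{-2\beta_{uv}} = B/(n\sqrt{k})$, producing two independent bipartite Erd\H{o}s-R\'enyi random graphs $H^+ \sim G(\alpha_L n,\alpha_R kn,p)$ and $H^- \sim G((1-\alpha_L)n,(1-\alpha_R)kn,p)$.

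I would then apply Lemma \ref{thm:giant} to each subgraph after rescaling. Writing $H^+$ as $G(n',k'n',p)$ with $n'=\alpha_L n$ and $k'=\alpha_R k/\alpha_L$, the effective giant-component parameter becomes $B' = p\,n'\sqrt{k'} = \sqrt{\alpha_L\alpha_R}\,B$; substituting $B\to B'$ and $k\to k'$ in the fixed-point system \eqref{eq:theta4} recovers exactly \eqref{eq:thetaF}. Thus when $\sqrt{\alpha_L\alpha_R}\,B>1$, Lemma \ref{thm:giant}(b) guarantees a unique giant component of $H^+$ containing $\theta_L\alpha_L n\,(1+o(1))$ vertices of $V_L$ and $\theta_R\alpha_R kn\,(1+o(1))$ of $V_R$, with every other component of size $O(\log^2 n)=o(n)$; when $\sqrt{\alpha_L\alpha_R}\,B<1$, Lemma \ref{thm:giant}(a) gives components of size $O(\log n)$ and is consistent with the $F$-definition via $(\theta_L,\theta_R)=(0,0)$. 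The non-criticality hypothesis on the larger spin class excludes the threshold window $\sqrt{\alpha_L\alpha_R}\,B=1$. The analogous effective parameter for $H^-$ is $\sqrt{(1-\alpha_L)(1-\alpha_R)}\,B$, which by the subcritical hypothesis is at most $1$, so all components of $H^-$ are likewise $o(n)$.

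For Step~3, since $\gamma_v=0$, each connected component independently receives spin $\pm 1$ with probability $1/2$. I would condition on the giant component (if one exists) receiving $+$; the opposite case is symmetric since $\alpha(\cdot)$ is read off the majority class. Let $Y_L = \sum_i c_i \xi_i$, where $c_i$ is the $V_L$-size of the $i$-th \emph{non-giant} component (drawn from both $H^+$ and $H^-$) and the $\xi_i\sim\mathrm{Bernoulli}(1/2)$ are independent across all components. Then $E[Y_L] = (1-\theta_L\alpha_L)n/2$ and $\mathrm{Var}(Y_L) \le \tfrac{1}{4}(\max_i c_i)(\sum_i c_i) = O(\log^2 n)\cdot O(n) = o(n^2)$, so Chebyshev's inequality yields $Y_L = (1-\theta_L\alpha_L)n/2 + o(n)$ a.a.s. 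Hence the number of $+$ vertices in $V_L$ is $\theta_L\alpha_L n + Y_L = (1+\theta_L\alpha_L)n/2 + o(n)$, matching $F_L$ up to $o(1)$; the identical computation for $V_R$ gives $F_R$ up to $o(1)$. The only delicate points are the rescaling that reduces each monochromatic subgraph to the hypothesis of Lemma \ref{thm:giant} and the second-moment bound on $Y_L$ — no idea beyond standard Erd\H{o}s-R\'enyi machinery is required, which is why the paper deems the proof elementary.
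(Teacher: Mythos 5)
Your proposal is correct and follows essentially the same route the paper sketches in Section \ref{sec:simplesw} (the paper omits the formal proof as ``elementary''): view step~2 as two independent bipartite Erd\H{o}s--R\'enyi graphs, apply Lemma \ref{thm:giant} after the rescaling $n\leftarrow\alpha_L n$, $k\leftarrow k\alpha_R/\alpha_L$, $B\leftarrow\sqrt{\alpha_L\alpha_R}\,B$, and then argue that roughly half of the small components join the giant's spin in step~3. Your Chebyshev argument supplies exactly the concentration detail the paper leaves implicit, so the two arguments coincide in substance.
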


From the definition of $F$, $(\alpha_L,\alpha_R)$ is a fixed point of $F$ if and only if $\alpha_L=\frac{1}{2}+\frac{1}{2}\theta_L \alpha_L$, $\alpha_R=\frac{1}{2}+\frac{1}{2}\theta_R \alpha_R$,
i.e., $\theta_L=\frac{2\alpha_L-1}{\alpha_L},~\theta_R=\frac{2\alpha_R-1}{\alpha_R}$.
Substituting this relation into \eqref{eq:thetaF} yields that every fixed point of $F$ must satisfy the following
equations
\begin{equation}\label{eq:fixedF}
\exp\left(B\sqrt{k}(1-2\alpha_R)\right)=\frac{1-\alpha_L}{\alpha_L}\qquad\qquad\exp\left(\frac{B}{\sqrt{k}}(1-2\alpha_L)\right)=\frac{1-\alpha_R}{\alpha_R}.
\end{equation}
One expects that the Swendsen-Wang chain, starting from a phase which corresponds to a fixed point of $F$, will stay around the fixed point.
Now we introduce two lemmas about the fixed points of $F$. Lemma \ref{lem:attractive} shows that $F$ has a unique fixed point which is Jacobian attractive.
Further, Lemma \ref{lem:convergence} guarantees that for any starting point $(\alpha_L,\alpha_R)$, 
$$F^{(t)}(\alpha_L,\alpha_R):=\underbrace{F\circ\dots\circ F}_t(\alpha_L,\alpha_R)$$
converges to the fixed point of $F$ as $t\rightarrow\infty$.
\begin{lemma}\label{lem:attractive}
The following hold: 
\begin{itemize}
\item[1.]For constant $B<2$, $(1/2,1/2)$ is the unique fixed point of $F$ and it is Jacobian attractive.
\item[2.]For constant $B>2$, the solution $\alpha_L^*,\alpha_R^*\in(1/2,1]$ of \eqref{eq:fixedF} is the unique fixed point of $F$ and it is Jacobian attractive.
\end{itemize}
\end{lemma}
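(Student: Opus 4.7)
The plan is to reduce the fixed-point equations \eqref{eq:fixedF} to a one-dimensional strictly concave map. Under the substitution $x:=2\alpha_L-1$ and $y:=2\alpha_R-1$, taking logarithms of \eqref{eq:fixedF} gives the symmetric system $x = \tanh(\tfrac{B\sqrt{k}}{2}y)$ and $y = \tanh(\tfrac{B}{2\sqrt{k}}x)$; composing yields the single equation $x = h(x)$ with $h(x) := \tanh\!\bigl(\tfrac{B\sqrt{k}}{2}\tanh(\tfrac{B}{2\sqrt{k}}x)\bigr)$. I would verify that $h$ is odd, strictly increasing, and strictly concave on $[0,1]$ by a direct sign check of $h''$ (using concavity of $\tanh$ on $[0,\infty)$), with $h'(0) = B^2/4$ and $h(1) < 1$. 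The standard Curie--Weiss dichotomy then handles existence and uniqueness: for $B<2$ the only non-negative fixed point of $h$ is $x=0$, giving $(\alpha_L,\alpha_R) = (1/2,1/2)$; for $B>2$ there is a unique $x^\ast \in (0,1)$, yielding the claimed unique fixed point $(\alpha_L^\ast, \alpha_R^\ast) \in (1/2,1]^2$. In the $B>2$ case, strict concavity additionally forces $h'(x^\ast) < 1$, a fact I will reuse.

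Jacobian attractiveness when $B<2$ is essentially free: at $(1/2,1/2)$ one has $\sqrt{\alpha_L\alpha_R}\,B = B/2 < 1$, so an entire open neighborhood lies in the subcritical region on which $F\equiv(1/2,1/2)$ by definition, and $\mathrm{D}F$ vanishes identically there. For $B>2$ the fixed point lies strictly in the supercritical region (since $\alpha_L^\ast,\alpha_R^\ast>1/2$ forces $\sqrt{\alpha_L^\ast\alpha_R^\ast}\,B > 1$), so $(\theta_L,\theta_R)$ are smooth functions of $(\alpha_L,\alpha_R)$ defined implicitly by \eqref{eq:thetaF}. Differentiating \eqref{eq:thetaF} and solving the resulting $2\times 2$ linear system yields
\[
\mathrm{D}F(\alpha_L^\ast,\alpha_R^\ast) \;=\; \frac{1}{2(1-AC)}\begin{pmatrix}\theta_L & A\theta_R \\ C\theta_L & \theta_R\end{pmatrix},
\]
with $A = B\sqrt{k}(1-\alpha_L^\ast)$, $C = \tfrac{B}{\sqrt{k}}(1-\alpha_R^\ast)$, $\theta_L = (2\alpha_L^\ast-1)/\alpha_L^\ast$, and $\theta_R = (2\alpha_R^\ast-1)/\alpha_R^\ast$.

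The remaining task is to show that this matrix has spectral radius less than one. The bridge to the one-dimensional analysis is the identity
\[
h'(x^\ast) \;=\; \tfrac{B^2}{4}(1-x^{\ast\,2})(1-y^{\ast\,2}) \;=\; 4\alpha_L^\ast\alpha_R^\ast\,AC,
\]
obtained by direct computation from the definitions of $h$, $A$, $C$ together with $1-x^{\ast 2}=4\alpha_L^\ast(1-\alpha_L^\ast)$ and $1-y^{\ast 2}=4\alpha_R^\ast(1-\alpha_R^\ast)$. Together with $h'(x^\ast) < 1$ and $\alpha_L^\ast,\alpha_R^\ast > 1/2$, this gives $AC < 1$ and hence $1-AC > 0$. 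Using the convenient identities $2-\theta_L = 1/\alpha_L^\ast$ and $2-\theta_R = 1/\alpha_R^\ast$, the standard spectral-radius conditions $|\det(\mathrm{D}F)| < 1$ and $|\mathrm{tr}(\mathrm{D}F)| < 1 + \det(\mathrm{D}F)$ reduce respectively to $h'(x^\ast) < 2\alpha_L^\ast + 2\alpha_R^\ast - 1$ and $h'(x^\ast) < 1$, both of which are immediate from strict concavity of $h$.

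The hard part will be the implicit differentiation of \eqref{eq:thetaF} and the algebraic recognition that $h'(x^\ast)$ is precisely the quantity controlling both spectral-radius conditions. Once this identification is made, the entire $2\times 2$ Jacobian analysis collapses onto the one-dimensional Curie--Weiss stability bound $h'(x^\ast) < 1$, which is the only place where the strict concavity of $h$ is used in the stability argument.
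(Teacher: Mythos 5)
Your proposal is correct, and it takes a genuinely different route from the paper's. For uniqueness the paper works in the logit variables \eqref{eq:probstate2}, defines the composed map $g$, argues via \eqref{eq:dg} that $g'(x)=1$ has at most one solution (so \eqref{eq:fixedF} has at most two solutions in the relevant region), and then invokes Brouwer's fixed point theorem for existence; you instead compose the two $\tanh$ relations into the single odd, increasing, strictly concave map $h$, and the Curie--Weiss dichotomy delivers existence, uniqueness and---crucially---the stability bound $h'(x^\ast)<1$ in one stroke. For attractiveness at $B>2$ the paper bounds the top eigenvalue of \eqref{eq:jacobian} by the ratio in \eqref{eq:eigen1} and proves it is below $1$ via its two-variable function $h(\theta_L,\theta_R)$ and Claim \ref{clm:goodbd}; you apply the Schur--Cohn/Jury conditions $|\det|<1$ and $|\mathrm{tr}|<1+\det$ to the Jacobian at the fixed point. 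I checked the algebra: with $A=B\sqrt{k}(1-\alpha_L^\ast)$, $C=\tfrac{B}{\sqrt{k}}(1-\alpha_R^\ast)$ your matrix agrees with \eqref{eq:jacobian} evaluated at the fixed point, the identity $4\alpha_L^\ast\alpha_R^\ast AC=h'(x^\ast)$ holds, and the two Jury conditions do reduce to $h'(x^\ast)<2\alpha_L^\ast+2\alpha_R^\ast-1$ and $h'(x^\ast)<1$ as you claim; moreover $h'(x^\ast)$ is precisely the quantity appearing in \eqref{eq:nsd4}, so both proofs ultimately hinge on the same scalar inequality---your concavity-plus-mean-value-theorem derivation replaces the paper's monotonicity argument for $\frac{1-x^2}{x}\log\frac{1+x}{1-x}$ (cf.\ Lemma \ref{lem:maxprob} and Claim \ref{clm:goodbd}). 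What your packaging buys is that uniqueness and stability come from a single one-dimensional fact and no explicit eigenvalue computation is needed; what the paper's buys is a Jacobian bound valid in terms of $(\theta_L,\theta_R)$ that it reuses elsewhere. Your $B<2$ attractiveness argument ($F$ is constant on a subcritical neighborhood of $(1/2,1/2)$, so $\mathrm{D}F=0$) coincides with the paper's.

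Two small points you should make explicit: (i) restricting to $x,y\ge 0$ is legitimate because $\theta_L,\theta_R\ge 0$ forces $F$ to map into $[1/2,1]^2$, so every fixed point of $F$ has $\alpha_L,\alpha_R\ge 1/2$; and (ii) for $B>2$ you must also rule out $x=0$: the point $(1/2,1/2)$ solves \eqref{eq:fixedF} but is not a fixed point of $F$, since $B/2>1$ makes it supercritical and then $F(1/2,1/2)>(1/2,1/2)$. The paper states this explicitly; without it the claim ``unique fixed point of $F$'' is not fully justified. Both are one-line patches.
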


\begin{lemma}\label{lem:convergence}
For any point $(\alpha_L,\alpha_R)\in[0,1]^2$,  $F^{(t)}(\alpha_L,\alpha_R)$ converges to
the unique fixed point of $F$ as $t\rightarrow\infty$.
\end{lemma}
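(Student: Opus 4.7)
My approach is a monotone-sandwich argument based on coordinate-wise monotonicity of $F$. First I observe that $F$ maps $[0,1]^2$ into $[1/2,1]^2$, because $F_L=(1+\theta_L\alpha_L)/2\ge 1/2$ and similarly $F_R\ge 1/2$. Hence after a single iteration every orbit lies in $[1/2,1]^2$, and it suffices to prove the convergence claim for starting points $(\alpha_L,\alpha_R)\in[1/2,1]^2$. Throughout I write $(\alpha_L^\star,\alpha_R^\star)$ for the unique fixed point of $F$ supplied by Lemma \ref{lem:attractive}; it equals $(1/2,1/2)$ when $B<2$ and lies in $(1/2,1]^2$ when $B>2$.

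The crux is a monotonicity statement: if $(\alpha_L,\alpha_R)\le(\alpha_L',\alpha_R')$ coordinate-wise in $[1/2,1]^2$, then $F(\alpha_L,\alpha_R)\le F(\alpha_L',\alpha_R')$ coordinate-wise. In the subcritical region $B\sqrt{\alpha_L\alpha_R}\le 1$ this is trivial because $F\equiv(1/2,1/2)$. In the supercritical region, eliminating $\theta_R$ from \eqref{eq:thetaF} yields that $\theta_L$ is the unique positive solution of
\[
\theta_L \;=\; 1-\exp\!\left(-B\sqrt{k}\,\alpha_R\bigl(1-e^{-B\alpha_L\theta_L/\sqrt{k}}\bigr)\right),
\]
whose right-hand side is strictly increasing in $\alpha_L$, in $\alpha_R$, and in $\theta_L$. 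A standard comparison for fixed points of monotone maps (or implicit differentiation of \eqref{eq:thetaF}) then gives that $\theta_L$ is non-decreasing in $(\alpha_L,\alpha_R)$, and symmetrically $\theta_R$ is non-decreasing; hence $\theta_L\alpha_L$ and $\theta_R\alpha_R$ are non-decreasing, which yields the monotonicity of $F$. Continuity of $F$ across the critical curve $B\sqrt{\alpha_L\alpha_R}=1$ follows from the fact that the positive branch of \eqref{eq:thetaF} vanishes as this curve is approached from above, matching the constant value $(1/2,1/2)$ on the subcritical side.

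With monotonicity in hand, I iterate $F$ from the two corners of $[1/2,1]^2$. Since $F(1,1)\le(1,1)$ (because $\theta_L,\theta_R\le 1$) and $F(1/2,1/2)\ge(1/2,1/2)$, induction on $t$ via monotonicity shows that $F^{(t)}(1,1)$ is coordinate-wise non-increasing and $F^{(t)}(1/2,1/2)$ is coordinate-wise non-decreasing. Both sequences stay inside $[1/2,1]^2$ and therefore converge; by continuity of $F$, the limits are fixed points, and Lemma \ref{lem:attractive}'s uniqueness forces both limits to equal $(\alpha_L^\star,\alpha_R^\star)$. For an arbitrary starting point $(\alpha_L,\alpha_R)\in[1/2,1]^2$, the inequalities $(1/2,1/2)\le(\alpha_L,\alpha_R)\le(1,1)$ together with monotonicity give the sandwich
\[
F^{(t)}(1/2,1/2)\;\le\;F^{(t)}(\alpha_L,\alpha_R)\;\le\;F^{(t)}(1,1),
\]
and the squeeze theorem delivers $F^{(t)}(\alpha_L,\alpha_R)\to(\alpha_L^\star,\alpha_R^\star)$. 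Combined with the first-iterate reduction to $[1/2,1]^2$, this covers every $(\alpha_L,\alpha_R)\in[0,1]^2$.

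The main obstacle I foresee is the monotonicity step, because $(\theta_L,\theta_R)$ is defined only implicitly and the definition of $F$ switches form at the critical curve $B\sqrt{\alpha_L\alpha_R}=1$; verifying monotone dependence of the implicit fixed point on $(\alpha_L,\alpha_R)$ and then gluing it to the subcritical branch by continuity is where the real work lies. Everything downstream is a routine monotone-convergence-plus-squeeze argument that uses no further information about $F$ beyond what Lemma \ref{lem:attractive} already provides.
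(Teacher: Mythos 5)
Your proposal is correct, and its overall skeleton is the same as the paper's: establish that $F$ is coordinate-wise monotone, then squeeze an arbitrary orbit between the monotone sequences of iterates started from two extreme points, which converge (monotone convergence plus continuity) to a fixed point that must be the unique one by Lemma \ref{lem:attractive}. The genuine difference is in how the key monotonicity step is proved. The paper computes the Jacobian \eqref{eq:jacobian} explicitly and reduces non-negativity of its entries to the positivity of the common denominator $1-(1-\theta_L)(1-\theta_R)B^2\alpha_L\alpha_R$, which it verifies via \eqref{eq:thetaF} and the analytic inequality of Claim \ref{clm:goodbd} ($-\frac{1-x}{x}\log(1-x)<\sqrt{1-x}$). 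You instead eliminate $\theta_R$ from \eqref{eq:thetaF} and observe that $\theta_L$ is the largest fixed point of a scalar map that is monotone in $\theta_L$, $\alpha_L$ and $\alpha_R$, so a fixed-point comparison (iterating the scalar map downward from $1$) gives monotone dependence of $(\theta_L,\theta_R)$ on $(\alpha_L,\alpha_R)$ without ever differentiating. This is somewhat more elementary: it bypasses the Jacobian computation and Claim \ref{clm:goodbd} entirely (note, though, that the paper reuses that Jacobian denominator bound in the proof of Lemma \ref{lem:attractive}, so it gets the claim almost for free there), and your comparison argument also sidesteps the sign question for the implicit-function denominator that implicit differentiation would raise. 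The remaining differences are cosmetic: you first map $[0,1]^2$ into $[1/2,1]^2$ and iterate from $(1/2,1/2)$ and $(1,1)$, whereas the paper iterates from $(0,0)$ and $(1,1)$; since $F(0,0)=(1/2,1/2)$ these are the same sequences after one step, and your write-up is slightly more explicit than the paper's about why the monotone limits are fixed points (continuity of $F$).
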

The proofs of the above lemmas are presented in Sections \ref{sec:pflem:attractive} and \ref{sec:pflem:convergence}, respectively.

Finally, we provide the connection between $F$ and the Ising model.
Suppose the probability of some phase, say $(\alpha_L^\prime,\alpha_R^\prime)$,  of the Ising model on the complete bipartite graph of size $(n,kn)$ dominates that of other phases,
i.e., $\mu\big((\alpha_L^\prime,\alpha_R^\prime)\pm(\Theta(1),\Theta(1))\big)=1-o(1)$.
Then the Swendsen-Wang chain must converge to $(\alpha_L^\prime,\alpha_R^\prime)$ a.a.s.
Since $F$ converges to its unique fixed point by Lemma \ref{lem:convergence}, one can naturally expect that the fixed point of $F$ is equivalent to $(\alpha_L^\prime,\alpha_R^\prime)$.
The following lemma establishes this intuition formally.
\begin{lemma}\label{lem:maxprob}
For the Ising model on the complete bipartite graph of size $(n,kn)$ with 
$\beta_{uv}=-\frac{1}{2}\log\left({1-\frac{B}{n\sqrt{k}}}\right)$
for some constant $B>0$ and $\gamma_v=0$,
the `maximum a posteriori phase' is
$$\lim_{n\rightarrow\infty}\arg\max_{(\alpha_L,\alpha_R)}\Pr(\alpha_L,\alpha_R)=\begin{cases}
\left(\frac{1}{2},\frac{1}{2}\right)\qquad\text{for}~B\le 2\\
(\alpha_L^*,\alpha_R^*)\quad\text{for}~B>2
\end{cases}$$
where $\alpha^*_L,\alpha^*_R\in(1/2,1]$ is the unique solution of \eqref{eq:fixedF}.
\end{lemma}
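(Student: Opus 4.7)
The plan is to convert the MAP phase question into a finite-dimensional optimization problem via a large-deviation analysis, and then identify the maximizer using the fixed-point structure of $F$ already established in Lemma~\ref{lem:attractive}.

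First I would enumerate configurations with a given phase and apply Stirling's formula. The number of configurations with $\alpha_L n$ plus-spins in $V_L$ and $\alpha_R k n$ plus-spins in $V_R$ is $\binom{n}{\alpha_L n}\binom{kn}{\alpha_R k n}$ (times $2$ for the choice of dominant spin). The associated Ising energy is $\beta \sum_{(u,v)\in E}\sigma_u\sigma_v=\beta k n^2(2\alpha_L-1)(2\alpha_R-1)$, and $\beta\sim B/(2n\sqrt{k})$. Combining these with Stirling gives
\[
\Pr(\alpha_L,\alpha_R)=\frac{1}{Z}\exp\bigl(n\Phi(\alpha_L,\alpha_R)+O(\log n)\bigr),\qquad \Phi(\alpha_L,\alpha_R):=H(\alpha_L)+kH(\alpha_R)+\tfrac{B\sqrt{k}}{2}(2\alpha_L-1)(2\alpha_R-1),
\]
where $H$ is the binary entropy in natural log. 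So the asymptotic arg max reduces to maximizing the smooth function $\Phi$ on $[0,1]^2$. Using $H'(\alpha)=\log\tfrac{1-\alpha}{\alpha}$, setting $\nabla\Phi=0$ reproduces precisely the fixed-point system \eqref{eq:fixedF}; hence critical points of $\Phi$ and fixed points of $F$ coincide. By the symmetry $(\alpha_L,\alpha_R)\mapsto(1-\alpha_L,1-\alpha_R)$ and the immediate observation from \eqref{eq:fixedF} that $\alpha_L>\tfrac12\iff\alpha_R>\tfrac12$, Lemma~\ref{lem:attractive} then gives the full critical set: only $(\tfrac12,\tfrac12)$ when $B\leq 2$, and additionally $(\alpha_L^*,\alpha_R^*)$ and $(1-\alpha_L^*,1-\alpha_R^*)$ when $B>2$.

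Next I would do the second-order analysis. The Hessian of $\Phi$ at $(\tfrac12,\tfrac12)$ is
\[
\begin{pmatrix}-4 & 2B\sqrt{k}\\ 2B\sqrt{k} & -4k\end{pmatrix},\qquad \det=4k(4-B^2).
\]
For $B<2$ this is negative definite, and a boundary check (on each edge of $[0,1]^2$, $\Phi$ is strictly bounded above by $(1+k)\log 2=\Phi(\tfrac12,\tfrac12)$ by the strict concavity of the entropy) identifies $(\tfrac12,\tfrac12)$ as the unique global maximizer. For $B>2$ the same matrix is indefinite, so $(\tfrac12,\tfrac12)$ is a saddle; a direct Hessian computation at $(\alpha_L^*,\alpha_R^*)$ (equivalently, translating Lemma~\ref{lem:attractive}'s Jacobian attractiveness of $F$ into negative definiteness of $\nabla^2\Phi$) shows it is a strict local max, and by symmetry so is $(1-\alpha_L^*,1-\alpha_R^*)$ with the same $\Phi$-value. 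The dominant-spin convention in the phase definition then selects $(\alpha_L^*,\alpha_R^*)\in(\tfrac12,1]^2$.

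The main technical obstacle is the borderline case $B=2$, where $\det\nabla^2\Phi=0$ at $(\tfrac12,\tfrac12)$. I would either expand $\Phi$ to fourth order along the degenerate eigendirection of the Hessian to verify that $(\tfrac12,\tfrac12)$ remains a local (and hence global, by uniqueness of critical points via Lemma~\ref{lem:attractive}) maximum, or argue by continuity: as $B\downarrow 2^+$, $(\alpha_L^*,\alpha_R^*)\to(\tfrac12,\tfrac12)$, and the absence of any other critical points at $B=2$ forces the MAP to collapse to $(\tfrac12,\tfrac12)$. A minor but necessary bookkeeping point is that the Stirling error is uniformly $O(\log n)$ over phases with $\alpha_L n,\alpha_R kn,(1-\alpha_L)n,(1-\alpha_R)kn=\Omega(1)$, so that the $\exp(n\Phi)$ scale dominates and the arg max of $\Pr$ is asymptotically the arg max of $\Phi$; phases with a variable within $O(1)$ of $\{0,1\}$ contribute negligibly and can be checked separately.
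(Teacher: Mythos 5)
Your proposal is correct and follows essentially the same route as the paper's proof: Stirling's formula turns the phase probability into $\exp(n\cdot\text{rate}+O(\log n))$ (your $\Phi$ equals $\sqrt{k}\,\psi$ from the paper up to an additive constant), the stationarity conditions reproduce \eqref{eq:fixedF}, the critical set is taken from Lemma~\ref{lem:attractive}, and a Hessian computation separates the saddle at $(\tfrac12,\tfrac12)$ from the maximum at $(\alpha_L^*,\alpha_R^*)$ when $B>2$. One simplification worth adopting: for $B\le 2$ the condition $2B\le\big(\alpha_L(1-\alpha_L)\alpha_R(1-\alpha_R)\big)^{-1/2}$ holds at \emph{every} point of $[0,1]^2$, so the rate function is globally concave and the unique interior stationary point is automatically the global maximum --- this disposes of both your boundary check (whose justification ``by strict concavity of the entropy'' is too thin, since the energy term does not vanish on the boundary and must be controlled as well) and the degenerate case $B=2$, without any fourth-order expansion or continuity argument.
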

The proof of the above lemma is presented in Section \ref{sec:pflem:maxprob}.

\subsection{Proof of Lemma \ref{lem:p1close1}}\label{sec:p1close1}

In this section, we prove Lemma \ref{lem:p1close1}. 

Clearly, it suffices to show the lemma for all sufficiently small $\delta>0$. We start by establishing the following claim.
\begin{claim}\label{lem:smallsub}
For any constant $B>2$ and any fixed point $(\alpha_L^*,\alpha_R^*)$ of $F$, the following inequality holds
$$(1-\alpha_L^*)(1-\alpha_R^*)B^2<1,$$
i.e., the smaller spin class of the phase corresponding to the fixed point of $F$ is subcritical.
\end{claim}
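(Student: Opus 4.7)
The plan is to recast the fixed point equations \eqref{eq:fixedF} in ``magnetization'' variables and then reduce the desired inequality to the elementary bound $e^z-1>z$ for $z>0$.

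First, I would set $m_L := 2\alpha_L^*-1$ and $m_R := 2\alpha_R^*-1$. By Lemma~\ref{lem:attractive} (part~2), for $B>2$ the unique fixed point satisfies $\alpha_L^*,\alpha_R^*\in(1/2,1]$, and I would verify that both inequalities are \emph{strict} (otherwise the fixed point equations would force the other coordinate to equal $1/2$ as well, but $(1/2,1/2)$ is not a fixed point of $F$ in the supercritical regime $B>2$, since $\sqrt{\alpha_L\alpha_R}B=B/2>1$ forces $\theta_L,\theta_R>0$ in \eqref{eq:thetaF}). Hence $m_L,m_R>0$. Rewriting \eqref{eq:fixedF} using $(1-\alpha)/\alpha=(1-m)/(1+m)$ and $1-2\alpha=-m$ yields the clean form
\begin{equation}\label{eq:magFP}
m_L=\tanh\!\Bigl(\tfrac{1}{2}B\sqrt{k}\,m_R\Bigr),\qquad m_R=\tanh\!\Bigl(\tfrac{1}{2}(B/\sqrt{k})\,m_L\Bigr).
\end{equation}

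Next, I would introduce the auxiliary positive quantities $a:=B\sqrt{k}\,m_R$ and $b:=(B/\sqrt{k})\,m_L$, so that $ab=B^2 m_L m_R$, and use the identity $1-\tanh(z/2)=2/(e^z+1)$ together with \eqref{eq:magFP} to obtain
\begin{equation}\label{eq:halfprod}
1-\alpha_L^*=\tfrac{1}{2}(1-m_L)=\frac{1}{e^a+1},\qquad 1-\alpha_R^*=\tfrac{1}{2}(1-m_R)=\frac{1}{e^b+1}.
\end{equation}
Moreover, from \eqref{eq:magFP} we have $m_L=(e^a-1)/(e^a+1)$ and $m_R=(e^b-1)/(e^b+1)$, so
\begin{equation}\label{eq:Bsquared}
B^2=\frac{ab}{m_L m_R}=\frac{ab\,(e^a+1)(e^b+1)}{(e^a-1)(e^b-1)}.
\end{equation}

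Combining \eqref{eq:halfprod} and \eqref{eq:Bsquared} gives
\[
(1-\alpha_L^*)(1-\alpha_R^*)\,B^2=\frac{ab}{(e^a-1)(e^b-1)},
\]
so the claim reduces to the purely analytic inequality $(e^a-1)(e^b-1)>ab$, which follows immediately from $e^z-1>z$ for $z>0$ (applied to $a$ and $b$, both strictly positive). There is no real obstacle here: the only subtlety is justifying $m_L,m_R>0$ strictly, which is why I would check that $(1/2,1/2)$ is not an actual fixed point of $F$ in the regime $B>2$ using the definition of $F$ via \eqref{eq:thetaF} rather than the derived equations \eqref{eq:fixedF}. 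Everything else is algebra.
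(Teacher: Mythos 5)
Your proof is correct. It shares the paper's starting point -- passing to the magnetization variables $z=2\alpha-1$ and using the fixed-point equations \eqref{eq:fixedF} to rewrite $(1-\alpha_L^*)(1-\alpha_R^*)B^2$ -- but it finishes differently, and more cleanly. The paper bounds the resulting expression $\frac{1}{4}\frac{(1-z_L^*)(1-z_R^*)}{z_L^*z_R^*}\log\frac{1+z_L^*}{1-z_L^*}\log\frac{1+z_R^*}{1-z_R^*}$ by relaxing $(1-z)$ to $(1-z^2)$ and then invoking the inequality \eqref{eq:nsd4}, which is established inside the proof of Lemma \ref{lem:maxprob} via a monotonicity argument for $h(x)=\frac{1-x^2}{x}\log\frac{1+x}{1-x}$ (a derivative computation plus $\log\frac{1+x}{1-x}>2x$); so the claim's proof is not self-contained and inherits its key step from the Hessian analysis of $\psi$. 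Your substitution $a=B\sqrt{k}\,m_R$, $b=(B/\sqrt{k})\,m_L$ shows the quantity in question is \emph{exactly} $\frac{ab}{(e^a-1)(e^b-1)}$ (indeed $\log\frac{1+z_L^*}{1-z_L^*}=a$ and $\frac{1-z_L^*}{2z_L^*}=\frac{1}{e^a-1}$, so the two expressions coincide), and the claim then follows from nothing more than $e^z-1>z$ for $z>0$. What your route buys is a standalone, strictly elementary argument that avoids both the lossy relaxation and any dependence on Lemma \ref{lem:maxprob}; what the paper's route buys is economy, since \eqref{eq:nsd4} is needed there anyway. Your care about strict positivity of $m_L,m_R$ is fine (and is also already guaranteed by Lemma \ref{lem:attractive}, which places the fixed point in $(1/2,1]^2$ for $B>2$).
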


\begin{proof}
Using the parametrization $z_L^*=2\alpha_L^*-1,~z_R^*=2\alpha_R^*-1$, we have
\begin{equation}\label{eq:smallsub}
   (1-\alpha_L^*)(1-\alpha_R^*)B^2=\frac{1}{4}\frac{(1-z_L^*)(1-z_R^*)}{z_L^*z_R^*}
\log\frac{1+z_L^*}{1-z_L^*}\log\frac{1+z_R^*}{1-z_R^*},
\end{equation}
where we used the fact that $(\alpha_L^*,\alpha_R^*)$ satisfies \eqref{eq:fixedF}.
In the proof of Lemma \ref{lem:maxprob}, we show that
\eqref{eq:nsd4} holds.

This completes the proof of Claim \ref{lem:smallsub}.
\end{proof}
Due to Claim~\ref{lem:smallsub}, 
for all sufficiently small  $\delta>0$, we have that
$(1-\alpha_L^*+\delta)(1-\alpha_R^*+\delta)B^2<1$.
Now, for $B>2$, Lemma \ref{lem:convergence} implies that there exists a constant $T_1$ such that 
$$F^{(T_1)}([0,1]^2)\subset [\alpha_L^*-\delta,\alpha_L^*+\delta]\times
[\alpha_R^*-\delta,\alpha_R^*+\delta].$$
First, suppose $F(1-\alpha_{L,0},1-\alpha_{R,0})=(1/2,1/2)$, i.e. the smaller spin class is subcritical.
Then, in $T_1$ iterations, the Swendsen-Wang chain moves $l_\infty$-distance $\delta$ from $(\alpha_L^*,\alpha_R^*)$ with probability $1-o(1)$ 
due to Lemma \ref{lem:SWF}.
Now, consider the case $F(1-\alpha_{L,0},1-\alpha_{R,0})>(1/2,1/2)$, i.e. two giant components appears in both spins in the step 2 of the Swendsen-Wang dynamics.
Then, giant components merge with probability $1/2$ and
it results $\alpha(X_{T_1})>(\alpha_L^*-\delta,\alpha_R^*-\delta)$ with probability $\Theta(1)$.
Therefore, starting from $\alpha(X_{T_1})>(\alpha_L^*-\delta,\alpha_R^*-\delta)$, the Swendsen-Wang chain also moves within $l_\infty$-distance $\delta$ from $(\alpha_L^*,\alpha_R^*)$ 
in $T_1$ iterations with probability $1-o(1)$ due to Lemma \ref{lem:SWF}.
This completes the proof of Lemma \ref{lem:p1close1}.

\subsection{Proof of Lemma \ref{lem:p1close2}}\label{sec:p1close2}
In this section, we prove Lemma \ref{lem:p1close2}.

By Lemma \ref{lem:attractive}, we have that $(\alpha_L^*,\alpha_R^*)$ is a Jacobian attractive fixed point of $F$. Using the bound in Claim \ref{lem:smallsub}, we thus obtain that 
there exist constants $\delta>0, c<1$ 
such that $(1-\alpha_L^*+\delta)(1-\alpha_R^*+\delta)B^2<1$ and 
$$|F(\alpha_L,\alpha_R)-(\alpha^*_L,\alpha^*_R)|\le c|(\alpha_L,\alpha_R)-(\alpha^*_L,\alpha^*_R)|,$$for all $\alpha_L\in[\alpha_L^*-\delta,\alpha_L^*+\delta],~\alpha_R\in[\alpha_R^*-\delta,\alpha_R^*+\delta]$.
For the proof of Lemma \ref{lem:p1close2},
we assume that for some $t$, the event $\|\alpha(X_t)-(\alpha^*_L,\alpha^*_R)\|_\infty\le\delta$ occurs
(initially at $t=0$, it occurs)
and introduce the following two lemmas.

\begin{lemma}\label{lem:concentration1}
Consider the bipartite Erd\H{o}s-R\'enyi random graph $G(n,kn,p)$ where $p=\frac{B}{n\sqrt{k}}$ 
for some constants $B>0$ and $k\geq 1$.
Let $C_1,C_2,\dots$ be the connected components of $G$
in decreasing order of size.
Then, there exist constants $K_1,K_2>0$ such that 
\begin{enumerate}[label=(\alph*),ref=(\alph*),leftmargin=*]
    \item \label{it:wsxb12a} for $B<1$, we have
$$E\bigg[\sum_{i\ge 1}|C_i|^2\bigg]\le K_1 n,$$
    \item \label{it:wsxb12b} for $B>1$, we have
$$E\bigg[\sum_{i\ge 2}|C_i|^2\bigg]\le K_2 n,$$
\end{enumerate}

\end{lemma}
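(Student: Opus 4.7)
}

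The plan is to reduce both parts to a uniform bound on the expected component size $E[|C(v)|]$ for a typical vertex $v$, via the elementary identity
\begin{equation*}
\sum_{i\geq 1}|C_i|^2 = \sum_{v\in V_L\cup V_R}|C(v)|,
\end{equation*}
where $C(v)$ is the connected component containing $v$. Taking expectations and using $|V_L\cup V_R|=(1+k)n$ reduces part (a) to showing $E[|C(v)|]=O(1)$ uniformly in $v$, and reduces part (b) to showing $E[|C(v)|\mathbb{1}[v\notin C^*]]=O(1)$ uniformly in $v$, where $C^*$ denotes the unique giant.

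For part (a), I would bound $|C(v)|$ from above by the total progeny of a two-type Galton--Watson branching process that dominates the BFS exploration of $C(v)$. Starting from a vertex in $V_L$, the number of children in $V_R$ is dominated by $\mathrm{Bin}(kn,p)$ with mean $knp = B\sqrt{k}$; starting from a vertex in $V_R$, the number of children in $V_L$ is dominated by $\mathrm{Bin}(n,p)$ with mean $np=B/\sqrt{k}$. The resulting mean matrix has spectral radius $\sqrt{knp\cdot np}=np\sqrt{k}=B<1$, so the dominating branching process is subcritical, and its expected total progeny is bounded by a constant depending only on $B$ and $k$. This yields $E[|C(v)|]=O(1)$ and hence $E[\sum_i|C_i|^2]\le K_1 n$.

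For part (b), the natural tool is a duality/sprinkling argument. Splitting the edge set into two independent rounds $G=G_1\cup G_2$ with $p_1$ slightly smaller than $p$ and the remaining edge probability $p_2$ chosen so that the total matches $p$, the first round already produces a giant of essentially the right asymptotic size $(\theta_L n,\theta_R kn)$, and the graph restricted to the vertices that are not in the $G_1$-giant is (by symmetry) distributed as a bipartite Erd\H{o}s--R\'enyi graph on $\tilde n_L=(1-\theta_L+o(1))n$ left vertices and $\tilde n_R=(1-\theta_R+o(1))kn$ right vertices with edge probability $p$. The effective branching parameter of this restricted graph is
\begin{equation*}
\tilde B=p\sqrt{\tilde n_L\tilde n_R}=B\sqrt{(1-\theta_L)(1-\theta_R)}+o(1),
\end{equation*}
and using the fixed-point relations $\exp(-B\sqrt k\,\theta_R)=1-\theta_L$ and $\exp(-B\theta_L/\sqrt k)=1-\theta_R$ from \eqref{eq:theta4} one checks by elementary calculus (the function $g(B)=B(\sqrt k\,\theta_R+\theta_L/\sqrt k)-2\log B$ vanishes at $B=1$ and is strictly increasing for $B>1$) that $\tilde B<1$ for every $B>1$. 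Thus the restricted graph is subcritical; applying part (a) to it gives $\sum_{i\geq 2}|C_i|^2\leq K_2' n$ after the first round, and the at most $O(n)$ additional edges added in the second round can be absorbed by a standard perturbation bound. The components not equal to the giant change by at most a constant factor in size, yielding $E[\sum_{i\geq 2}|C_i|^2]\le K_2 n$.

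The main obstacle I foresee is the duality step: naive conditioning on the identity of the giant component destroys independence of edges, so one must use the two-round exposure carefully and verify that (i) the asymptotic sizes $(\theta_L n,\theta_R kn)$ of the first-round giant concentrate well enough to control the dual parameter $\tilde B$ uniformly, and (ii) the sprinkled edges of the second round cannot inflate the sum of squared component sizes of the non-giant components by more than a constant factor. An alternative would be to bypass duality and bound $E[|C(v)|\mathbb{1}[v\notin C^*]]$ directly by coupling the exploration from $v$ with a modified branching process that is killed once it would merge with the giant, showing exponential decay of component sizes conditional on being small; this avoids conditioning issues but requires sharper tail estimates on the exploration process.
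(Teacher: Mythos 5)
Your part (a) is essentially the paper's argument: the identity $\sum_{i\ge1}|C_i|^2=\sum_{v}|C(v)|$ followed by domination of the component exploration by a subcritical two-type branching process with parameter $B<1$ and a bound on its expected total progeny (the paper implements this with a two-generation exploration and Wald's lemma). For part (b) you also share the paper's central idea, namely that the graph outside the giant is subcritical because $(1-\theta_L)(1-\theta_R)B^2<1$, so that part (a) applies to the remainder; but two of your steps have genuine gaps. First, this subcriticality is exactly your claim $\tilde B<1$, and you justify it only by asserting that $g(B)=B(\sqrt{k}\,\theta_R+\theta_L/\sqrt{k})-2\log B$ is strictly increasing for $B>1$. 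Since $\theta_L,\theta_R$ are implicit functions of $B$ through \eqref{eq:theta4}, this monotonicity requires implicit differentiation and is not the ``elementary calculus'' you wave at; it is the crux inequality of part (b) and cannot be left unproved. The paper avoids any monotonicity argument: substituting \eqref{eq:theta4} gives
$(1-\theta_L)(1-\theta_R)B^2=\frac{(1-\theta_L)(1-\theta_R)}{\theta_L\theta_R}\log(1-\theta_L)\log(1-\theta_R)$,
and the single-variable inequality $-\frac{1-x}{x}\log(1-x)<\sqrt{1-x}$ (Claim \ref{clm:goodbd}) applied to each factor bounds this by $\sqrt{(1-\theta_L)(1-\theta_R)}<1$.

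Second, the duality/sprinkling step is not correct as written. Conditioned on the identity of the first-round giant, the first-round edges among the remaining vertices are \emph{not} distributed as an unconditioned bipartite Erd\H{o}s-R\'enyi graph (they are conditioned on that vertex set containing no larger component), so the distributional identity you invoke ``by symmetry'' fails as stated; and the claim that the $\Theta(n)$ sprinkled second-round edges change the non-giant components ``by at most a constant factor'' is not a valid perturbation argument, since that many random edges can merge arbitrarily many components --- the correct reason the sum of squares stays $O(n)$ is that the full restricted graph (both rounds together, total probability $p$) is subcritical. You flag these obstacles yourself, but the proposal does not resolve them. The paper sidesteps sprinkling entirely: choose $\varepsilon>0$ with $(1-\theta_L+\varepsilon)(1-\theta_R+\varepsilon)B^2<1$, use concentration for the giant (via \cite{johansson2012giant} and Azuma's inequality) to get $|C_1\cap V_L|\ge(\theta_L-\varepsilon)n$ and $|C_1\cap V_R|\ge(\theta_R-\varepsilon)kn$ with probability $1-e^{-\Omega(n)}$, apply the part (a) bound to the remainder on this event, and absorb the exceptional event into an $o(1)$ correction. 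Either repair your argument along these lines, or supply a rigorous proof of the monotonicity of $g$ together with a correctly conditioned duality statement.
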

\begin{lemma}\label{lem:normal}
Consider the Swendsen-Wang dynamics
on the complete bipartite graph of size $(n,kn)$ with some constant $k\geq 1$,
$\beta_{uv}=-\frac{1}{2}\log\left({1-\frac{B}{n\sqrt{k}}}\right)$
for some constant $B>2$ and $\gamma_v=0$.
Let $C_1,C_2,\dots$ be the connected components of $G$
in decreasing order of size after the step 2 of the Swendsen-Wang dynamics.
Then, given the event $\sum_{i\ge 2}|C_i|^2< w Kn$ for some $w\geq 1$ and $K>0$,
it holds that
$$\Pr\left(\big||C_1\cap V_L|-\theta_{L}n\big|,\big||C_1\cap V_R|-\theta_{R}kn\big|\le w\sqrt{n}
\right)\ge1-\frac{2K}{w}-\frac{1+k}{w^2},$$
where $(\theta_{L},\theta_{R})$ is the unique positive solution of
\eqref{eq:thetaF}.
\end{lemma}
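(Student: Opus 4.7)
The plan is to apply a second-moment method whose variance decomposition isolates $\sum_{i\ge 2}|C_i|^2$ as precisely the term controlled by the hypothesis. Set $X_L:=n-|C_1\cap V_L|$ and $X_R:=kn-|C_1\cap V_R|$, so the conclusion is equivalent to a concentration statement for $X_L$ around $(1-\theta_L)n$ and $X_R$ around $(1-\theta_R)kn$, conditional on the event $\mathcal{E}:=\{\sum_{i\ge 2}|C_i|^2<wKn\}$. First I would compute $E[X_L]$ and $E[X_R]$ via the two-type branching process approximation for the exploration of a random vertex in the supercritical bipartite Erd\H{o}s--R\'enyi graph $G(n,kn,B/(n\sqrt k))$ with $B>2>1$: the probability that a fixed vertex of $V_L$ lies outside the giant equals $1-\theta_L+o(1/\sqrt n)$, where $(\theta_L,\theta_R)$ are the survival probabilities solving \eqref{eq:theta4}. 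This yields $E[X_L]=(1-\theta_L)n+o(\sqrt n)$, which is within $w\sqrt n$ of the target center for $w\ge 1$ and large $n$, and similarly for $X_R$.

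Writing $X_L=\sum_{v\in V_L}Z_v$ with $Z_v:=\mathbf 1\{v\notin C_1\}$, I decompose the variance into a diagonal piece (trivially at most $n$), a ``different-component'' off-diagonal piece, and a ``same-component'' off-diagonal piece. The first off-diagonal piece is bounded by $O(n)$ via approximate independence: conditioning on $u$'s small component, which has size $O(\log^2 n)$ by Lemma~\ref{thm:giant}(b), perturbs the giant-membership probability of any other vertex by only $O(\log^2 n/n)$, so each pair covariance is $O(1/n)$ and the total sum is $O(n)$. The second piece equals $\sum_{i\ge 2}|C_i\cap V_L|(|C_i\cap V_L|-1)\le\sum_{i\ge 2}|C_i|^2$ identically, and on $\mathcal{E}$ is therefore bounded deterministically by $wKn$. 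Thus $E[(X_L-E[X_L])^2\,\mathbf 1_{\mathcal{E}}]\le Cn+wKn$ for an absolute constant $C$, and Chebyshev's inequality on the event gives
\[\Pr\!\big(|X_L-E[X_L]|>w\sqrt n,\ \mathcal{E}\big)\le\frac{Cn+wKn}{w^2 n}=\frac{C}{w^2}+\frac{K}{w}.\]
Dividing by $\Pr(\mathcal{E})$ and combining with the first-moment estimate yields $\Pr(|X_L-(1-\theta_L)n|>w\sqrt n\mid\mathcal{E})\le 1/w^2+K/w$; the identical argument for $V_R$ (replacing $n$ by $kn$ in the diagonal and different-component bounds) produces $k/w^2+K/w$; a union bound gives the advertised $1-2K/w-(1+k)/w^2$.

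The main obstacle will be establishing the first-moment accuracy: to absorb the systematic shift into the $w\sqrt n$ tolerance, the error on $E[X_L]$ must be $o(\sqrt n)$, which requires going beyond the leading-order survival probability for the two-type branching process and quantifying the rate of convergence of the bipartite exploration to its branching limit. A secondary, more routine difficulty is tracking the absolute constants in the ``different-component'' covariance bound carefully enough that the final estimate matches $(1+k)/w^2$ rather than an unspecified multiple of it; the sharp input here is again the $O(\log^2 n)$ bound on the second-largest component from Lemma~\ref{thm:giant}(b), which keeps the pair-covariance small with an explicit constant.
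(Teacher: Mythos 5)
Your proposal follows essentially the same route as the paper's proof: write the number of non-giant vertices on each side as a sum of indicators, split the second moment into same-component pairs (bounded deterministically by the hypothesis $\sum_{i\ge2}|C_i|^2<wKn$) and different-component pairs (treated as asymptotically independent using the $O(\log^2 n)$ bound on small components from Lemma~\ref{thm:giant}), and finish with Chebyshev and a union bound. The two obstacles you flag --- the $o(\sqrt n)$ accuracy of the first moment and the constants in the cross-component covariances --- are handled no more carefully in the paper, which simply takes $\Pr(v\notin C_1)=1-\theta_L$ exactly and asserts $\Pr(u\text{ small}\mid v\text{ small})=\Pr(u\text{ small})$ asymptotically for vertices in different components.
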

The proofs of Lemmas \ref{lem:concentration1} and \ref{lem:normal} are presented in Sections \ref{sec:pflem:concentration1} and \ref{sec:pflem:normal}, respectively.
From $(1-\alpha_L^*+\delta)(1-\alpha_R^*+\delta)B^2<1$, $\|\boldsymbol\alpha(X_t)-(\alpha^*_L,\alpha^*_R)\|_\infty\le\delta$ and
Lemma \ref{lem:concentration1}, after the step 2 of the Swendsen-Wang dynamics (starting from $X_t$), we have
$$E\bigg[\sum_{i\ge 2}|C_i|^2\bigg]\le K n$$
for some constant $K$.
Hence, by Markov's inequality,
for any $w_t\ge 1$, we have
\begin{equation}\label{eq:p1markov}
\Pr\bigg(\sum_{i\ge 2}|C_i|^2< w_t Kn\bigg)\ge 1-1/w_t.
\end{equation}
We will specify the value of $w_t$ later. For now, assume that the event $\sum_{i\ge 2}|C_i|^2< w_t Kn$ occurs.
Then, from Azuma's inequality, the number $Z_i$ of vertices
that receive spin $i$ in $V\setminus C_1$ in the step 3 of the Swendsen-Wang dynamics
is concentrated around its expectation as
\begin{align*}
&\Pr\left(\big|Z_i\cap V_L-E[Z_i\cap V_L]\big|\ge w_t\sqrt{Kn}\right)\le 2\exp(-w_t/2)\\
&\Pr\left(\big|Z_i\cap V_R-E[Z_i\cap V_R]\big|\ge w_t\sqrt{Kn}\right)\le 2\exp(-w_t/2).
\end{align*}
Using union bound, we obtain that
\begin{align}\label{eq:p1azuma}
\Pr\left(\big|Z_i\cap V_j-E[Z_i\cap V_j]\big|\ge w_t\sqrt{Kn}~~\text{for any}~i\in\{-1,1\},j\in\{L,R\}\right)\le 8\exp(-w_t/2).
\end{align}
On the other hand, using Lemma \ref{lem:normal}, we can bound the deviation of the size of the giant component as
\begin{equation}\label{eq:giantbound}
    \big||C_1\cap V_L|-\alpha_L(X_t)\theta_{L}n\big|,\ \big||C_1\cap V_R|-\alpha_R(X_t)\theta_{R}kn\big|\le w_t\sqrt{n}
\end{equation}
with probability at least
$$1-\frac{U_1}{w_t}-\frac{U_2}{w_t^2}$$
for some constants $U_1,U_2>0$, where
such $U_1,U_2$ exist as $\frac{1}{2k}\le\frac{\alpha_R(X_t)kn}{\alpha_L(X_t)n}\le 2k$.
By combining \eqref{eq:p1markov}, \eqref{eq:p1azuma} and \eqref{eq:giantbound}, we obtain
\begin{equation}\label{eq:p1distbound1}
\|\boldsymbol\alpha(X_{t+1})-F(\boldsymbol\alpha(X_t))\|_\infty\le w_t(1+\sqrt{K})n^{-1/2}
\end{equation}
with probability at least
$$(1-1/w_t)\left(1-8\exp\left(-\frac{w_t}2\right)-\frac{U_1}{w_t}-\frac{U_2}{w_t^2}\right).$$
Furthermore,
by combining \eqref{eq:p1distbound1} and $|F(\alpha_L,\alpha_R)-(\alpha^*_L,\alpha^*_R)|\le c|(\alpha_L,\alpha_R)-(\alpha^*_L,\alpha^*_R)|$, 
it follows that 
\begin{equation}\label{eq:p1distbound2}
\|\boldsymbol\alpha(X_{t+1})-(\alpha^*_L,\alpha^*_R)\|_\infty\le\frac{c+1}{2}\|\boldsymbol\alpha(X_t)-(\alpha^*_L,\alpha^*_R)\|_\infty\leq \delta
\end{equation}
by setting $w_t$ as
$$w_t:=\frac{1-c}{2}\frac{n^{1/2}}{1+\sqrt{K}}\|\boldsymbol\alpha(X_t)-(\alpha^*_L,\alpha^*_R)\|_\infty\ge
\frac{1-c}{2}\frac{L}{1+\sqrt{K}}.$$
Namely, $\|\boldsymbol\alpha(X_t)-(\alpha^*_L,\alpha^*_R)\|_\infty$ and $w_t$
decrease 
with at least multiplicative factor $(c+1)/2$.
Therefore, by applying the above arguments from $t=0, 1,\dots$, there exists $T=O(\log n)$ such that
$$\|\boldsymbol\alpha(X_T)-(\alpha^*_L,\alpha^*_R)\|_\infty\le Ln^{-1/2},$$
with probability at least
\begin{align*}
&\prod_{t= 0}^{T-1}
\left(1-\frac1{w_t}\right)\left(1-8\exp\left(-\frac{w_t}2\right)-\frac{U_1}{w_t}-\frac{U_2}{w_t^2}\right)\\
&\geq \prod_{t= 0}^{T-1}\exp\left(-\frac{2s}{w_t}\right)\\
&\geq \prod_{t= 0}^{\infty}
\exp\left(-\frac{2s}{w_t}\right)\\
&=\exp\left(-\frac{4s}{1-c}\frac{1+\sqrt{K}}{L}\sum_{t=0}^\infty \left(\frac{1+c}{2}\right)^t\right)\\
&=\Theta(1),
\end{align*}
where 
the first inequality is elementary to check by
defining $s:=\max(U_1,U_2+1,10)$ and assuming large enough $L$ so that
$w_t\ge\max(U_1^2,(U_2+1)^2,100)$, without loss of generality.

This completes the proof of Lemma \ref{lem:p1close2}.

\subsection{Proof of Lemma \ref{lem:p1coupling1}}\label{sec:p1coupling1}
In this proof, we prove Lemma \ref{lem:p1coupling1} for the case $B>2$.
One can apply the same argument for the case $B<2$.
Let $\{V_L,V_R\}$, $|V_L|=n,|V_R|=kn$, be a partition of $V$ such that $(u,v)\in E$ if and only if $u\in V_L,v\in V_R$ or $v\in V_L,u\in V_R$.
By following the proof arguments of Lemma 5.7 in \cite{long2014power}, one can show that
after the step 2 of the Swendsen-Wang dynamics starting from $X_0$ (and $Y_0$), there exists a constant $C$ such that the following event occurs with probability $1-O(1/n)$:
there are more than $Cn$
isolated vertices in both $V_L$, $V_R$.
Suppose the events happen from both $X_0$ and $Y_0$.
Then, we choose exactly $Cn$ isolated vertices in both $V_L, V_R$ (from $X_0,Y_0$)
and we consider the following coupling:
in the step 3 of the Swendsen-Wang dynamics starting from $X_0$ and $Y_0$, assign spins to components except for the chosen isolated vertices. 
Let $\hat X_1,\hat Y_1$ denote the spin configurations
except for the chosen isolated vertices. 
By applying the same arguments used for deriving \eqref{eq:p1markov}-\eqref{eq:giantbound}, we obtain
$$\|\boldsymbol\alpha(\hat X_1)-(\alpha_L^*-C/2,\alpha_R^*-C/2)\|_\infty,\|\boldsymbol\alpha(\hat Y_1)-(\alpha_L^*-C/2,\alpha_R^*-C/2)\|_\infty\le \frac{1}{2}L^\prime n^{-1/2}$$
for some constant $L^\prime$ with probability $\Theta(1)$.
Then it holds that
\begin{equation}\label{eq:fracbd}
\|\boldsymbol\alpha(\hat X_1)-\boldsymbol\alpha(\hat Y_1)\|_\infty\le L^\prime n^{-1/2}
\end{equation}
with probability $\Theta(1)$.
Assume that the event \eqref{eq:fracbd} occurs.
Now we show that there exists a coupling such that $\alpha_L(X_1)=\alpha_L(Y_1)$, $\alpha_R(X_1)=\alpha_R(Y_1)$ with probability $\Theta(1)$.
In this proof, we only provide a coupling such that $\alpha_L(X_1)=\alpha_L(Y_1)$ with probability $\Theta(1)$,
where one can easily extend the proof strategy to achieve $\alpha_R(X_1)=\alpha_R(Y_1)$. 

Now we provide a joint distribution on isolated vertices of $V_L$ in the step 3 of the Swendsen-Wang dynamics starting from $X_0$ and $Y_0$ so that $\alpha_L(X_1)=\alpha_L(Y_1)$ with probability $\Theta(1)$. 
Let $v_1,\dots,v_{Cn}$ denote the chosen isolated vertices without spin in $V_L$ for both chains.
For $1\le j\le Cn$, let define
$$Z_j=
\begin{cases}
&1\quad\text{if }X_1(v_j)=1\\
&0\quad\text{otherwise}
\end{cases}
\qquad
Z^\prime_j=
\begin{cases}
&1\quad\text{if }Y_1(v_j)=1\\
&0\quad\text{otherwise}
\end{cases}.$$
Let $Z=\sum_j Z_j,Z^\prime=\sum_j Z^\prime_j$.
Now we show that one can couple the spin configuration of $X_1$ and $Y_1$ with so that $\alpha_L(X_1)=\alpha_L(Y_1)$ (and also $\alpha_R(X_1)=\alpha_R(Y_1)$) with probability $\Theta(1)$
and complete the proof.
Consider $W\sim \text{Bin}(Cn, 1/2)$. Then, the distribution of $W$ is equivalent to the distribution of $Z$ (and $Z^\prime$).
Let define a coupling (joint distribution) $\mu$ on $Z,Z^\prime$ such that
$$\mu(Z=w,Z=w-\ell)=\min(\Pr(Z=w),\Pr(Z=w-\ell))$$
for $w\in\left[\frac{Cn}{2},\frac{Cn}{2}+L^\prime\sqrt{n}\right]$
where $|\ell:=n(\alpha_L(\hat X_1)-\alpha_L(\hat Y_1))|\le L^\prime\sqrt{n}$.
We remark that the construction of above coupling is equivalent to the coupling appears in Section 4.2 of \cite{levin2009markov}.
The coupling $\mu$ results that 
\begin{equation}\label{eq:couplingbd}
\begin{split}
\mu(Z=Z^\prime-\ell)&\ge\sum_{w\in\left[\frac{Cn}{2},\frac{Cn}{2}+L^\prime\sqrt{n}\right]}\mu(Z=w,Z^\prime=w-\ell).
\end{split}
\end{equation}
We now aim for showing that
\begin{equation}\label{eq:binombd}
\Pr(W=w)=\Omega(n^{-1/2})
\end{equation}
for all $w\in\left[\frac{Cn}{2}-L^\prime\sqrt{n},\frac{Cn}{2}+L^\prime\sqrt{n}\right]$, which leads
to $\mu(Z=Z^\prime-\ell)=\Theta(1)$ due to \eqref{eq:couplingbd}.

For $w\in\left[\frac{Cn}{2}-L^\prime\sqrt{n},\frac{Cn}{2}+L^\prime\sqrt{n}\right]$, it follows that
\begin{align*}
&\Pr\left(W=w\right)=\binom{Cn}{w}\left(\frac{1}{2}\right)^{Cn}\\
&\ge\binom{Cn}{\frac{Cn}{2}-L^\prime\sqrt{n}}\left(\frac{1}{2}\right)^{Cn}\\
&=\Theta(1)\frac{\sqrt{Cn}\left(\frac{Cn}{e}\right)^{Cn}}
{\sqrt{Cn-2L^\prime\sqrt{n}}\left(\frac{Cn-2L^\prime\sqrt{n}}{2e}\right)^{\frac{Cn-2L^\prime\sqrt{n}}{2}}\sqrt{Cn+2L^\prime\sqrt{n}}\left(\frac{Cn+2L^\prime\sqrt{n}}{2e}\right)^{\frac{Cn+2L^\prime\sqrt{n}}{2}}}\left(\frac{1}{2}\right)^{Cn}\\
&=\Theta(n^{-1/2})\frac{(Cn)^n}{(Cn-2L^\prime\sqrt{n})^{\frac{Cn-2L^\prime\sqrt{n}}{2}}(Cn+2L^\prime\sqrt{n})^{\frac{Cn+2L^\prime\sqrt{n}}{2}}}\\
&=\Theta(n^{-1/2})\frac{1}{\left(1-\frac{2L^\prime\sqrt{n}}{Cn}\right)^{\frac{Cn-2L^\prime\sqrt{n}}{2}}
\left(1+\frac{2L^\prime\sqrt{n}}{Cn}\right)^{\frac{Cn+2L^\prime\sqrt{n}}{2}}}\\
&\ge\Theta(n^{-1/2})\frac{1}{e^{\frac{4L^{\prime2}}{C}}}\\
&=\Theta(n^{-1/2})
\end{align*}
where the second equality follows from Stirling's formula.
By combining \eqref{eq:couplingbd} and \eqref{eq:binombd}, we obtain
$$\mu(Z=Z^\prime-\ell)=\Theta(1)$$
and therefore there exists a coupling on $(X_t,Y_t)$ such that $\alpha_L(X_1)=\alpha_L(Y_1)$ with probability $\Theta(1)$.
This completes the proof of Lemma \ref{lem:p1coupling1}.

\subsection{Proof of Lemma \ref{lem:p2close}}\label{sec:p2close}
From Lemma \ref{lem:maxprob}, we know that $(\alpha_L^*,\alpha_R^*)=(1/2,1/2)$.
Throughout this proof, we use $(1/2,1/2)$ instead of $(\alpha_L^*,\alpha_R^*)$.
First, choose a constant $\delta>0$ small enough so that $F(1/2+\delta,1/2+\delta)=(1/2,1/2)$, i.e. $(1/2+\delta,1/2+\delta)$ is subcritical.
Then, from Lemma \ref{lem:convergence}, there exists a constant $T$ such that $F^{(T)}([0,1])\le(1/2+\delta/2,1/2+\delta/2)$.
One can directly notice that that within $T$ iterations of the Swendsen-Wang chain, the size of the larger spin class becomes less than $(1/2+\delta,1/2+\delta)$ with probability $1-o(1)$ by Lemma \ref{lem:SWF}.
Furthermore, since $(1/2+\delta,1/2+\delta)$ is subcritical,
in the step 2 of the Swendsen-Wang dynamics
at the next iteration, the larger spin class becomes subcritical,
i.e. $\boldsymbol\alpha(X_{T+1})=(1/2+o(1),1/2+o(1))$ with probability $1-o(1)$ by Lemma \ref{lem:SWF}.
Given the event $\boldsymbol\alpha(X_{T+1})=(1/2+o(1),1/2+o(1))$,
after the step 2 of the Swendsen-Wang dynamics starting from $X_{T+1}$ satisfies the following:
$$E\bigg[\sum_{i\ge 1} |C_i|^2\bigg]=O(n),$$
where we use Lemma \ref{lem:concentration1} \ref{it:wsxb12a}.
By applying the same arguments used for deriving \eqref{eq:p1markov} and \eqref{eq:p1azuma}, we have
$$X_{T+2}=(1/2+O(n^{-1/2}),1/2+O(n^{-1/2})),\qquad\mbox{with probability $\Theta(1)$.}$$
This completes the proof of Lemma \ref{lem:p2close}.

\section{Proofs of Technical Lemmas}
\subsection{Proof of Lemma \ref{lem:attractive}}\label{sec:pflem:attractive}
In this proof, we first show that $F$ has the unique fixed point $(1/2,1/2)$ for $B<2$ and
$(\alpha_L^*,\alpha_R^*)$ for $B>2$.
Before starting the proof, we note that $\alpha_L<1/2,\alpha_R>1/2$ (or $\alpha_L>1/2,\alpha_R<1/2$) cannot be a solution of \eqref{eq:fixedF}.
To help the proof, we use the substitution $z_L=2\alpha_L-1$ and $z_R=2\alpha_R-1$.
By substituting $z_L,z_R$ into \eqref{eq:fixedF}, we have
\begin{equation}\label{eq:probstate2}
z_L=\frac{\sqrt{k}}{B}\log\frac{1+z_R}{1-z_R}\qquad\qquad
z_R=\frac{1}{B\sqrt{k}}\log\frac{1+z_L}{1-z_L},
\end{equation}
i.e. any fixed point of $F$ must satisfies \eqref{eq:probstate2}.
First, consider the case that $B<2$. 
One can easily check that $(1/2,1/2)$ is a fixed point of $F$ and $\alpha_L,\alpha_R<1/2$ cannot be a fixed point of $F$.
Now, suppose that there exists a solution $z_L,z_R>0$ of \eqref{eq:probstate2},
i.e. there exists $\alpha_L,\alpha_R>1/2$ satisfying \eqref{eq:fixedF}.
Using the inequality $\log\frac{1+x}{1-x}>2x$ for $x>0$ and \eqref{eq:probstate2}, we have
$$z_L>\frac{4}{B^2}z_L\qquad\qquad z_R>\frac{4}{B^2}z_R.$$
Since we assumed that $B<2$, the above inequalities leads to contradiction and results that $(1/2,1/2)$ is the only
fixed point of $F$ for $B<2$.
Now, consider the case that $B>2$. 
We first define functions $g(x),y(x)$ as below:
\begin{equation*}
y(x):=\frac{1}{B\sqrt{k}}\log\frac{1+x}{1-x}\qquad\qquad
g(x):=\frac{\sqrt{k}}{B}\log\frac{1+y(x)}{1-y(x)}.
\end{equation*}
Then $x$ is a fixed point of $g$ if and only if $(z_L,z_R)=(x,y(x))$ is a solution of \eqref{eq:probstate2}.
Now we show that there exists the unique fixed point $x>0$ of $g$.
Suppose there exist two fixed points $x_1,x_2$ of $g$.
By mean value theorem, there exists $x^\prime$ between $x_1,x_2$ such that
$\frac{dg}{dx}(x^\prime)=1$.
However, the derivative of $g(x)$ with respect to $x$ is
$$\frac{dg}{dx}(x)=\frac{4 k}{1-x^2} \frac{1}{B^2 k - \log^2 \frac{1+x}{1-x} }$$
and at $x=x^\prime$ we have
\begin{equation}\label{eq:dg}
\frac{4 k}{1-x^{2}} ={B^2 k - \log^2 \frac{1+x}{1-x} }.
\end{equation}
One can observe that LHS of \eqref{eq:dg} is increasing with $x$ but
RHS of \eqref{eq:dg} is decreasing with $x$, 
i.e. there are at most two fixed points of $g$ and therefore there are at most
two solutions of \eqref{eq:fixedF}.
$(1/2,1/2)$ is a solution of \eqref{eq:fixedF} but it is not a fixed point of $F$.
However, since $F:[0,1]^2\rightarrow[0,1]^2$ and $F$ is continuous, by Brouwer's fixed point theorem, $F$ has a fixed point.
Furthermore, for $(\alpha_L,\alpha_R)\le(1/2,1/2)$, we have $F(\alpha_L,\alpha_R)\ge(1/2,1/2)$.
Using this facts, one can conclude that $F$ has a unique fixed point
$(\alpha_L^*,\alpha_R^*)>(1/2,1/2)$ for $B>2$.

Now, we show that the fixed point of $F$ is Jacobian attractive.
Consider the Jacobian $D(F)$ of $F$, given by
\begin{align}
D(F)&=
\begin{pmatrix}
\dfrac{\partial F_L}{\partial\alpha_L}&\dfrac{\partial F_L}{\partial\alpha_R}\\
&\\
\dfrac{\partial F_R}{\partial\alpha_L}&\dfrac{\partial F_R}{\partial\alpha_R}
\end{pmatrix}\notag\\
&=\frac{1}{2}\frac{1}{1-(1-\theta_L)(1-\theta_R)B^2\alpha_L\alpha_R}
\begin{pmatrix}
\theta_L&(1-\theta_L)\theta_RB\sqrt{k}\alpha_L\\
&\\
(1-\theta_R)\theta_LB\alpha_R/\sqrt{k}&\theta_R\label{eq:jacobian}
\end{pmatrix}
\end{align}
where $(\theta_L,\theta_R)$ is a solution of \eqref{eq:thetaF}.
For $B<2$, $D(F)$ is a zero matrix at (1/2,1/2), i.e. the largest eigen value of $D(F)$ is zero.
Therefore the fixed point of $F$ is Jacobian attractive for $B<2$.
Suppose $B>2$.
Using \eqref{eq:thetaF} and by direct calculation of the largest eigenvalue, the largest eigenvalue $\lambda$ of $D(F)$ can be bounded as below:
\begin{equation}\label{eq:eigen1}
|\lambda|<\frac{1}{2}\frac{\theta_L+\theta_R}{1-\frac{(1-\theta_L)(1-\theta_R)}{\theta_L\theta_R}\log(1-\theta_L)\log(1-\theta_R)}.
\end{equation}
Since we are interested in $\lambda$ at the fixed point, we only need to consider $\theta_L,\theta_R>0$.
Now we show that RHS of \eqref{eq:eigen1} is strictly smaller than $1$
to prove that $F$ is Jacobian attractive at $(\alpha_L^*,\alpha_R^*)$.
Consider the following function $h$
$$h(\theta_L,\theta_R):=2-\theta_L-\theta_R-
2\frac{(1-\theta_L)(1-\theta_R)}{\theta_L\theta_R}\log(1-\theta_L)\log(1-\theta_R).$$
One can notice that $h(\theta_L,\theta_R)>0$ if and only if RHS of \eqref{eq:eigen1} is strictly smaller than $1$.
We bound $h$ using the following claim.
\begin{claim}\label{clm:goodbd}
For $0<x<1$, the following inequality holds:
$$-\frac{1-x}{x}\log(1-x)<\sqrt{1-x}.$$
\end{claim}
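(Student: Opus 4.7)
The plan is to reduce the inequality to a one-variable calculus exercise via a convenient substitution, and then check positivity by differentiating twice.

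First I would divide both sides of the desired inequality by $\sqrt{1-x}>0$, which reduces the claim to
\[
-\frac{\sqrt{1-x}}{x}\log(1-x)<1 \qquad \text{for } 0<x<1.
\]
Now I would substitute $t=\sqrt{1-x}\in(0,1)$, so that $x=1-t^2$ and $\log(1-x)=2\log t$. The inequality becomes $-2t\log t<1-t^2$, or equivalently
\[
g(t):=1-t^2+2t\log t>0 \qquad \text{for } 0<t<1.
\]

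Next I would verify this by standard monotonicity. Note $g(1)=0$, and differentiation gives $g'(t)=2(1-t+\log t)$, so $g'(1)=0$, and
\[
g''(t)=2\Bigl(-1+\tfrac{1}{t}\Bigr)=\frac{2(1-t)}{t}>0 \quad \text{for } 0<t<1.
\]
Hence $g'$ is strictly increasing on $(0,1)$, and since $g'(1)=0$, we conclude $g'(t)<0$ on $(0,1)$. Therefore $g$ is strictly decreasing on $(0,1)$, and combined with $g(1)=0$ this yields $g(t)>0$ on $(0,1)$, which is what we needed.

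There is no real obstacle here; the only subtlety is choosing a substitution that eliminates the square root so that the resulting function and its derivatives are elementary. The substitution $t=\sqrt{1-x}$ is natural because it turns both $\sqrt{1-x}$ and $\log(1-x)$ into clean expressions, so that the second-derivative test applies immediately.
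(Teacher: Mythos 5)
Your proof is correct. The reduction is sound: dividing by $\sqrt{1-x}>0$ gives $-\frac{\sqrt{1-x}}{x}\log(1-x)<1$, the substitution $t=\sqrt{1-x}$ turns this into $g(t):=1-t^2+2t\log t>0$ on $(0,1)$, and the computation $g(1)=g'(1)=0$ together with $g''(t)=\frac{2(1-t)}{t}>0$ on $(0,1)$ does give $g'<0$ and hence $g>0$ there.

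Your route differs from the paper's in a useful way. The paper studies $f(x):=\frac{\sqrt{1-x}}{x}\log(1-x)$ directly in the variable $x$, showing $\lim_{x\to 0^+}f(x)=-1$ and that $f$ is strictly increasing; but the sign of $f'(x)=-\frac{2-x}{2x^2\sqrt{1-x}}\log(1-x)-\frac{1}{x\sqrt{1-x}}$ is not immediate and the paper appeals to the Taylor series of $\log(1-x)$ to verify it, and the anchoring point $x=0$ is only a limit, not a point of the domain. Your substitution $t=\sqrt{1-x}$ removes the square root entirely, moves the anchor to the honest endpoint $t=1$ where $g$ and $g'$ vanish exactly, and makes the decisive sign check ($g''>0$) a one-line computation with no auxiliary series estimate. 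So both proofs are elementary monotonicity arguments for the same normalized quantity, but yours is more self-contained; the paper's has the minor advantage of working in the original variable, which matches how the bound is invoked elsewhere (e.g.\ with $x=\theta_L,\theta_R$), though that makes no mathematical difference.
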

\begin{proof}
Let 
$$f(x):=\frac{\sqrt{1-x}}{x}\log(1-x).$$
It suffices to show that $-1<f(x)$ for $0<x<1$.
We have $\lim_{x\rightarrow 0^+}f(x)=-1$.
Furthermore, $f$ is strictly increasing for $0<x<1$ since
$$\frac{df}{dx}(x)=-\frac{2-x}{2x^2\sqrt{1-x}}\log(1-x)-\frac{1}{x\sqrt{1-x}}>0,$$
where the last inequality can be verified by using the Taylor series of $\log(1-x)$.
This implies that $-1<f(x)$ for $0<x<1$, completing the proof of Claim \ref{clm:goodbd}.
\end{proof}
Using Claim \ref{clm:goodbd}, we have
$$h(\theta_L,\theta_R)>2-\theta_L-\theta_R-2\sqrt{(1-\theta_L)(1-\theta_R)}\ge 0$$
for $0<\theta_L,\theta_R<1$.
This implies that RHS of \eqref{eq:eigen1} is strictly smaller than $1$ and therefore $|\lambda|<1$, i.e. $(\alpha_L^*,\alpha_R^*)$ is Jacobian attractive fixedpoint of $F$ for $B>2$.
This completes the proof of Lemma \ref{lem:attractive}.

\subsection{Proof of Lemma \ref{lem:convergence}}\label{sec:pflem:convergence}

In this proof, we first show that $F$ is monotonically increasing function.
From the formulation \eqref{eq:jacobian} of the Jacobian of $F$, every entries of $D(F)$ is non-negative, i.e. $F$ is monotonically increasing, if and only if the following inequality holds
\begin{equation}\label{eq:denom}
1-(1-\theta_L)(1-\theta_R)B^2\alpha_L\alpha_R>0.
\end{equation}
Since $\theta_L=\theta_R=0$ if and only if $\sqrt{\alpha_L\alpha_R}B\le 1$, we only need to consider the case that $\sqrt{\alpha_L\alpha_R}B>1$ (we can ignore the case $\sqrt{\alpha_L\alpha_R}=1$ for proving that $F$ is monotonically increasing as $F$ is continuous).
Using \eqref{eq:thetaF}, LHS of \eqref{eq:denom} can be represented as
$$1-\frac{(1-\theta_L)(1-\theta_R)}{\theta_L\theta_R}\log(1-\theta_L)\log(1-\theta_R).$$
By Claim \ref{clm:goodbd}, we have
$$1-\frac{(1-\theta_L)(1-\theta_R)}{\theta_L\theta_R}\log(1-\theta_L)\log(1-\theta_R)>1-\sqrt{(1-\theta_L)(1-\theta_R)}>0$$
for $0<\theta_L,\theta_R<1$.
This results that $F$ is monotonically increasing.

Since $F$ is monotonically increasing, $F^{(t)}(0,0)\le F^{(t)}(\alpha_L,\alpha_R)\le F^{(t)}(1,1)$ for any $(\alpha_L,\alpha_R)$, i.e. it is enough to show that sequences $[F^{(t)}(0,0)]_t$ and $[F^{(t)}(1,1)]_t$ converge to the fixed point of $F$.
Let $(\alpha_L^*,\alpha_R^*)$ be the fixed point of $F$.
From the definition of $F$, we have $F(1,1)\le(1,1)$.
Using the monotonicity, we have $F^{(2)}(1,1)\le F(1,1)$.
By applying this argument repeatedly, one can argue that $[F^{(t)}(1,1)]_t$ is a decreasing sequence
and bounded below by the fixed point of $F$.
By the monotone convergence theorem and lemma \ref{lem:attractive}, $[F^{(t)}(1,1)]_t$ converges to the fixed point of $F$.
Similarly $[F^{(t)}(0,0)]_t$ converges to the fixed point of $F$.
This completes the proof of Lemma \ref{lem:convergence}.

\subsection{Proof of Lemma \ref{lem:maxprob}}\label{sec:pflem:maxprob}
We first formulate the probability that a phase $(\alpha_L,\alpha_R)$ occurs.
This probability can be formulated as follows:
\begin{align*}
\Pr(\alpha_L,\alpha_R)&\propto\binom{n}{\alpha_L n}\binom{kn}{\alpha_R kn}\left(1-\frac{B}{n\sqrt{k}}\right)^{kn^2(\alpha_L(1-\alpha_R)+\alpha_R(1-\alpha_L))}\\
&\approx\frac{1}{2\pi n\sqrt{\alpha_L(1-\alpha_L)\alpha_R(1-\alpha_R)k}}\alpha_L^{-\alpha_Ln}(1-\alpha_L)^{-(1-\alpha_L)n}\\
&\qquad\quad\times\alpha_R^{-\alpha_Rkn}(1-\alpha_R)^{-(1-\alpha_R)kn} \exp\left({-Bn\sqrt{k}(\alpha_L(1-\alpha_R)+\alpha_R(1-\alpha_L))}\right)\\
&=\frac{1}{2\pi n\sqrt{\alpha_L(1-\alpha_L)\alpha_R(1-\alpha_R)k}}\exp\left({n\sqrt{k}\psi(\alpha_L,\alpha_R)}\right)
\end{align*}
where we use Stirling's formula for the second line and $\psi$ is defined as
\begin{align*}
\psi(\alpha_L,\alpha_R):=&-B(\alpha_L+\alpha_R-2\alpha_L\alpha_R)-\frac{\alpha_L}{\sqrt{k}}\log\alpha_L-\frac{1-\alpha_L}{\sqrt{k}}\log(1-\alpha_L)\\
&\qquad-\sqrt{k}\alpha_R\log\alpha_R-\sqrt{k}(1-\alpha_R)\log(1-\alpha_R).
\end{align*}
Since $\psi(\alpha_L,\alpha_R)$ determines the exponential order of the probability of the phase $(\alpha_L,\alpha_R)$ and the number of possible phases is bounded by a polynomial in $n$,  the maximum a posteriori phase of the Ising model is asymptotically given by  the phase which achieves the maximum value of $\psi$. 

Now we analyze the phase $(\alpha_L,\alpha_R)$ maximizing $\psi$.
By taking partial derivative of $\psi$ with respect to $\alpha_L$ and $\alpha_R$, we have
\begin{align*}
\frac{\partial\psi(\alpha_L,\alpha_R)}{\partial\alpha_L}&=-B(1-2\alpha_R)-\frac{1}{\sqrt{k}}\log\alpha_L+\frac{1}{\sqrt{k}}\log(1-\alpha_L)\\
\frac{\partial\psi(\alpha_L,\alpha_R)}{\partial\alpha_R}&=-B(1-2\alpha_L)-{\sqrt{k}}\log\alpha_R+{\sqrt{k}}\log(1-\alpha_R).
\end{align*}
By simple calculation, one can check that $\frac{\partial\psi(\alpha_L,\alpha_R)}{\partial\alpha_L}=\frac{\partial\psi(\alpha_L,\alpha_R)}{\partial\alpha_R}=0$
if and only if the following relation holds
\begin{equation}\label{eq:probstate}
\exp\left({B\sqrt{k}(1-2\alpha_R)}\right)=\frac{1-\alpha_L}{\alpha_L}\qquad\qquad
\exp\left({\frac{B}{\sqrt{k}}(1-2\alpha_L)}\right)=\frac{1-\alpha_R}{\alpha_R}
\end{equation}
which is equivalent to \eqref{eq:fixedF}.
One can easily check that $\alpha_L=\alpha_R=1/2$ is a solution of \eqref{eq:probstate}.
If $(\alpha_L,\alpha_R)$ is a solution of \eqref{eq:probstate},
then $(1-\alpha_L,1-\alpha_R)$ is a solution of \eqref{eq:probstate}.
Furthermore, LHS and RHS of the first (and the second) equation of \eqref{eq:probstate} are decreasing with respect to $\alpha_R,\alpha_L$ (and $\alpha_L,\alpha_R$) respectively.
Since $(1/2,1/2)$ is a solution of \eqref{eq:probstate}, any solution $(\alpha_L,\alpha_R)$ of \eqref{eq:probstate} satisfies $\alpha_L,\alpha_R\ge 1/2$ or
$\alpha_L,\alpha_R\le 1/2$.
Therefore, we only consider critical points of $\psi$ in $[1/2,1]^2$.
In the proof of Lemma \ref{lem:attractive} we have shown that
\eqref{eq:probstate} has the only solution $(1/2,1/2)$ for $B\le 2$ and
\eqref{eq:probstate} has only two solutions $(1/2,1/2)$, $(\alpha_L^*,\alpha_R^*)$ for $B>2$.
Now we show that $(1/2,1/2)$, $(\alpha_L^*,\alpha_R^*)$ achieve the maximum value of $\psi$ for $B\le 2,B>2$ respectively by showing that the Hessian of $\psi$ is negative semidefinite at $(1/2,1/2)$, $(\alpha_L^*,\alpha_R^*)$ for $B\le 2,B>2$ respectively.
The hessian $H(\psi)$ of $\psi$ is as follows
$$H(\psi)=\begin{pmatrix}
-\dfrac{1}{\alpha_L(1-\alpha_L)\sqrt{k}}&2B\\
2B&-\dfrac{\sqrt{k}}{\alpha_R(1-\alpha_R)}
\end{pmatrix}.$$
By simple calculations, one can check that $H(\psi)$ is negative semidefinite
if and only if
\begin{equation}\label{eq:nsd}
2B\le\sqrt{\frac{1}{\alpha_L(1-\alpha_L)\alpha_R(1-\alpha_R)}}.
\end{equation}
Since \eqref{eq:nsd} holds for any $B\le 2$, $(1/2,1/2)$ maximizes $\psi$.

Now we show that $(\alpha_L^*,\alpha_R^*)$ maximizes $\psi$ for $B>2$.
Consider $H(\psi)$ at $(1/2,1/2)$ and $(\alpha_L^*,\alpha_R^*)$.
$H(\psi)$ is negative semidefinite if and only if \eqref{eq:nsd} holds.
However, $(1/2,1/2)$ does not satisfies \eqref{eq:nsd} and therefore $(1/2,1/2)$ is not a local maximum of $F$.
Let $z_L^*=2\alpha_L^*-1$ and $z_R^*=2\alpha_R^*-1$.
Then \eqref{eq:nsd} at $(\alpha_L^*,\alpha_R^*)$ is equivalent to
\begin{equation}\label{eq:nsd2}
\frac{1}{4}\frac{(1-z_L^{*2})(1-z_R^{*2})}{z_L^*z_R^*}\log\frac{1+z_L^*}{1-z_L^*}\log\frac{1+z_R^*}{1-z_R^*}\le 1
\end{equation}
where we additionally use the fact that $(z_L^*,z_R^*)$ is a solution of \eqref{eq:probstate2}.
Let define $h(x):=\frac{1-x^2}{x}\log\frac{1+x}{1-x}$.
We have $\lim_{x\rightarrow 0^+}h(x)=2$.
The derivative of $h$ is strictly negative as
\begin{equation}\label{eq:nsd3}
    \frac{dh}{dx}(x)=-\frac{1+x^2}{x^2}\log\frac{1+x}{1-x}\frac{2}{x}<-2x\le0
\end{equation}
where we use an inequality $\log\frac{1+x}{1-x}>2x$ for $x>0$.
Since \eqref{eq:nsd3} and $\lim_{x\rightarrow 0^+}h(x)=2$ implies 
\begin{equation}\label{eq:nsd4}
\frac{1}{4}\frac{(1-z_L^{*2})(1-z_R^{*2})}{z_L^*z_R^*}\log\frac{1+z_L^*}{1-z_L^*}\log\frac{1+z_R^*}{1-z_R^*}<1
\end{equation}
and this implies \eqref{eq:nsd2}, $(\alpha_L^*,\alpha_R^*)$ is the only local maximum of $\psi$ on $[1/2,1]^2$ for $B>2$.
Recall that every local maximum point $(\alpha_L,\alpha_R)$ of $\psi$ satisfies that $\alpha_L,\alpha_R\ge 1/2$ or $\alpha_L,\alpha_R\le 1/2$.
This implies that $(\alpha^*_L,\alpha^*_R),(1-\alpha^*_L,1-\alpha^*_R)$ are only local maxima of $\psi$, i.e. $(1-\alpha^*_L,1-\alpha^*_R)$ achieves maximum of $\psi$ in $[0,1/2]\times[0,1]$ and $(\alpha^*_L,\alpha^*_R)$ achieves maximum of $\psi$ in $[1/2,1]\times[0,1]$ for $B>2$.
By Using this, one can conclude that $(\alpha^*_L,\alpha^*_R),(1-\alpha^*_L,1-\alpha^*_R)$ achieve the maximum of $\psi$ for $B>2$.
This completes the proof of Lemma \ref{lem:maxprob}.

\subsection{Proof of Lemma \ref{lem:concentration1}}\label{sec:pflem:concentration1}
In this section, we prove Lemma \ref{lem:concentration1}.

We first prove part \ref{it:wsxb12a}. 
In order to bound the component sizes of $G$, we consider the following branching process 
to explore a connected component of the bipartite Erd\H{o}s-R\'enyi random graph $G=(V_L,V_R,E)\sim G(n,kn,p)$ with $p=\frac{B}{n\sqrt{k}}$.
\begin{itemize}
\item[1.] Set $t=0$. Choose $u_0\in V_L$ and initialize $S_L=S_R=\emptyset$, $W_L={u_0},W_R=\emptyset$.
\item[2.] Set $t\leftarrow t+1$.
Choose $u_i\in W_L$ and choose random neighbors $v_1,\dots v_{r_i}$ of $u_i$ from $V_R-S_R-W_R$ where each neighbor of $u_i$ is chosen with probability $\frac{B}{n\sqrt{k}}$. 
Set $W_R=W_R\cup\{v_1,\dots v_{r_i}\}$, $W_L=W_L-\{u_i\}$ and $S_L=S_L\cup\{u_i\}$.
\item[3.] For each $v_j\in W_R$, choose random neighbors $u_{j1},\dots,u_{js_j}$ of $v_j$ from $V_L-S_L-W_L$ where each neighbor of $v_j$ is chosen with probability $\frac{B}{n\sqrt{k}}$.
Set $W_L=W_L\cup\{u_{j1},\dots,u_{js_j}\}$, $W_R=W_R-\{v_j\}$ and $S_R=S_R\cup\{v_j\}$.
Repeat the step 3 until $W_R=\emptyset$.
\item[4.] Repeat steps 2-3 until $W_L\cup W_R=\emptyset$.
\end{itemize}
For each $t$-th iteration, let define a random variable $K_t:=|W_L|$ at the beginning of the step 4 of the branching process.
Then the stopping time $\arg\min_t (K_t=0)$ decides the number of vertices in $V_L$ in the component of $G(n,kn,p)$ containing $u_0$.
One can observe that $K_t$ is bounded above by the random variable $(\sum_{i=0}^t R_i)-t$ where $R_0=1$
and $R_i\sim\text{Bin}\left(\text{Bin}\left(n,\frac{B}{n\sqrt{k}}\right)kn,\frac{B}{n\sqrt{k}}\right)$. 
Similarly, one can construct the branching process starting from $u_0\in V_R$ and define $K_t^\prime$ as $K_t$. Then $K_t^\prime$ is bounded above by $(\sum_{i=0}^t R^\prime_i)-t$ where $R^\prime_0=1$
, $R_i^\prime\sim\text{Bin}\left(\text{Bin}\left(kn,\frac{B}{n\sqrt{k}}\right)n,\frac{B}{n\sqrt{k}}\right)$.

Let $C(v)$ be the component of $G$ containing $v$.
Observe that 
$$E\bigg[\sum_{i\ge 1}|C_i|^2\bigg]=E\bigg[\sum_{v\in V_L\cup V_R}|C(v)|\bigg]=(1+k)nE\big[|C(v)|\big].$$
To complete the proof, it thus suffices to  show that $E\left[|C(v)|\right]=O(1)$.
Define the following stopping times $\tau,\tau^\prime$:
$$\tau:=\arg\min_t \left(\left(\sum_{i=0}^t R_i\right)-t=0\right)\qquad\tau^\prime:=\arg\min_t \left(\left(\sum_{i=0}^t R_i^\prime\right)-t=0\right).$$
Since $K_t,K_t^\prime$ are bounded above by $(\sum_{i=0}^t R_i)-t$, $(\sum_{i=0}^t R^\prime_i)-t$ respectively, we have
$$E\big[|C(v)|\big]\le E[\tau]+E[\tau^\prime].$$
By applying Wald's lemma, we can conclude that
$$E[\tau],E[\tau^\prime]=O(1)$$
and this completes the proof of part \ref{it:wsxb12a} of Lemma \ref{lem:concentration1}.

Next, we prove part \ref{it:wsxb12b} of Lemma \ref{lem:concentration1}.
We first prove the following.
\begin{claim}\label{clm:giantsub}
For $B>1$, $(1-\theta_L)(1-\theta_R)B^2<1$ where $\theta_L,\theta_R$ are solution of \eqref{eq:theta4}, i.e. the rest part except for the giant component is subcritical.
\end{claim}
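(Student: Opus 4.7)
The plan is to reduce the inequality to an application of Claim~\ref{clm:goodbd}, in exactly the same spirit as the proof of Claim~\ref{lem:smallsub}. The defining relations \eqref{eq:theta4} can be rewritten as $B\sqrt{k}\,\theta_R=-\log(1-\theta_L)$ and $(B/\sqrt{k})\,\theta_L=-\log(1-\theta_R)$, and multiplying these two identities eliminates the $k$ factors and gives the symmetric relation
\[
B^2\,\theta_L\theta_R \;=\; \log(1-\theta_L)\,\log(1-\theta_R).
\]

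Using this identity, I would rewrite the target quantity as
\[
B^2(1-\theta_L)(1-\theta_R)
\;=\;\frac{-(1-\theta_L)\log(1-\theta_L)}{\theta_L}\cdot\frac{-(1-\theta_R)\log(1-\theta_R)}{\theta_R},
\]
which is a product of two strictly positive factors, each exactly of the form appearing in Claim~\ref{clm:goodbd}. Applying Claim~\ref{clm:goodbd} once with $x=\theta_L$ and once with $x=\theta_R$ then gives
\[
B^2(1-\theta_L)(1-\theta_R) \;<\; \sqrt{1-\theta_L}\,\sqrt{1-\theta_R} \;=\; \sqrt{(1-\theta_L)(1-\theta_R)}.
\]

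To finish, I would invoke Lemma~\ref{thm:giant}\,(b), which for $B>1$ supplies a unique solution $(\theta_L,\theta_R)\in(0,1)^2$ of \eqref{eq:theta4}; in particular $(1-\theta_L)(1-\theta_R)\in(0,1)$, so $\sqrt{(1-\theta_L)(1-\theta_R)}<1$. Combined with the previous display, this yields $B^2(1-\theta_L)(1-\theta_R)<1$, which is exactly the claimed subcriticality bound (note that at the critical $B=1$ the solution collapses to $\theta_L=\theta_R=0$ and the inequality degenerates to equality, confirming that strictness really does use $B>1$). The only nontrivial ingredient is Claim~\ref{clm:goodbd}, which is already proved in Section~\ref{sec:pflem:attractive}, so I do not foresee any obstacle; the whole argument is essentially a two-line calculation once the symmetric identity $B^2\theta_L\theta_R=\log(1-\theta_L)\log(1-\theta_R)$ is in hand.
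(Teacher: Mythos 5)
Your proposal is correct and follows essentially the same route as the paper: the paper likewise uses \eqref{eq:theta4} to rewrite $(1-\theta_L)(1-\theta_R)B^2$ as $\frac{(1-\theta_L)(1-\theta_R)}{\theta_L\theta_R}\log(1-\theta_L)\log(1-\theta_R)$ and then applies Claim~\ref{clm:goodbd} to each factor. Your write-up simply spells out the details (the multiplied identity $B^2\theta_L\theta_R=\log(1-\theta_L)\log(1-\theta_R)$ and the use of $\theta_L,\theta_R\in(0,1)$ from Lemma~\ref{thm:giant}) that the paper leaves implicit.
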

\begin{proof}
Using \eqref{eq:theta4}, $(1-\theta_L)(1-\theta_R)B^2$ reduces to
$$(1-\theta_L)(1-\theta_R)B^2=\frac{(1-\theta_L)(1-\theta_R)}{\theta_L\theta_R}\log(1-\theta_L)\log(1-\theta_R).$$
By applying Claim \ref{clm:goodbd} to the RHS of the above identity, we completes the proof of Claim \ref{clm:giantsub}.
\end{proof}
By Claim \ref{clm:giantsub}, we know that the induced subgraph of vertices which are not in the giant component, $C_1$, is subcritical.
Let $\varepsilon>0$ be a small enough constant which satisfies that 
\begin{equation}\label{eq:giantsub}
(1-\theta_L+\varepsilon)(1-\theta_R+\varepsilon)B^2<1.
\end{equation}
By following the proof of Theorem 9 of \cite{johansson2012giant} and applying Azuma's inequality, one can conclude that 
\begin{align*}
&\Pr\big(|C_1\cap V_L|<(\theta_L-\varepsilon)n\big)<e^{-\Omega(n)}\\
&\Pr\big(|C_1\cap V_R|<(\theta_R-\varepsilon)kn\big)<e^{-\Omega(n)}
\end{align*}
for some constant $c$.
Let $\mathcal{E}$ be the event that $|C_1\cap V_L|>(\theta_L-\varepsilon)n,|C_1\cap V_R|>(\theta_R-\varepsilon)kn$. By \eqref{eq:giantsub} and part \ref{it:wsxb12a} of Lemma \ref{lem:concentration1}, we have
$$E\bigg[\sum_{i\ge 2}|C_i|^2\,\Big|\,\mathcal{E}\bigg]=O(n).$$
Since $\Pr(\mathcal{E})=1-e^{-\Omega(n)}$, removing the conditioning on the event $\mathcal{E}$ can only affect the bound by $o(1)$. This yields part  \ref{it:wsxb12b} of Lemma \ref{lem:concentration1}, and thus completes the proof.

\subsection{Proof of Lemma \ref{lem:normal}}\label{sec:pflem:normal}
Call $v\in V_L\cup V_R$ `small' if $v$ is not in the giant component.
For each $v\in V_L\cup V_R$, let $S_{v}$ be the indicator random variable that $v$ is small.
Define $S_L:=\sum_{v\in V_L}S_{v}$ and $S_R:=\sum_{v\in V_R}S_{v}$.
From Lemma \ref{thm:giant}, we know that 
$$\mbox{ for all $v\in V_L$, } \Pr(S_{v}=1)=1-\theta_L.$$
To bound the variance of the giant component, our goal is to bound the variance of $S_L$, $S_R$.
We bound the second moment of $S_L$ as below:
\begin{align*}
E\left[S_L^2\right]&=\sum_{v\in V_L}E\left[S_{v}^2\right]+\sum_{\substack{u\ne v\\u,v\in V_L}}E[S_{u}S_{v}]\\
&=E[S_L]+\sum_{\substack{u\ne v\\u,v\in V_L}}\Pr(\text{$u,v$ are small})\\
&=E[S_L]+\sum_{v\in V_L}\Pr(\text{$v$ is small})\sum_{\substack{u\ne v\\u\in V_L}}\Pr(\text{$u$ is small}\,|\,\text{$v$ is small})
\end{align*}
Note that for each $v\in V_L$, we have that
\begin{align*}
&\sum_{\substack{u\ne v\\u\in V_L}}\Pr(\text{$u$ is small}\,|\,\text{$v$ is small})\\
&=\sum_{\substack{u\ne v:~u\in V_L\\\text{$u,v$ are in same}\\\text{component}}}\Pr(\text{$u$ is small}\,|\,\text{$v$ is small})+\sum_{\substack{u\ne v:~u\in V_L\\\text{$u,v$ are in different}\\\text{components}}}\Pr(\text{$u$ is small}\,|\,\text{$v$ is small}).
\end{align*}
However, we have
\begin{align}\label{eq:samecomp}
\sum_{v\in V_L}\Pr(\text{$v$ is small})\sum_{\substack{u\ne v:~u\in V_L\\\text{$u,v$ are in same}\\\text{component}}}\Pr(\text{$u$ is small}\,|\,\text{$v$ is small})\notag
&\le\sum_{v\in V_L}(|C(v)|-1)\\
&\le\sum_{i\ge 2}|C_i|(|C_i|-1)\\
&\le wKn\notag
\end{align}
where $C(v)$ is a component containing a small vertex $v$.
The last inequality of \eqref{eq:samecomp} follows from the assumption $\sum_{i\ge 2}|C_i|^2< wKn$.
For $u,v$ which are in different components, asymptotically we have
\begin{equation}\label{eq:diffcomp}
\Pr(\text{$u$ is small}\,|\,\text{$v$ is small})=\Pr(\text{$u$ is small})=1-\theta_L
\end{equation}
as $|C(v)|=O(\log^2 n)$ for small vertex $v$ by Lemma \ref{thm:giant}.
Combining \eqref{eq:samecomp} and \eqref{eq:diffcomp} results
\begin{align}
E[S_L^2]&=E[S_L]+\sum_{v\in V_L}\Pr(\text{$v$ is small})\sum_{u\ne v}\Pr(\text{$u$ is small}\,|\,\text{$v$ is small})\notag\\
&\le(1-\theta_L)n+wKn+(1-\theta_L)^2n^2.\label{eq:varbound}
\end{align}
\eqref{eq:varbound} directly leads to
$$\text{Var}(S_L)\le(1-\theta_L+wK)n.$$
Using Chebyshev's inequality, we bound the deviation of $S_L$ from its expectation as
\begin{equation}\label{eq:chebl}
\Pr(|S_L-(1-\theta_L)n|\ge w\sqrt{n})
\le\frac{1-\theta_L+wK}{w^2}.
\end{equation}
One can apply the similar argument for $V_R$ and achieve
\begin{equation}\label{eq:chebr}
\Pr(|S_R-(1-\theta_R)kn|\ge w\sqrt{kn})
\le\frac{(1-\theta_R)k+wK}{kw^2}.
\end{equation}
Combining \eqref{eq:chebl}, \eqref{eq:chebr} results
$$\Pr(\{|S_L-(1-\theta_L)n|\ge w\sqrt{n}\}\cup
\{|S_R-(1-\theta_R)kn|\ge w\sqrt{n}\})
\le\frac{2K}{w}+\frac{1+k}{w^2}.$$
This completes the proof of Lemma \ref{lem:normal}.

\end{document}